\def\eqref#1{equation~\ref{#1}}
\def\1{\bm{1}}
\DeclareMathAlphabet{\mathsfit}{\encodingdefault}{\sfdefault}{m}{sl}
\SetMathAlphabet{\mathsfit}{bold}{\encodingdefault}{\sfdefault}{bx}{n}
\renewcommand{\eqref}[1]{(\ref{#1})}
\setlist[itemize]{label=--}  
\newcommand\mycommfont[1]{\footnotesize\ttfamily\textcolor{blue}{#1}}
\newtheorem{theorem}{Theorem}[section]
\newtheorem{proposition}[theorem]{Proposition}
\newtheorem{remark}[theorem]{Remark}
\newtheorem{lemma}[theorem]{Lemma}
\title{\textbf{Fused Partial Gromov--Wasserstein for Structured Objects}}
\author{
  Yikun Bai\textsuperscript{1} \quad
  Shuang Wang\textsuperscript{2} \quad
  Huy Tran\textsuperscript{1} \quad
  Hengrong Du\textsuperscript{4} \quad
  Juexin Wang\textsuperscript{3} \quad
  Soheil Kolouri\textsuperscript{1} \\
  \\
  \textsuperscript{1}Department of Computer Science, Vanderbilt University \\
  \textsuperscript{2}Department of Computer Science, Indiana University–Indianapolis \\
\textsuperscript{3}Department of BioHealth Informatics, Indiana University–Indianapolis \\
\textsuperscript{4}Department of Mathematics and Computer Science, Fisk University \\
  \texttt{\{yikun.bai, huy.tran, soheil.kolouri\}@vanderbilt.edu} \\
  \texttt{\{sw152,wangjuex\}@iu.edu} \\
  \texttt{\{hdu@fisk.edu\}@fisk.edu}
}
\date{} 
\begin{document}
\maketitle
\begin{abstract}
Structured data, such as graphs, is vital in machine learning due to its capacity to capture complex relationships and interactions. In recent years, the Fused Gromov-Wasserstein (FGW) distance has attracted growing interest because it enables the comparison of structured data by jointly accounting for feature similarity and geometric structure. However, as a variant of optimal transport (OT), classical FGW assumes an equal mass constraint on the compared data. In this work, we relax this mass constraint and propose the Fused Partial Gromov-Wasserstein (FPGW) framework, which extends FGW to accommodate unbalanced data. Theoretically, we establish the relationship between FPGW and FGW and prove the metric properties of FPGW. Numerically, we introduce Frank-Wolfe solvers and Sinkhorn solvers for the proposed FPGW framework.  Finally, we evaluate the FPGW distance through graph matching, graph classification and graph clustering experiments, demonstrating its robust performance. 
\end{abstract}

\section{Introduction}
\begin{wrapfigure}{r}{0.5\textwidth}
\centering
\vspace{-0.2in}
    \includegraphics[width=\linewidth]{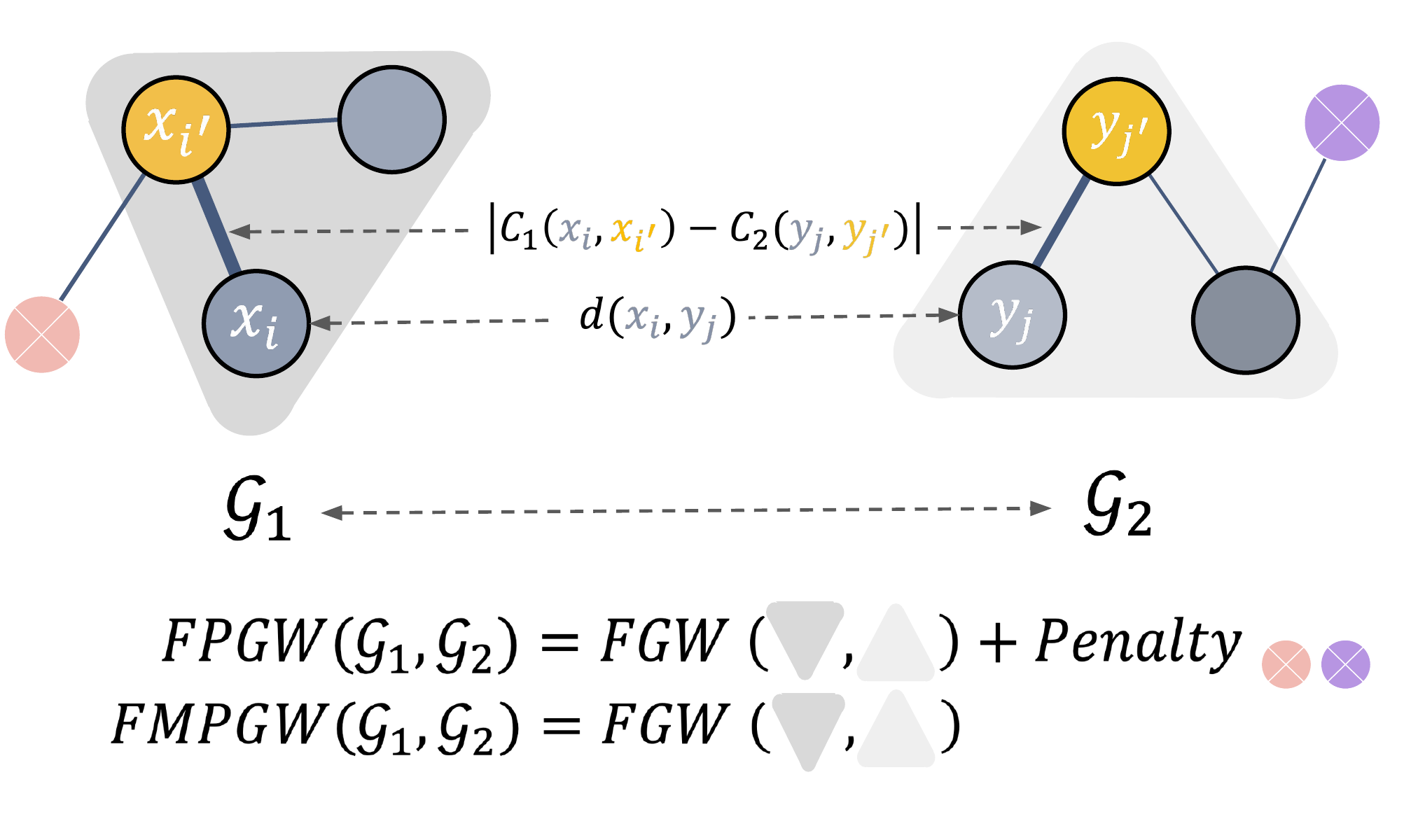}

    \caption{An intuitive understanding of the fused-PGW problems \eqref{eq:fpgw_orig}. The distance between node features is modeled as \( d(\cdot,\cdot) \). The structural information of each graph is represented by their shortest path distances, \( C_1(\cdot,\cdot) \) and \( C_2(\cdot,\cdot) \).}
    \label{fig:fig1}
\end{wrapfigure}
Analyzing structured data, which combines feature-based and relational information, is a longstanding challenge in machine learning, data science, and statistics. One classical type of structure-based data is the graph, where the nodes with attributes can model the data feature while the edges can describe the structure. Examples of such data structures are abundant, including molecular graphs for drug discovery \cite{ruddigkeit2012enumeration}, functional and structural brain networks \cite{bassett2017network}, and social network graphs \cite{hamilton2017inductive}. Beyond graphs, structured data encompasses a wide variety of domains, such as sequences \cite{graves2006connectionist}, hierarchical structures such as trees \cite{bille2008integrated}, and even pixel-based data, such as images \cite{wang2004integration}.  

In recent years, the Optimal Transport (OT) distance \cite{Villani2009Optimal} and its extensions, including unbalanced Optimal Transport \cite{chizat2018unbalanced,figalli2010optimal}, linear Optimal Transport \cite{wang2013linear,cai2022linearized,bai2023linear,martin2023lcot}, sliced optimal transport \cite{kolouri2019generalized,bonneel2015sliced,bai2022sliced}, and expected optimal transport \cite{rowland2019orthogonal} have been widely used in machine learning tasks due to their capacity to measure the similarity between datasets. Based on the classical OT,  Gromov-Wasserstein problem and its unbalanced extension \cite{memoli2011gromov,memoli2009spectral,sejourne2021unbalanced,chapel2020partial,bai2023partial,kong2024outlier} has been proposed, which can capture the inherent structure of the data. 

Classical OT can incorporate the features of data to measure the similarity and
Gromov-Wasserstein formulations capture the structure information. Inspired by
these works, fused Gromov-Wasserstein \cite{feydy2017optimal,vayer2020fused},
which can be treated as a ``linear combination'' of classical OT and
Gromov-Wasserstein has been proposed in recent years to analyze the structured
feature data. Despite its successful applications in graph data
analysis, similar to classical OT, fused GW formulation requires equal mass
constraint. To address this issue, in recent years, Fused Unbalanced Gromov
Wasserstein (FUGW) \cite{thual2022aligning,halmos2025dest} and related Sinkhorn solvers
have been proposed and applied in brain image analysis. However, FUGW
relies on the Sinkhorn solver, and its metric property is still unclear. To
address this limitation and complete the theoretical gap, in this paper, we
introduce the fused-partial Gromov-Wasserstein formulations: 

\begin{itemize}
    \item We introduce the fused partial Gromov Wasserstein formulations \eqref{eq:fpgw}, which allow the comparison of structured objects with unequal total mass. Theoretically, we demonstrate that the FPGW admits a (semi-)metric. 
    \item We propose the related Frank-Wolfe algorithms and Sinkhorn Algorithms to solve the FPGW problem. In addition, we present the FPGW barycenter and related computational solvers. 
    \item We applied FPGW in graph matching, clustering and classification experiments and demonstrated that the FPGW-based methods admit more robust performance. 
 \end{itemize}

\section{Background: Gromov-Wasserstein (GW) Problems}\label{subsec: GW}
\vspace{-0.5em}
Note, graph-structured data can be measured as a metric measure space (mm-space) consisting of a set $X$ endowed with a metric structure, that is, a notion of distance $d_X$ between its elements, and equipped with a Borel measure $\mu$.  
As in \cite[Ch. 5]{memoli2011gromov}, we will assume that $X$ is compact and that  $\operatorname{supp}(\mu)=X$.
Given two probability mm-spaces $\mathbb{X}=(X,d_X,\mu)$, $\mathbb{Y}=(Y,d_Y,\nu)$, with $\mu\in \mathcal{P}(X)$ and $\nu\in\mathcal{P}(Y)$, and a non-negative lower semi-continuous cost function  $L: \mathbb{R}^2\to \mathbb{R}_+$  (e.g., the Euclidean distance or the KL-loss),  the Gromov-Wasserstein (GW) matching problem is defined as:
\begin{equation}
GW_{r,L}(\mathbb{X},\mathbb{Y}):=\inf_{\gamma\in \Gamma(\mu,\nu)}\gamma^{\otimes 2}(L(d_X^r(\cdot,\cdot),d_Y^r(\cdot,\cdot))), 
\label{eq:gw}
\end{equation}
where $r\ge 1$ and 
\begin{align}
\Gamma(\mu,\nu):=\{\gamma\in\mathcal{P}(X\times Y):\gamma_1=\mu,\gamma_2=\nu\}\label{eq:Gamma_=}.
\end{align}
For brevity, we employ the notation $\gamma^{\otimes 2}$ for the product measure $d\gamma^{\otimes 2}((x,y),(x',y'))=d\gamma(x,y)d\gamma(x',y')$. 
If 
$L(a,b)=|a-b|^q$, 
for $1\leq q< \infty$, 
we denote $GW_{r,L}(\cdot,\cdot)$ by $d_{GW,r,q}^q(\cdot,\cdot)$.
In this case, the expression \eqref{eq:gw} defines an equivalence relation $\sim$ among probability mm-spaces, i.e.,  $\mathbb{X}\sim\mathbb{Y}$ if and only if  $d_{GW,r,q}(\mathbb{X},\mathbb{Y})=0$\footnote{Moreover, given two probability mm-spaces $\mathbb{X}$ and $\mathbb{Y}$, $d_{GW,r,q}(\mathbb{X},\mathbb{Y})=0$ if and only if there exists a bijective isometry $\phi:X\to Y$ such that $\phi_\#\mu=\nu$. 
In particular, the GW distance is invariant under rigid transformations (translations and rotations) of a given probability mm-space.}. A minimizer of the GW problem \eqref{eq:gw} always exists, and thus, we can replace $\inf$ by $\min$. Moreover,  similar to OT, the above GW problem defines a distance for probability mm-spaces after taking the quotient under $\sim$. For details, we refer to \cite[Ch. 5 and 10]{memoli2011gromov,bai2024efficient}.

Classical GW requires an equal mass assumption, i.e., $|\mu|=|\nu|$, which limits its application in many machine learning tasks, e.g., positive unsupervised learning \cite{chapel2020partial,sejourne2023unbalanced}. To address this issue, in recent years, the above formulation has been extended to the unbalanced setting \cite{chapel2020partial,sejourne2023unbalanced,bai2024efficient,bai2023linear}. 
In particular, two equivalent extensions of the Gromov-Wasserstein problem, named \textbf{Partial Gromov-Wasserstein problem} and \textbf{Mass-constrained Partial Gromov-Wasserstein problem} have been proposed: 
\begin{align}
&PGW_{r,L}(\mathbb{X},\mathbb{Y})=\inf_{\gamma\in \Gamma_\leq(\mu,\nu)}\gamma^{\otimes2}(L(d_X^r,d_Y^r))+\lambda(|\mu^{\otimes2}-\gamma_1^{\otimes2}|+|\nu^{\otimes2}-\gamma_2^{\otimes2}|)\label{eq:PGW}\\
&MPGW_{r,L}(\mathbb{X},\mathbb{Y})=\inf_{\gamma\in \Gamma_\leq^\rho(\mu,\nu)}\gamma^{\otimes2}(L(d_X^r,d_Y^r))\label{eq:MPGW}.
\end{align}
where 
\begin{align}
&\Gamma_\leq(\mu,\nu):=\{\gamma\in\mathcal{M}_+(X\times Y): \gamma_1\leq \mu,\gamma_2\leq \nu\}   \label{eq:Gamma_leq},\\
&\Gamma_\leq^\rho (\mu,\nu):=\{\gamma\in\Gamma_\leq(\mu,\nu):|\gamma|=\rho\}, \rho\in[0,\min(|\mu|,|\nu|)], \label{eq:Gamma_leq^rho}
\end{align}
and the notation $\gamma_1\leq \mu$ denotes that for each Borel set $B\subset X$, $\gamma_1(B)\leq \mu(B)$. 

Note that both GW and its unbalanced extension can measure the similarity of structure data by utilizing their in-structure distance $d_X$ and $d_Y$. The above formulation can naturally measure similarity between graph data $(X,E_X), (Y,E_Y)$, since we can define $d_X$ (and $d_Y$) by the (weights of) edges $E_X$ (and $E_Y$). 
However, suppose that nodes $X, Y$ contain features (e.g., attributed graphs), and thus we can define the distance between $(x,y)$ where for each pair of nodes $x\in X,y\in Y$,  classical GW/PGW can not incorporate this information. To address this limitation, \textbf{Fused Gromov-Wasserstein distance} has been proposed.

\subsection{Fused Gromov-Wasserstein problem}
Given two $mm-$spaces $\mathbb{X},\mathbb{Y},\omega_1,\omega_2\ge 0$ with $\omega_1+\omega_2=1$, a cost function $C:X\times Y\to \mathbb{R}_+$, 
the fused Gromov-Wasserstein problem is defined as: 
{\small
\begin{align}
FGW_{r,L}(\mathbb{X},\mathbb{Y}):=\inf_{\gamma\in\Gamma(\mu,\nu)}\omega_1\gamma(C)+\omega_2 \gamma^{\otimes2}(L(d_X^r,d_Y^r))\label{eq:fgw}. 
\end{align}}
Similar to the original GW problem, the above problem admits a minimizer. In addition, when $C(x,y)=\|x-y\|^q$, and $L(\cdot_1,\cdot_2)=|\cdot_1-\cdot_2|^q$, it defines a semi-metric \cite{titouan2019optimal}. 

\subsection{Fused Unbalanced Gromov-Wasserstein problem.}
The above formulation relies on the equal mass assumption, i.e., $|\mu|=|\nu|$. By relaxing this constraint, the authors in \cite{thual2022aligning} have proposed the following fused-UGW problem: 
{\footnotesize
\begin{align}
&FUGW_{r,L,\lambda}(\mathbb{X},\mathbb{Y}):=\inf_{\gamma\in\mathcal{M}_+(X\times Y)} \omega_1\gamma( C)+\omega_2\gamma^{\otimes2}(L(d_X^r,d_Y^r))+\lambda(D_{\phi_1}(\gamma_1^{\otimes2}\parallel \mu^{\otimes 2})+D_{\phi_2}(\gamma_2^{\otimes2}\parallel \nu^{\otimes2})) \label{eq:fugw},  
\end{align}}
where $\mathcal{M}_+(X\times Y)$ denotes the set of all positive Radon measures defined on $X\times Y$, $D_{\phi_i},i\in[1:2]$ are the f-divergence terms. 
In \cite{thual2022aligning}, the authors set 
$D_{\phi_i},i\in[1:2]$ as KL divergence. They adapt entropic regularization and propose the related Sinkhorn solver.

\section{Fused Partial Gromov-Wasserstein problem}
Inspired by these previous works, by setting the f-divergence terms $D_{\phi_i}$ to be the Total variation, we propose the following ``fused partial Gromov Wasserstein problem'', and the corresponding mass-constrained version: 
{\footnotesize
\begin{align}
&FPGW_{r,L,\lambda}(\mathbb{X},\mathbb{Y})=\inf_{\gamma\in\mathcal{M}_+(X\times Y)} \omega_1  \gamma(C)+\omega_2 \gamma^{\otimes2}(L(d_X^r,d_Y^r))+\lambda(|\mu^{\otimes 2}-\gamma_1^{\otimes2}|_{TV}+|\nu^{\otimes2}-\gamma_2^{\otimes2}|_{TV}),
\label{eq:fpgw_orig}\\
&FMPGW_{r,L,\rho}(\mathbb{X},\mathbb{Y})=\inf_{\gamma\in\Gamma^\rho_\leq(\mu,\nu)} \omega_1  \gamma(C)+\omega_2 \gamma^{\otimes2}(L(d_X^r,d_Y^r))
\label{eq:fmpgw}.
\end{align}
}
where $\lambda\ge 0$.\footnote{
The discrete version of formulation \ref{eq:fmpgw} has been discussed in \cite{liu2023partial} and one of its special case  \eqref{eq:fmpgw} has been introduced by \cite{pan2024subgraph}. 

However, to the best of our knowledge, its fundamental properties and computational methods are not formally discussed in the previous works. We present this formulation as a byproduct of the main contribution of this paper and serve it as a completion of the previous work.}

\begin{theorem}\label{thm:main} We have the following: 
\begin{enumerate}
\item [(1)] When $C(x,y)\ge 0,\forall x,y, L(\mathrm{r}_1,\mathrm{r}_2)\ge 0,\forall \mathrm{r}_1,\mathrm{r}_2\in \mathbb{R}$, the problem \eqref{eq:fpgw_orig} can be further simplified as 
{\small
\begin{align}
&FPGW_{r,L,\lambda}(\mathbb{X},\mathbb{Y})=\inf_{\gamma\in\Gamma_\leq(\mu,\nu)} \omega_1\gamma(C)+\omega_2\gamma^{\otimes2}(L(d_X^r,d_Y^r)-2\lambda)+\lambda(|\mu|^2+|\nu|^2)\label{eq:fpgw}
\end{align}
}
\item[(2)] The problems \eqref{eq:fpgw_orig}, \eqref{eq:fpgw} and \eqref{eq:fmpgw} admit minimizer $\gamma$.
\item[(3)] When $C(x,y)=|x-y|^q,L(\cdot_1,\cdot_2)=|\cdot_1-\cdot_2|^q$, $\omega_2,\lambda>0$, the above formulation  \eqref{eq:fpgw} admits a semi-metric. Furthermore, when $q=1$, \eqref{eq:fpgw} defines a metric. 
\end{enumerate}
\end{theorem}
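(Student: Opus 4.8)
The plan is to handle (1) and (2) quickly and concentrate on (3). For (1), I would first show that restricting the infimum from $\mathcal{M}_+(X\times Y)$ to $\Gamma_\leq(\mu,\nu)$ costs nothing: since $C,L\geq 0$, any mass of $\gamma_1$ exceeding $\mu$ (or of $\gamma_2$ exceeding $\nu$) only inflates the two nonnegative integral terms while not decreasing the total-variation penalty, so truncating $\gamma$ down to $\Gamma_\leq(\mu,\nu)$ can only lower the objective. Once $\gamma_1\leq\mu$ and $\gamma_2\leq\nu$, one has $\gamma_1^{\otimes 2}\leq\mu^{\otimes 2}$ and $\gamma_2^{\otimes2}\leq\nu^{\otimes2}$, whence $|\mu^{\otimes2}-\gamma_1^{\otimes2}|_{TV}=|\mu|^2-|\gamma|^2$ and $|\nu^{\otimes2}-\gamma_2^{\otimes2}|_{TV}=|\nu|^2-|\gamma|^2$ (using $|\gamma_1|=|\gamma_2|=|\gamma|$); substituting and writing $|\gamma|^2=\gamma^{\otimes2}(\1)$ gathers the mass terms into the constant $\lambda(|\mu|^2+|\nu|^2)$ and the $-2\lambda$ inside the quadratic cost, giving \eqref{eq:fpgw}. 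For (2) I would use the direct method on the form \eqref{eq:fpgw} (and on \eqref{eq:fmpgw}): the feasible sets $\Gamma_\leq(\mu,\nu)$ and $\Gamma_\leq^\rho(\mu,\nu)$ are weakly-$*$ compact (bounded mass on a compact space, with the marginal and mass constraints weakly closed), the map $\gamma\mapsto\omega_1\gamma(C)+\omega_2\gamma^{\otimes2}(L(d_X^r,d_Y^r))$ is weakly-$*$ lower semicontinuous because $C,L$ are nonnegative lower semicontinuous and $\gamma_n\rightharpoonup\gamma$ forces $\gamma_n^{\otimes2}\rightharpoonup\gamma^{\otimes2}$, and the remaining term $-2\lambda|\gamma|^2$ is weakly-$*$ continuous on the compact space; hence a minimizer exists, and it transfers to \eqref{eq:fpgw_orig} through part (1).

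For (3), symmetry is immediate from $C(x,y)=|x-y|^q=C(y,x)$ and the symmetry of $L$ and of the two penalty terms, and nonnegativity and finiteness are clear from \eqref{eq:fpgw_orig}. For the identity of indiscernibles I would use $\lambda>0$ decisively: the diagonal plan $(\mathrm{id}\times\mathrm{id})_\#\mu$ shows $FPGW(\mathbb{X},\mathbb{X})=0$, while conversely $FPGW(\mathbb{X},\mathbb{Y})=0$ forces each nonnegative term in \eqref{eq:fpgw_orig} to vanish; the penalty vanishing gives $\gamma_1^{\otimes2}=\mu^{\otimes2}$ and $\gamma_2^{\otimes2}=\nu^{\otimes2}$, hence (taking marginals and matching total mass) $\gamma_1=\mu$, $\gamma_2=\nu$, so in particular $|\mu|=|\nu|$, and then $\gamma(C)=0$ together with $\gamma^{\otimes2}(L)=0$ exhibits an isomorphism of structured mm-spaces preserving $r$-distances a.e.\ (and features a.e.\ when $\omega_1>0$), i.e.\ $\mathbb{X}\sim\mathbb{Y}$, as in the GW identity theorem. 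These three properties already establish the semi-metric claim for every $q\geq1$.

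The substance is the triangle inequality for $q=1$. Given $\mathbb{X}_1,\mathbb{X}_2,\mathbb{X}_3$, let $\gamma_{12},\gamma_{23}$ optimize $FPGW(\mathbb{X}_1,\mathbb{X}_2)$ and $FPGW(\mathbb{X}_2,\mathbb{X}_3)$ in the form \eqref{eq:fpgw_orig}. Writing their $X_2$-marginals as $a\,d\mu_2$ and $b\,d\mu_2$ with $a,b\in[0,1]$, I would disintegrate both over $X_2$ and glue them along the common density $\min(a,b)$, producing $\pi\in\mathcal{M}_+(X_1\times X_2\times X_3)$ whose $(X_1,X_2)$- and $(X_2,X_3)$-marginals are dominated by $\gamma_{12}$ and $\gamma_{23}$; set $\gamma_{13}:=(\mathrm{proj}_{13})_\#\pi\in\Gamma_\leq(\mu_1,\mu_3)$. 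The feature term obeys $\gamma_{13}(C)\leq\gamma_{12}(C)+\gamma_{23}(C)$ by integrating the pointwise bound $|x_1-x_3|\leq|x_1-x_2|+|x_2-x_3|$ against $\pi$ and using marginal domination, and the structural term obeys $\gamma_{13}^{\otimes2}(L)\leq\gamma_{12}^{\otimes2}(L)+\gamma_{23}^{\otimes2}(L)$ by the same argument applied to $|d_{X_1}^r-d_{X_3}^r|\leq|d_{X_1}^r-d_{X_2}^r|+|d_{X_2}^r-d_{X_3}^r|$ on $\pi^{\otimes2}$.

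The delicate point, and what I expect to be the main obstacle, is the mass penalty. With $m_{12}=|\gamma_{12}|$, $m_{23}=|\gamma_{23}|$ and $m_{13}=|\gamma_{13}|=\int\min(a,b)\,d\mu_2$, the required inequality between penalties reduces, after cancellation, to $m_{12}^2+m_{23}^2-m_{13}^2\leq|\mu_2|^2$; this is exactly where the gluing density is forced on us. Because $\max(a,b)\leq1$ we have $m_{12}+m_{23}-m_{13}=\int\max(a,b)\,d\mu_2\leq|\mu_2|$, so $m_{13}\geq\max(0,\,m_{12}+m_{23}-|\mu_2|)$, and feeding this in collapses the target to $(|\mu_2|-m_{12})(|\mu_2|-m_{23})\geq0$ (the subcase $m_{12}+m_{23}\leq|\mu_2|$ being immediate from $m_{12}^2+m_{23}^2\leq(m_{12}+m_{23})^2$), which holds since $m_{12},m_{23}\leq|\mu_2|$. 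Combining the three bounds yields $FPGW(\mathbb{X}_1,\mathbb{X}_3)\leq FPGW(\mathbb{X}_1,\mathbb{X}_2)+FPGW(\mathbb{X}_2,\mathbb{X}_3)$, the metric property at $q=1$; for $q>1$ the pointwise triangle inequalities survive only up to a factor $2^{q-1}$, which is precisely why \eqref{eq:fpgw} is merely a semi-metric there. The remaining care is purely measure-theoretic (existence of the disintegrations and measurability of $\min(a,b)$ on the compact space $X_2$), dispatched by the standard disintegration theorem.
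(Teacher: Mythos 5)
Your proposal is correct, and parts (1) and (2) follow essentially the same route as the paper: (1) via the same ``truncation can only decrease the objective'' argument (the paper realizes the truncation explicitly as $\gamma'=\gamma_{Y|X}(\cdot\mid X)\cdot(\gamma_1\wedge\mu)$), and (2) via the direct method on the weakly compact sets $\Gamma_\leq(\mu,\nu)$, $\Gamma_\leq^\rho(\mu,\nu)$. Where you genuinely diverge is the triangle inequality in (3). The paper does \emph{not} glue the partial plans directly: it augments $S,X,Y$ with auxiliary points $\hat\infty_0,\hat\infty_1,\hat\infty_2$, extends the measures and costs (with the structural cost set to $\lambda/\omega_2$ against an ``infinite'' distance), proves that $d_{FPGW}$ on the original spaces equals a balanced fused-GW functional $\hat d_{FGW}$ on the augmented spaces (Proposition \ref{pro:fpgw-fgw}), and only then invokes the gluing lemma for \emph{balanced} couplings $\hat\gamma^{01},\hat\gamma^{02}$ — so the entire mass-creation/destruction bookkeeping is absorbed into the auxiliary points. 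You instead disintegrate the two partial plans over $X_2$, glue along the density $\min(a,b)$, control the feature and structure terms by marginal domination plus the pointwise triangle inequality (which is exactly where the $2^{q-1}$ factor enters for $q>1$, consistent with the paper's relaxed inequality \eqref{eq:triangle_ineq}), and then dispose of the mass penalty by the explicit computation $m_{12}^2+m_{23}^2-m_{13}^2\le|\mu_2|^2$, reduced via $m_{13}\ge\max(0,m_{12}+m_{23}-|\mu_2|)$ to $(|\mu_2|-m_{12})(|\mu_2|-m_{23})\ge0$; I checked this algebra and it is right. Your route is more elementary and isolates the one genuinely new difficulty of the partial setting in a two-line inequality, at the cost of some measure-theoretic care in the disintegration; the paper's route is heavier on bookkeeping but yields as a byproduct the structurally interesting identity $d_{FPGW,r,q,\lambda}(\mathbb{X},\mathbb{Y})=\hat d_{FGW,r,q,\lambda}(\hat{\mathbb{X}},\hat{\mathbb{Y}})$ and reuses the machinery already developed for (unfused) PGW. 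Both arguments for the identity of indiscernibles coincide in substance (use $\lambda>0$ to force $|\gamma^*|=|\mu|=|\nu|$, then reduce to the balanced FGW identity theorem).
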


\subsection{Algorithms for Discrete FPGW}
In discrete case, say $\mu=\sum_{i=1}^np_i\delta_{x_i},\nu=\sum_{i=1}^mq_j\delta_{y_j}$. Let $C^X=[d_X(x_i,x_{i'})]_{i,i'}\in \mathbb{R}^{n\times n},C^Y=[d_Y(y_j,y_{j'})]_{j,j'}$. Thus, $(C^X,\mu),(C^Y,\nu)$ can represent the mm-spaces $\mathbb{X}=(X,d_X,\mu),\mathbb{Y}=(Y,d_Y,\nu)$, respectively. We only discuss the Frank Wolfe solver and Sinkhorn solver for FPGW \ref{eq:fpgw} in the main text. The solvers for FMPGW \ref{eq:fmpgw} are discussed in the appendix.

\subsection{Frank-Wolfe Algorithm}
The above FPGW problem \eqref{eq:fpgw} becomes the following: 
\begin{align}
FPGW_\rho(\mathbb{X},\mathbb{Y})=&\min_{\gamma\in\gamma_\leq^\rho(p,q)}\underbrace{\omega_1\langle C,\gamma \rangle+\omega_2\langle (M-2\lambda)\circ \gamma,\gamma}_{\mathcal{L}_{C,M-2\lambda}} \rangle +\underbrace{\lambda(|\mu|^2+|\nu|^2)}_{\text{constant}} \rangle \label{eq:fmpgw_empirical_1} 
\end{align} 
where $C=[d(x_i,y_j)]_{i\in[1:n],j\in[1:m]}, M=[|d_X(x_i,x_{i'})-d_Y(y_j,y_{j'})|^2]_{i,i'\in[1:n],j,j'\in[1:m]}$ are defined in the previous subsection, $M-2\lambda$ denote the elementwise subtraction, and the constant term will be ignored in the remainder of the paper. 

Similarly to the Fused Gromov-Wasserstein problem, we propose the following Frank-Wolfe algorithm as a solver: The above problem will be solved iteratively. In every iteration, say $k$, we will adapt the following steps: 

\textbf{Step 1. Gradient computation.}

Suppose $\gamma^{(k-1)}$ is the transportation plan in the previous iteration; it is straightforward to verify: 
$$\nabla\mathcal{L}_{C,M-2\lambda}(\gamma)=\omega_1 C+\omega_2(M+M^\top-4\lambda)\circ\gamma.$$
Next, we aim to find the optimal  $\gamma\in\Gamma_\leq(\mu,\nu)$ for the following partial OT problem: 
\begin{align}
\gamma^{(k)}{'}:=\arg\min_{\gamma\in\Gamma_\leq(\mu,\nu)} \langle \nabla\mathcal{L}_{C,M-2\lambda}(\gamma^{(k-1)}),\gamma^{k}\rangle\label{eq:fmpgw_step1_0}. 
\end{align}

\textbf{Step 2. linear search algorithm.}
In this step, we aim to find the optimal step size $\alpha^*\in[0,1]$. In particular,  
\begin{align}
\alpha^*:=\arg\min_{\alpha\in[0,1]}\mathcal{L}_{C,M-2\lambda}((1-\alpha)\gamma^{(k-1)}+\alpha\gamma^{(k)}{'})\nonumber. 
\end{align}

and $\alpha^*$ is given by  the following: 
\begin{align}
\alpha^*=\begin{cases}
    1 &\text{if } a\leq 0, a+b\leq 0, \\
    0 &\text{if }a\leq 0, a+b>0\\
    \text{clip}(\frac{-b}{2a},[0,1]), &\text{if }a>0 
\end{cases}, \begin{cases}
a&=\omega_2\langle (M-2\lambda)\circ \delta\gamma,\delta\gamma\rangle\\
b&=\langle \omega_2(M+M^\top-4\lambda)\circ \gamma^{(k-1)}+\omega_2C,\delta\gamma\rangle \\
\delta \gamma&=\gamma^{(k)'}-\gamma^{(k-1)} 
\end{cases}\label{eq:line_search_sol_1}
\end{align}

\begin{algorithm}[bt]
   \caption{Frank-Wolfe Algorithm for FPGW}
   \label{alg:fpgw}
\begin{algorithmic}
\STATE 
  {\bfseries Input:} $C\in \mathbb{R}^{n\times m}, C^X\in \mathbb{R}^{n\times n},C^Y\in \mathbb{R}^{m\times m}, p\in \mathbb{R}^n_+, q\in\mathbb{R}^m_+$, $\omega_2\in[0,1],\lambda\ge 0$.
   \STATE {\bfseries Output:}
$\gamma^{(final)}$

   \FOR{$k=1,2,\ldots$}
   \STATE 
$G^{(k)}\gets \omega_1C+\omega_2 (M+M^\top-4\lambda)\circ \gamma^{(k)}$ // Compute gradient 
   \STATE $\gamma^{(k)}{'}\gets \arg\min_{\gamma\in \Gamma_\leq(\mathrm{p},\mathrm{q})}\langle G^{(k)}, \gamma\rangle_F$ // Solve the POT problem.  \\ 
   \STATE Compute $\alpha^{(k)}\in[0,1]$ via  \eqref{eq:line_search_sol_1} //  Line Search
   \STATE $\gamma^{(k+1)}\gets (1-\alpha^{(k)})\gamma^{(k)}+\alpha^{(k)} \gamma^{(k)'}$//  Update $\gamma$  
   \STATE if convergence, break
   \ENDFOR
\STATE $\gamma^{(final)}\gets \gamma^{(k)}$
\end{algorithmic}
\end{algorithm}

In algorithm  \eqref{alg:fpgw}, the computational complexity can be written as $\mathcal{O}(\mathrm{C}\cdot \mathrm{L})$, where $\mathrm{C}$ is the complexity of each iteration and $\mathcal{L}$ is number of iterations that the algorithms converge. 

When a linear programming solver \cite{bonneel2011displacement} for partial OT is adopted, $\mathcal{C}=(n+m)nm$. If we adapt Sinknorn algorithm \cite{cuturi2014fast}, $\mathcal{C}=\mathcal{O}(\frac{1}{\epsilon}\ln(n+m)nm)$. The number of iterations $\mathcal{L}$ refers to the convergence analysis of the FW algorithm. We refer to the Appendix \eqref{sec: convergence} for details. 

\subsection{Sinkhorn Algorithm}
Another popular solver for the Gromov Wasserstein problem is the Sinkhorn algorithm \cite{sejourne2023unbalanced}. In Fused-PGW setting, the problem is defined as: 

\begin{align}
EFPGW (\mathbb{X},\mathbb{Y})&:=\min_{\gamma\in \Gamma_\leq(\mu,\nu)}\mathcal{L}(\gamma)+\epsilon \overline{D}_{KL}(\gamma^{\otimes 2}\parallel (\mu\otimes \nu)^{\otimes2})\label{eq:entropy-fpgw-main}\\
&\overline{D}_{KL}(A\parallel B)=\int \frac{dA}{dB}dA ,\text{ for any postivie radon measures }A,B \nonumber \\
&\mathcal{L}(\gamma):=\mathcal{L}_{C,M-2\lambda}=\omega_1\langle c,\gamma \rangle+\omega_2 \langle L(d_X^r,d_Y^r), \gamma^{\otimes 2} \rangle+\lambda(|\mu|^2+|\nu|^2-2|\gamma|^2). \nonumber  
\end{align}

Problem \eqref{eq:entropy-fpgw-main} can be further relaxed as:

\begin{align}
&\min_{\gamma,\pi\in\Gamma_\leq(\mu,\nu)}\mathcal{F}(\gamma,\pi)+\epsilon \overline{D}_{KL}(\pi\otimes \gamma \parallel (\mu\otimes \nu)^{\otimes2})\label{eq:entropy-fpgw-2-main}\\
&\mathcal{F}(\mu,\nu):=\omega_1 \langle d(x,y),\frac{\gamma+\pi}{2}\rangle+\omega_2\langle L(d_X^r,d_Y^r),\gamma\otimes \pi\rangle+\lambda(|\mu|^2+|\nu|^2-2|\gamma||\pi|)  \nonumber 
\end{align}
It is clear $\mathcal{F}(\gamma,\gamma)=\mathcal{F}(\gamma)$. Thus, $\eqref{eq:entropy-fpgw-2-main}\leq \eqref{eq:entropy-fpgw-main}$, and we denote \eqref{eq:entropy-fpgw-2-main} as $LB-FPGW_\lambda(\mathbb{X},\mathbb{Y})$ (lower bound of Fused Partial Gromov Wasserstein). Essentially, the Sinkhorn algorithm aims to solve $LB-FPGW$. 

We first introduce the following fundamental proposition. Note, a similar version can be found in \cite[Proposition 4]{sejourne2021unbalanced}:  
\begin{proposition}\label{pro:sinkhorn-fmpgw}
Given a fixed $\pi\in \Gamma_\leq(\mu,\nu)$, considering the problem: 
$$\min_{\gamma\in\mathcal{M}_+(X\times Y)}\mathcal{F}(\pi,\gamma)+\epsilon \overline{D}_{KL}(\pi\otimes \gamma\parallel (\mu\otimes \nu)^{\otimes 2}),$$
it is equivalent to solve the following entropic optimal partial transport problem: 
\begin{align}
\min_{\gamma\in\Gamma_\leq(\mu,\nu)}\int_{X\times Y} c_{\pi}(x,y) d\gamma+\lambda |\pi| (|\mu|+|\nu|-2|\gamma|)+\epsilon |\pi|\overline{D}_{KL}(\gamma\parallel \mu\otimes \nu), \label{eq:entropic_pot}  
\end{align}
where
\begin{align}
c_\pi(x,y)&=\frac{1}{2}\omega_1d(x,y)+\omega_2 [L(d_X^r,d_Y^r)\circ \pi] (x,y)+\epsilon \overline{D}_{KL}(\pi\parallel \mu\otimes 
\nu).\nonumber
\end{align} 
\end{proposition}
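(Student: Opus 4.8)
The plan is to fix $\pi\in\Gamma_\leq(\mu,\nu)$ and regard the inner objective purely as a functional of $\gamma$, then peel off every term that does not depend on $\gamma$ and collect the remaining ones into the claimed entropic partial-OT form. First I would expand $\mathcal{F}(\pi,\gamma)$. The feature term $\omega_1\langle d,(\gamma+\pi)/2\rangle$ splits into the $\gamma$-independent piece $\tfrac{\omega_1}{2}\langle d,\pi\rangle$ and the linear piece $\tfrac{\omega_1}{2}\langle d,\gamma\rangle=\int \tfrac12\omega_1 d\, d\gamma$. The structural term is linear in $\gamma$ as well: by Fubini, $\omega_2\langle L(d_X^r,d_Y^r),\pi\otimes\gamma\rangle=\int \omega_2\,[L(d_X^r,d_Y^r)\circ\pi]\, d\gamma$, where $[L\circ\pi](x',y'):=\int L(d_X^r(x,x'),d_Y^r(y,y'))\,d\pi(x,y)$ is exactly the partial cost appearing in $c_\pi$. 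Finally the mass term $\lambda(|\mu|^2+|\nu|^2-2|\pi||\gamma|)$ contributes the constant $\lambda(|\mu|^2+|\nu|^2)$ and the mass-linear term $-2\lambda|\pi||\gamma|$.

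The crux is the tensorization of the divergence $\overline{D}_{KL}$ across the product $\pi\otimes\gamma$ against $(\mu\otimes\nu)^{\otimes2}$. Since the Radon--Nikodym density of a product measure factorizes, one obtains the identity $\overline{D}_{KL}(\pi\otimes\gamma\parallel(\mu\otimes\nu)^{\otimes2})=|\gamma|\,\overline{D}_{KL}(\pi\parallel\mu\otimes\nu)+|\pi|\,\overline{D}_{KL}(\gamma\parallel\mu\otimes\nu)$, which is the analogue for our setting of \cite[Proposition 4]{sejourne2021unbalanced}. This is the step I expect to require the most care, because it must be established from the definition of $\overline{D}_{KL}$ and the factorization of the density, integrating the fixed factor out as a multiple of the mass of the other, and one must keep track of which summand is linear in $\gamma$. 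The first summand equals $\int \epsilon\,\overline{D}_{KL}(\pi\parallel\mu\otimes\nu)\,d\gamma$ after multiplying by $\epsilon$, i.e. a constant per-unit-mass offset that belongs in the cost $c_\pi$; the second summand is precisely the $\epsilon|\pi|$-weighted entropy $\epsilon|\pi|\,\overline{D}_{KL}(\gamma\parallel\mu\otimes\nu)$ retained in the reduced problem.

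Putting the pieces together, I would collect the three $\gamma$-linear integrands, namely $\tfrac12\omega_1 d$, $\omega_2[L\circ\pi]$, and $\epsilon\,\overline{D}_{KL}(\pi\parallel\mu\otimes\nu)$, into the single cost $c_\pi$, keep the mass penalty $-2\lambda|\pi||\gamma|$ and the entropy $\epsilon|\pi|\,\overline{D}_{KL}(\gamma\parallel\mu\otimes\nu)$, and discard all $\gamma$-independent terms. A short computation then shows that the inner objective and the objective in \eqref{eq:entropic_pot} differ only by the additive constant $\tfrac{\omega_1}{2}\langle d,\pi\rangle+\lambda(|\mu|^2+|\nu|^2)-\lambda|\pi|(|\mu|+|\nu|)$, so the two problems have identical minimizers; the term $\lambda|\pi|(|\mu|+|\nu|)$ in \eqref{eq:entropic_pot} is just a convenient re-centering of the constant. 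The last point to nail down is the feasible set: although the inner minimization is written over $\mathcal{M}_+(X\times Y)$, finiteness of $\overline{D}_{KL}(\gamma\parallel\mu\otimes\nu)$ forces $\gamma\ll\mu\otimes\nu$, and since the problem originates from the relaxation \eqref{eq:entropy-fpgw-2-main} in which $\gamma$ is already constrained to $\Gamma_\leq(\mu,\nu)$, the reduced problem is legitimately the entropic \emph{partial} OT problem over $\Gamma_\leq(\mu,\nu)$, completing the equivalence.
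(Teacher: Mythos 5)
Your proposal is correct and follows essentially the same route as the paper's proof: both rest on the tensorization identity $\overline{D}_{KL}(\pi\otimes\gamma\parallel(\mu\otimes\nu)^{\otimes2})=|\gamma|\,\overline{D}_{KL}(\pi\parallel\mu\otimes\nu)+|\pi|\,\overline{D}_{KL}(\gamma\parallel\mu\otimes\nu)$, followed by absorbing the $\gamma$-linear integrands $\tfrac12\omega_1 d$, $\omega_2[L\circ\pi]$ and $\epsilon\,\overline{D}_{KL}(\pi\parallel\mu\otimes\nu)$ into $c_\pi$ and discarding the $\gamma$-independent constants. Your additional remark on why the feasible set reduces to $\Gamma_\leq(\mu,\nu)$ is a point the paper glosses over but is consistent with its argument.
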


Given these fundamental results, we can present the Sinkhorn algorithm \ref{alg:sink-fpgw}. 

\begin{algorithm}[bt]
   \caption{Sinkhorn Algorithm for FPGW}
   \label{alg:sink-fpgw}
\begin{algorithmic}
\STATE 
  {\bfseries Input:} $C\in \mathbb{R}^{n\times m}, C^X\in \mathbb{R}^{n\times n},C^Y\in \mathbb{R}^{m\times m}, p\in \mathbb{R}^n_+, q\in\mathbb{R}^m_+$, $\omega_2\in[0,1],\lambda\ge 0$.
   \STATE {\bfseries Output:}
$\gamma$

\FOR{$k=1,2,\ldots$}
\STATE $\pi \gets \gamma$
\STATE Solve the Sinkhorn partial OT problem \eqref{eq:entropic_pot}:
$$\gamma\gets \min_{\gamma\in\Gamma_\leq(\mu,\nu)}\int_{X\times Y} c_{\pi}(x,y) d\gamma+\lambda |\pi| (|\mu|+|\nu|-2|\gamma|)+\epsilon |\pi|D_{KL}(\gamma\parallel \mu\otimes \nu) $$

\STATE Fix $\gamma$ and solve the similar Sinkhorn partial OT problem \eqref{eq:entropic_pot}:
$$\pi\gets \min_{\pi\in\Gamma_\leq(\mu,\nu)}\int_{X\times Y} c_{\gamma}(x,y) d\gamma+\lambda |\gamma| (|\mu|+|\nu|-2|\pi|)+\epsilon |\pi|D_{KL}(\pi\parallel \mu\otimes \nu) $$

\STATE Rescale $\gamma\gets \sqrt{|\pi|/|\gamma|}\gamma$
\STATE Break if $\pi\approx \gamma$
\ENDFOR
\end{algorithmic}
\end{algorithm}

\section{Numerical Applications}

\begin{figure}[t]
\centering   
\begin{subfigure}[t]{0.45\columnwidth}
\centering
\includegraphics[width=\textwidth]{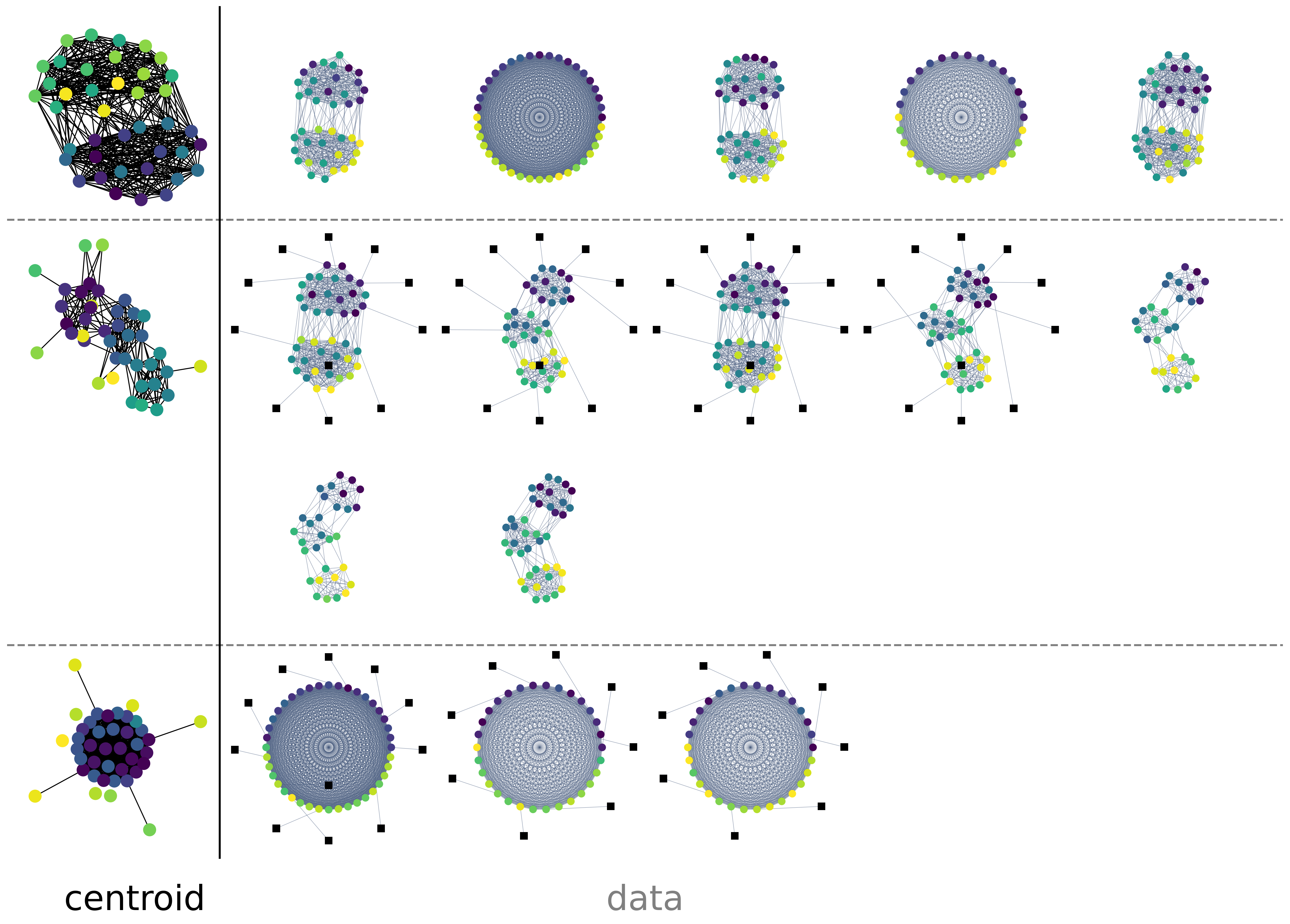}
\caption{FGW-kmeans}
\label{fig:fgw}
\end{subfigure}
\begin{subfigure}[t]{0.45\columnwidth}
        \centering
\includegraphics[width=\textwidth]{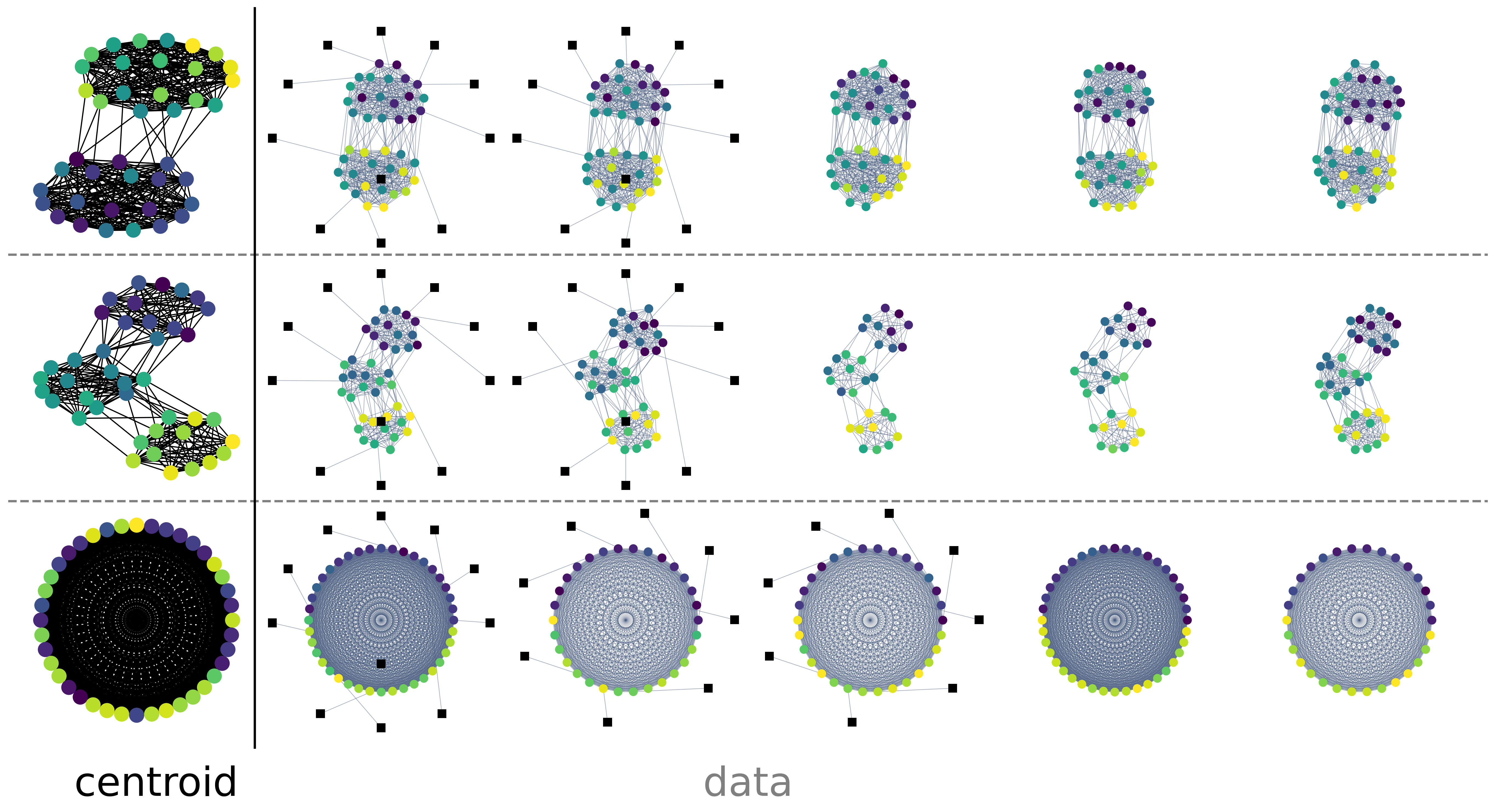}
\caption{FPGW-kmeans}
\label{fig:fmpgw}
\end{subfigure}
\caption{We present the clustering results of the FGW and FPGW k-means methods. \\  
In the first column, we visualize the centroids obtained using both methods. The centroids are represented by their features and structure (distance matrix). The edges of these graphs are either reconstructed or approximated based on the returned distance matrix. Additionally, the color of each node corresponds to its feature. \\  
The clustering results are shown in the remaining columns. For each graph, the color of regular nodes represents their features, while all outlier nodes are depicted as black squares.
}
\label{fig:clustering}
\end{figure}

\subsection{Toy example: Graph clustering}
Given a set of unlabeled graphs $\{G_1,\ldots G_K\}$, we compare the performance of FGW and FPGW in the graph clustering task. 

\textbf{Dataset setup.}
We adapt the dataset generated by \cite{feydy2017optimal}. Each graph follows a simple Stochastic Block Model, and the groups are defined w.r.t. the number of communities inside each graph and the distribution of their labels. The dataset contains three different types of graphs, and each type contains five graphs. Each graph contains 30 or 40 nodes, and the feature of each node is randomly selected in a compact set $[-2,2]$. 

In addition, we randomly generate outliers and add them to graphs. In particular, we select 50\% of graphs and add $30\%$ outliers. These outlier nodes are randomly connected or connected to the original nodes. For each outlier node, we define its feature as a random value in $[2,2+2\text{std}]$, where $\text{std}$ is the standard deviation of all features of nodes in the graph. 
Similar to the setting of graph classification, for each graph $G$, we suppose $N_G$ is the number of nodes that are not outliers. We assume $N_G$ is known in this experiment. We refer to figure \ref{fig:clustering} for the visualization of these graphs.

\textbf{Baseline: FGW K-means.}
We consider the FGW K-means method introduced in Section 4.3 of \cite{feydy2017optimal} as the baseline method. Specifically, given graphs $\{G_1, \ldots, G_K\}$ and the number of clusters $K' \leq K$, the method iteratively alternates between the following two steps:

\begin{itemize}
    \item \textbf{Step 1.} For each $i \in [1:K]$ and $j \in [1:K']$, compute the distance between each graph $G_i$ and centroid $G_j'$. Assign each graph $G_i$ to the closest centroid based on the computed distances.
    \item \textbf{Step 2.} Using the updated assignments, recompute each centroid $G_j'$, where $j'\in[1:K']$, as the center of the graphs assigned to it.
\end{itemize}

In the FGW K-means method, the Fused-GW distance is used to define the distance in Step 1, while the Fused-GW barycenter is used to compute the ``center'' in Step 2.

\textbf{Our method: FPGW K-means.}  
Inspired by the FGW K-means method, we introduce the FPGW K-means method. In summary, we adapt the FMPGW discrepancy to measure the distance in Step 1, and introduce the \textbf{FPGW barycenter} to define the ``center'' in Step 2.

First, we convert the graphs to mm-spaces using the formulation introduced in Section \ref{sec:graph_mm}. Specifically, given a graph $G = (V = \{v_1, \ldots, v_N\}, E)$, we define  
\[
\mathbb{X} = \left(V, C, \sum_{i=1}^N \frac{1}{N_G} \delta_{v_i}\right),
\]
where $C = [c(v_i, v_{i'})]_{i, i' \in [1:N]} \in \mathbb{R}^{N \times N}$, and $c(v_i, v_{i'})$ is the ``shortest path distance'' determined by $E$. Here, $N_G$ is the number of nodes that are not outliers, and its value is known based on the experiment setup. The feature distance between nodes $v_i$ and $v_j$ is defined as the Euclidean distance. We then use the Fused-PGW discrepancy \eqref{eq:fmpgw} to measure the distance between each pair of graphs and centroids in Step 1 of the K-means method.

For Step 2, suppose the set of graphs $\boldsymbol{G}^k \subset \{G_1, \ldots, G_K\}$ is assigned to cluster $k'$ for each $k' \in [1:K']$. We define the ``center'' using the following \textbf{Fused-PGW barycenter}:
\begin{align}
\min_{G} \sum_{G_k \in \boldsymbol{G}^k} \frac{1}{|\boldsymbol{G}^k|} \text{FMPGW}_{\rho_k}(G, G_k), \label{eq:barycenter_main}
\end{align}
where $\rho_k = 1$ for all $k$. The solution to this problem provides the updated centroid for cluster $k'$. Details of the formal formulation and solver are provided in Appendix \ref{sec:barycenter}.

We iteratively repeat the above two steps until all centroids/assignments converge.

\textbf{Other parameter setting.}
In this experiment, we set $\omega_2=0.999$ for both the FGW and FPGW methods. In addition, we set the number of clustering $K'=3$ for both methods. For each centroid, we initialize it as a random connected graph with $40$ nodes.

\textbf{Performance analysis.}
We present the clustering results of FGW K-means and FPGW K-means in Figure~\ref{fig:clustering}. The performance of the FGW method is significantly impacted due to the presence of outlier nodes in half of the graphs. In sharp contrast, FPGW demonstrates a more robust performance, with clustering results closely aligning with the ground truth. This robustness is attributed to the partial matching property of FPGW. 

From the centroids visualization for FGW/FPGW methods, it is evident that FPGW effectively excludes most of the information from outliers, whereas FGW incorporates it into the centroids.

Regarding the wall-clock time, FGW requires 41.9 seconds, while FPGW requires 82.7 seconds.

\subsection{Graph Matching}

\textbf{Dataset setup}. We apply our method on seven widely used graph datasets with continuous node attributes: \textit{Synthetic} \cite{feragen2013scalable}, \textit{Enzymes, Protein, AIDS} \cite{borgwardt2005shortest}, \textit{Cuneiform} \cite{kriege2016valid}, \textit{COX2, and BZR} \cite{sutherland2003spline}. Each dataset consists of hundreds or thousands of connected graphs. For each graph, we use its adjacency matrix as structural information and its attributes as node features. To create a partial matching task, we utilize the BFS (Breadth-first search
) method to randomly extract subgraphs containing 50\% of the original nodes and their corresponding edges. We also benchmark performance on the \textit{Douban dataset} \cite{8443159}, which provides a large online graph (3,906 nodes) and a smaller offline subgraph (1,118 nodes), using user locations as node features.

\textbf{Baselines.} We compare sink-FPGW against competitive methods including 
balanced GW methods, 
\textit{SpecGW} \cite{chowdhury2021generalized}, \textit{eBPG} \cite{Solomon2016Entropic}, \textit{BPG} \cite{Xu2019Gromov}, \textit{BAPG} \cite{Li2023Convergent}, \textit{srGW} \cite{VincentCuaz2022Semi}, and unbalanced GW methods: \textit{UGW} \cite{sejourne2021unbalanced}, \textit{PGW} \cite{chapel2020partial,bai2024efficient}, \textit{RGW} \cite{kong2024outlier}, 
\textit{FUGW} \cite{thual2022aligning}.

\textbf{Settings of GW methods and our method.} 
In all these method, we first convert grpahs into mm-spaces (see appendix \ref{sec:graph_mm}). In all these methods, we default the probability mass function of the query graph and original graph as $\mu=\sum_{i=1}^mp_i\delta_{v_i},\nu_{i=1}^nq_j\delta_{v_i}$, with $p_i=1/m,q_j=1/n$, where $m$ and $n$ denote the numbers of the source and target nodes. For the computation of the cost matrix C in \eqref{eq:fpgw}, we use the Euclidean distance among the continuous node attributes.

\textbf{Evaluation metric and performance analysis.} 
Accuracy is defined as the fraction of ground-truth elements correctly recovered in the predicted set, 
$\mathrm{Acc} = \frac{\lvert S_{\mathrm{gt}}\cap S_{\mathrm{pred}}\rvert}{\lvert S_{\mathrm{gt}}\rvert}\times 100\%$, following \cite{kong2024outlier}. 
As shown in Table~2, our proposed sink-FPGW achieves superior accuracy on most datasets while remaining highly efficient. 
The key advantage comes from its ability to leverage both node features (linear part) and structure information (quadratic part), unlike methods limited to structural cues. 
This design not only boosts accuracy but also yields significant computational efficiency, making sink-FPGW consistently faster than existing alternatives.  


\begin{table}[h!]
\centering
\caption{Comparison of methods on multiple datasets. Acc = accuracy, Time = runtime. For the first seven datasets (excluding Douban), Time corresponds to the total graph set matching time, while for Douban it reflects the runtime of a single matching instance.}
\begin{adjustbox}{width=\textwidth}
\begin{tabular}{l rr rr rr rr}
\toprule
& \multicolumn{2}{c}{\textbf{Synthetic}}
& \multicolumn{2}{c}{\textbf{Enzymes}}
& \multicolumn{2}{c}{\textbf{Cuneiform}}
& \multicolumn{2}{c}{\textbf{COX2}} \\
\cmidrule(lr){2-3} \cmidrule(lr){4-5} \cmidrule(lr){6-7} \cmidrule(lr){8-9}
\textbf{Method} & Acc & Time & Acc & Time & Acc & Time & Acc & Time \\
\midrule

SpecGW & $0.00^{\pm0.00}$ & 5.25 & 9.98$^{\pm 0.11}$ & 3.13 & 7.15$^{\pm0.12}$ & 1.06 & 5.27$^{\pm 0.06}$ & 2.82 \\
eBPG &  $2.00^{\pm 0.00}$ & 121.84 & 12.68$^{\pm 0.12}$ & 9950.78 & 5.99$^{\pm0.09}$ & 4310.76 & 8.99$^{\pm 0.06}$ & 5760.38 \\
BPG & $4.00^{\pm 0.00}$ & 21.12 & 28.18$^{\pm 0.20}$ & 90.71 & 5.81$^{\pm0.07}$ & 12.13 & 23.32$^{\pm 0.13}$ & 48.88 \\
BAPG &  $88.03^{\pm 0.05}$ & 45.85  & 62.07$^{\pm 0.24}$ & 14.34 & 72.05$^{\pm0.17}$ & 2.02 & 21.23$^{\pm 0.10}$ & 15.68 \\
srGW & $0.00^{\pm 0.00}$ & 60.27 & 15.22$^{\pm 0.24}$ & 64.24 & 19.94$^{\pm0.09}$ & 7.24 & 1.67$^{\pm 0.02}$ & 33.12 \\
PGW & $0.00^{\pm 0.00}$ & 6.19 & 9.74$^{\pm 0.13}$ & 17.26 & 9.94$^{\pm0.09}$ & 14.73 & 9.41$^{\pm 0.11}$ & 11.69 \\
UGW & $2.00^{\pm 0.00}$ & 12.84 & 17.83$^{\pm 0.26}$ & 732.21 & 04.17$^{\pm0.08}$ & 375.36 & 3.93$^{\pm 0.05}$ & 65.03 \\
RGW & $37.24^{\pm 0.23}$ & 424.68 & 77.20$^{\pm 0.28}$ & 137.70 & 85.33$^{\pm0.21}$ & 25.87 & 37.05$^{\pm 0.15}$ & 188.99 \\
\midrule
FGW & $46.68^{\pm 0.06}$ & 5.05 & 62.92$^{\pm 0.24}$ & 9.26 & 86.78$^{\pm0.08}$ & 1.20 & 73.28$^{\pm 0.22}$ & 5.32 \\
FUGW & \textbf{99.89}$^{\pm 0.01}$ & 70.76 & 90.91$^{\pm 0.19}$ & 335.71 & 96.88$^{\pm0.04}$ & 109.74 & 90.73$^{\pm 0.23}$ & 166.92 \\
sink-FPGW(ours) & 99.70$^{\pm 0.01}$ & 4.90 & \textbf{93.47}$^{\pm 0.18}$ & 1.81 & \textbf{99.96}$^{\pm0.01}$ & 0.3245 & \textbf{92.62}$^{\pm 0.23}$ & 1.38 \\
\bottomrule
\end{tabular}
\end{adjustbox}
\vspace{1em} 

\begin{adjustbox}{width=\textwidth}
\begin{tabular}{l rr rr rr rr}
\toprule
& \multicolumn{2}{c}{\textbf{BZR}}
& \multicolumn{2}{c}{\textbf{Protein}}
& \multicolumn{2}{c}{\textbf{AIDS}}
& \multicolumn{2}{c}{\textbf{Douban}} \\
\cmidrule(lr){2-3} \cmidrule(lr){4-5} \cmidrule(lr){6-7} \cmidrule(lr){8-9}
\textbf{Method} & Acc & Time & Acc & Time & Acc & Time & Acc & Time \\
\midrule

SpecGW &  9.51$^{\pm 0.07}$ & 2.08   &   12.24$^{\pm 0.15}$ & 9.29   &   23.89$^{\pm 0.19}$ & 7.36  &  0.00 & 65.11 \\
eBPG &   14.21$^{\pm 0.10}$ & 5540.20   &   14.88$^{\pm 0.16}$ & 15936.79   &   25.98$^{\pm 0.20}$ & 26182.34  & 0.09  &13.24 \\
BPG &    25.67$^{\pm 0.18}$ & 48.78   &   30.07$^{\pm 0.19}$ & 134.55  &   30.49$^{\pm 0.22}$ & 106.86  & 58.59  & 391.00 \\
BAPG &   34.54$^{\pm 0.15}$ & 11.11  &   25.52$^{\pm 0.22}$ & 66.90   &    50.08$^{\pm 0.25}$ & 15.42  & 53.94  & 2194.70 \\
srGW &   3.20$^{\pm 0.04}$ & 26.55   &   18.96$^{\pm 0.26}$ & 149.17   &   23.76$^{\pm 0.24}$ & 150.31  & 4.38  & 1584.68\\
PGW &   6.80$^{\pm 0.08}$ & 9.59   &   13.22$^{\pm 0.18}$ & 97.74   &   22.48$^{\pm 0.21}$ & 22.89  & 2.06  & 1363.59 \\
UGW &   6.76$^{\pm 0.07}$ & 152.98   &   15.57$^{\pm 0.22}$ & 1280.80   &   21.32$^{\pm 0.20}$ & 1152.91  & 0.09  & 702.53\\
RGW &   39.54$^{\pm 0.19}$ & 120.97   &   38.26$^{\pm 0.30}$ & 511.67   &   57.05$^{\pm 0.34}$ & 340.94  & 51.88  & 17784.09\\
\midrule
FGW &   74.10$^{\pm 0.27}$ & 8.09   &   63.95$^{\pm 0.24}$ & 29.29   &    84.31$^{\pm 0.21}$ & 6.76  & 24.60  & 367.15 \\
FUGW &  89.21$^{\pm 0.23}$ & 133.96   &   74.87$^{\pm 0.23}$ & 733.48   &    97.12$^{\pm 0.10}$ & 617.36  & 66.37  & 226.22\\
sink-FPGW(ours) &   \textbf{93.51}$^{\pm 0.24}$ & 0.97   &   \textbf{96.21}$^{\pm 0.15}$ & 7.17 &  \textbf{98.89}$^{\pm 0.09}$ & 2.33  & \textbf{66.99}  & 540.33\\
\bottomrule
\end{tabular}
\end{adjustbox}
\label{tab:results}
\end{table}

\subsection{Geometry Matching}

\textbf{Dataset setup}. We evaluate our method on a partial geometry matching task using the 'Victoria' model from the \textit{TOSCA} \cite{DBLP:journals/corr/RodolaCBTC15} 3D mesh dataset. This high-resolution mesh consists of 10,000 nodes, which are segmented into two parts: an upper body (4,691 nodes) and a lower body (5,436 nodes).

\textbf{Our methods.} For mesh objects, we encode them to mm-spaces by the approach described in section \ref{sec:graph_mm} and then apply \eqref{alg:sink-fpgw} (sink-FPGW) to solve the optimization problem \eqref{eq:fpgw} for the geometry matching task. In particular, we define the source and target mass distribution functions as $p_i = q_j= \frac{1}{\max(n,m)}$ for all the methods, where $m,n$ denotes the number of nodes of the source and target graphs. For both complete and partial geometries, we construct node features using the Euclidean distances from each node to a single randomly selected anchor node, while the mesh structure is encoded via the adjacency matrix.

\textbf{Baselines.}
We benchmark our proposed sink-FPGW method against two competitive baselines, RGW and FUGW. To ensure a fair comparison, RGW is initialized with a transport plan derived from node features, and the regularization hyperparameters for both FUGW and sink-FPGW are optimized via line search.

\textbf{Evaluation metric and performance analysis.}
Following the RGW evaluation protocol, sink-FPGW achieves matching accuracies of $99.47\%$ and $99.77\%$ on the two partial matching tasks ($43.47\%$ and $48.05\%$ in RGW, and $98.99\%$ and $56.09\%$ in FUGW), where $\mathrm{Acc} = \frac{\lvert S_{\mathrm{gt}}\cap S_{\mathrm{pred}}\rvert}{\lvert S_{\mathrm{gt}}\rvert}\times 100\%$. We visualize the one-hot transport plans as real shapes (Figure~\ref{fig:geometry_matching} (b)-(d)) and heatmaps (Figure~\ref{fig:geometry_matching} (e)-(h)), confirming that sink-FPGW yields the most accurate matches. The one-hot plans are obtained by taking the argmax match for each source node from the transport plans.

\begin{figure}
    \centering
\includegraphics[width=\textwidth]{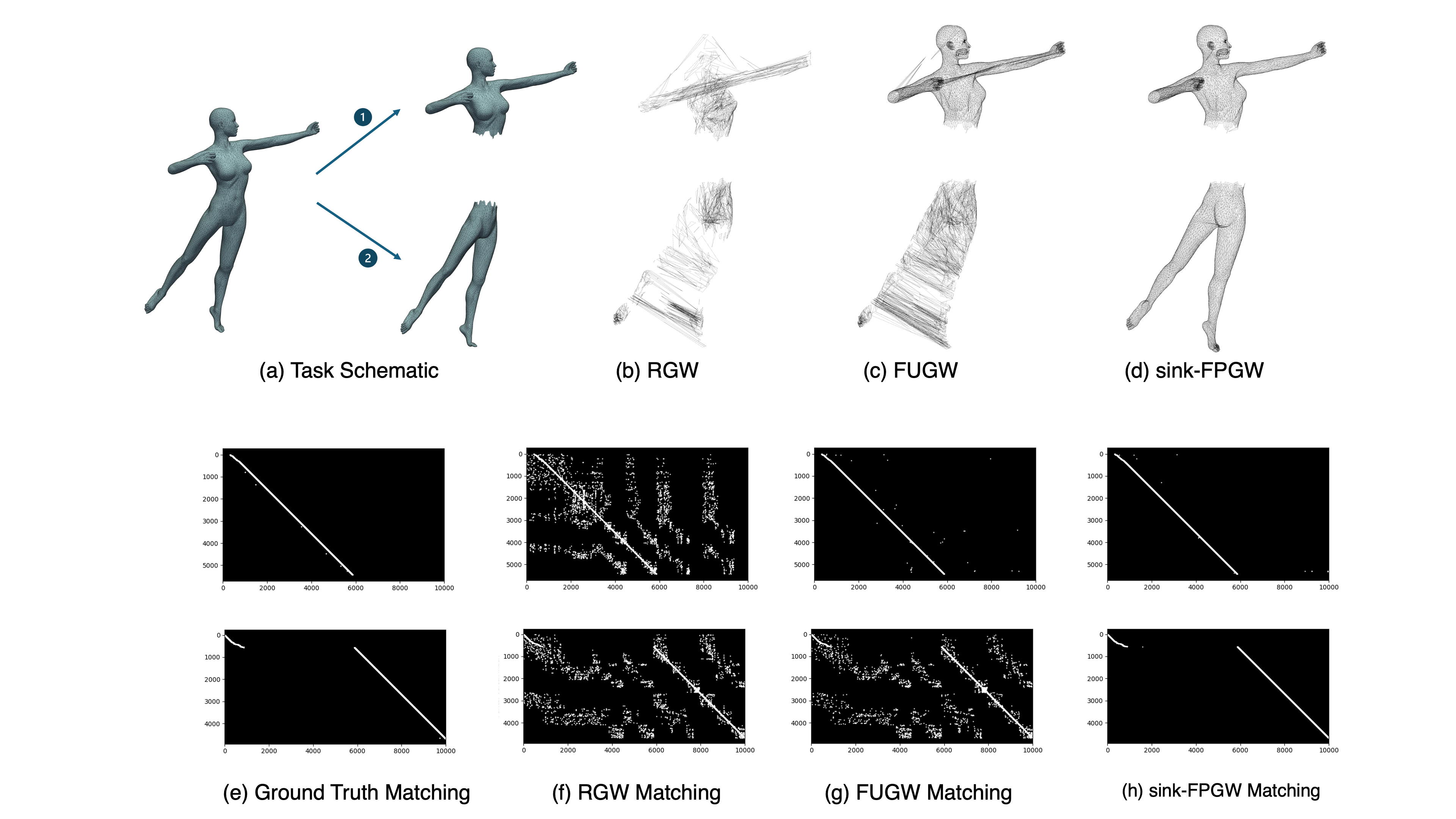}
    \vspace{+0.in}
    \caption{Qualitative results for the partial geometry matching experiment. (a): Schematic illustrating the two whole-to-partial matching tasks; (b)-(d): Visualizations of the partial mesh reconstructed from the whole source mesh using the transport plans computed by RGW, FUGW, and sink-FPGW.; (e)-(h) Corresponding heatmap visualizations for the ground truth, RGW, FUGW, and sink-FPGW, the top row corresponding to the first partial matching task (upper body) and the bottom row to the second (lower body). The x- and y-axes represent the vertex indices of the whole and partial meshes, respectively. }
    \label{fig:geometry_matching}
\end{figure}

\section{Summary.}
In this paper, we proposed a novel formulation called ``fused-partial Gromov-Wasserstein'' (fused-PGW) for comparing structured objects. Theoretically, we demonstrated the metric properties of fused-PGW, and numerically, we introduced the corresponding Frank-Wolfe solver, Sinkhorn Solver and barycenter algorithms. Finally, we applied fused-PGW to graph matching and clustering experiments, showing that it achieves more robust performance due to its partial matching property.

\section*{Acknowledgment} 
This research was partially supported by NSF CAREER Award No. 2339898.
The authors thank Dr. Rocio Martin (Florida State University) for valuable discussions and Xinran Liu (Vanderbilt University) for assistance with code testing.

\bibliography{iclr2026_conference,ref}
\bibliographystyle{unsrt}

\newpage
\appendix
\onecolumn
\section{Notation and Abbreviations.}

\paragraph*{Abbreviations}
\begin{itemize}
\item OT: Optimal Transport problem; see~\eqref{eq: OT}.
\item POT: Partial Optimal Transport problem; see~\eqref{eq:pot}.
\item GW: Gromov--Wasserstein problem; see~\eqref{eq:gw}.
\item PGW: Partial Gromov--Wasserstein problem; see~\eqref{eq:PGW}.
\item MPGW: Mass-Constrained Partial Gromov--Wasserstein; see~\eqref{eq:MPGW}.
\item FW: Frank--Wolfe algorithm~\cite{frank1956algorithm}.

\item $EFUGW(\mathbb{X},\mathbb{Y})$: entropic fused unbalanced GW objective (cf.\ \eqref{eq:entropy-fugw}).
\item $EFPGW(\mathbb{X},\mathbb{Y})$: entropic fused \emph{partial} GW objective (cf.\ \eqref{eq:entropy-fpgw}).
\item $LB$-$FPGW_\lambda(\mathbb{X},\mathbb{Y})$: relaxed lower-bound formulation (cf.\ \eqref{eq:entropy-fpgw-2}).
\item $EFMPGW(\mathbb{X},\mathbb{Y})$: entropic fused mass-constrained PGW (cf.\ \eqref{eq:entropic-fmpgw}); $LB$-$EFMPGW$ its relaxation.
\item $c(x,y)$: feature cost; $d_X,d_Y$: intra-space distances; $|d_X-d_Y|^2$: structural discrepancy.

\end{itemize}

\paragraph*{Sets, Spaces, and Indices}
\begin{itemize}
\item $\mathbb{R}^d$: Euclidean space; $\mathbb{R}_+$: nonnegative reals.
\item $X,Y\subset \mathbb{R}^d$: non-empty, convex, compact sets (default setting).
\item $X^2=X^{\otimes 2}=X\times X$.
\item $[1\!:\!n]=\{1,\dots,n\}$.
\item $r,p,q\in[1,\infty)$: real exponents used in costs/metrics.
\end{itemize}

\paragraph*{Vector, Norms and Basic Operators}
\begin{itemize}
\item $\|\cdot\|$: Euclidean norm; $\|\cdot\|_{\rm TV}$: total variation norm.
\item $\langle A,B\rangle=\mathrm{tr}(A^\top B)$: Frobenius inner product.
\item $1_n$, $1_{n\times m}$, $1_{n\times m\times n\times m}$: all-ones vector, matrix, and tensor.
\item $\mathbbm{1}_E$: indicator of a measurable set $E$,
$\mathbbm{1}_E(z)=1$ if $z\in E$, else $0$.
\item $\nabla$: gradient.
\end{itemize}

\paragraph*{Measures and Pushforwards}
\begin{itemize}
\item $\mathcal{M}_+(X)$: finite nonnegative Radon measures on $X$; 
\item $\mathcal{P}_2(X)$: probability measures on $X$ with finite second moment.
\item $|\mu|=\mu(X)$: total mass (TV-norm) of $\mu$.
\item $\mu\le \sigma$: measure domination, i.e., $\mu(B)\le \sigma(B)$ for all Borel $B\subseteq X$.
\item $\mu^{\otimes 2}=\mu\otimes \mu$: product measure.
\item $\mu(\phi):=\langle \phi,\mu\rangle:= \int \phi(x)\,d\mu(x)$.
\item $T_\# \sigma$: pushforward of $\sigma$ by measurable $T\!:\!X\!\to\!Y$, i.e., $T_\#\sigma(A)=\sigma(T^{-1}(A))$.
\end{itemize}
\paragraph*{Metric-Measure (mm) Spaces and Distance Matrices}
\begin{itemize}
\item $\mathbb{X}=(X,d_X,\mu)$, $\mathbb{Y}=(Y,d_Y,\nu)$: mm-spaces.
\item For discrete $\mathbb{X}$ with $X=\{x_i\}_{i=1}^n$, define $C^X\in\mathbb{R}^{n\times n}$ by $C^{X}_{i,i'}=d_X^q(x_i,x_{i'})$; similarly $C^Y$.
\item $\mathbb{X}\sim \mathbb{Y}$: mm-space equivalence if they have equal total mass and $GW_q^p(\mathbb{X},\mathbb{Y})=0$.
\end{itemize}
\paragraph*{Couplings and Partial Couplings}
\begin{itemize}
\item $\Gamma(\mu,\nu):=\{\gamma\in\mathcal{P}_2(X\times Y):\ \gamma_1=\mu,\ \gamma_2=\nu\}$.
\item Discrete weights: $\mathrm{p}=[p_1^X,\dots,p_n^X]^\top$, $\mathrm{q}=[q_1^Y,\dots,q_m^Y]^\top$,
\quad $|\mathrm{p}|=\sum_i p_i$, \quad $\mathrm{p}\le \mathrm{p}'$ if $p_j\le p'_j\ \forall j$.
\item $\Gamma(\mathrm{p},\mathrm{q})=\{\gamma\in\mathbb{R}_+^{n\times m}:\ \gamma\,1_m=\mathrm{p},\ \gamma^\top 1_n=\mathrm{q}\}$.
\item $\Gamma_{\le}(\mu,\nu):=\{\gamma\in\mathcal{M}_+(X\times Y):\ \gamma_1\le \mu,\ \gamma_2\le \nu\}$.
\item $\Gamma_{\le}(\mathrm{p},\mathrm{q}):=\{\gamma\in\mathbb{R}_+^{n\times m}:\ \gamma\,1_m\le \mathrm{p},\ \gamma^\top 1_n\le \mathrm{q}\}$.
\item $\gamma,\gamma_1,\gamma_2$: joint and marginal measures; in discrete form $\gamma\in\mathbb{R}_+^{n\times m}$, $\gamma_1=\gamma 1_m$, $\gamma_2=\gamma^\top 1_n$.
\item $\pi_1:X\times Y\to X$, $\pi_2:X\times Y\to Y$.
\item $\pi_{1,2}:S\times X\times Y\to X\times Y$, $(s,x,y)\mapsto (x,y)$; similarly $\pi_{0,1},\pi_{0,2}$ for other coordinate pairs.
\end{itemize}

\paragraph*{Optimal Transport Problems}
\begin{itemize}
\item $c:X\times Y\to \mathbb{R}_+$: lower semicontinuous cost for (partial) OT.
\item $OT(\mu,\nu)$: classical OT; $W_p(\mu,\nu)$: $p$-Wasserstein distance; $POT_\lambda(\mu,\nu)$: partial OT with parameter $\lambda>0$; see~\eqref{eq: OT}, \eqref{eq: Wp}, \eqref{eq:pot}.
\item $L:\mathbb{R}\times\mathbb{R}\to\mathbb{R}$, $D:\mathbb{R}\times\mathbb{R}\to\mathbb{R}$: GW loss and scalar distance.
\item $GW^L(\cdot,\cdot)$: GW objective with loss $L$; $d_{GW,r}^q$: GW with $L(a,b)=|a-b|^q$; see~\eqref{eq:gw}.
\item $PGW_{r,L,\lambda}(\cdot,\cdot)$: partial GW objective; see~\eqref{eq:PGW}.
\item $C(\gamma;\lambda,\mu,\nu):=\gamma^{\otimes 2}\!\big(L(d_X^q,d_Y^q)\big)+\lambda\big(|\mu|^2+|\nu|^2-2|\gamma|^2\big)$: PGW transport cost for $\gamma\in\Gamma_{\le}(\mu,\nu)$.
\item $UGW_{r,L,\lambda}(\mathbb{X},\mathbb{Y})$: unbalanced GW; $FUGW_{r,L,\lambda}(\mathbb{X},\mathbb{Y})$: fused unbalanced GW.
\item $d_{FGW,r,q}(\mathbb{X},\mathbb{Y})$: fused GW distance; $d_{FPGW,r,q,\lambda}^p(\mathbb{X},\mathbb{Y})$: fused partial GW distance.
\end{itemize}

\paragraph*{Discrete Tensorized Forms}
\begin{itemize}
\item Discrete measure setting: $\mu=\sum_{i=1}^n \mathrm{p}_i\delta_{x_i},\nu=\sum_{j=1}^m \mathrm{q}_j\delta_{y_j}$. 
\item $M\in\mathbb{R}^{n\times m\times n\times m}$ with $M_{i,j,i',j'}=L(C^X_{i,i'},C^Y_{j,j'})$; $M^\top$ swaps index pairs: $M^\top_{i,j,i',j'}=M_{i',j',i,j}$.

In default, $L$ is L2 norm square, and we denote $M=\|C_X-C_Y\|^2$
\item $(M-2\lambda)_{i,j,i',j'}:=M_{i,j,i',j'}-2\lambda$.
\item $M\circ \gamma \in \mathbb{R}^{n\times m}$ with $[M\circ \gamma]_{i,j}=\sum_{i',j'} M_{i,j,i',j'}\,\gamma_{i',j'}$.
\item $\langle\cdot,\cdot\rangle_F$: Frobenius inner product on $\mathbb{R}^{n\times m}$.
\end{itemize}

\paragraph*{Frank-Wolfe Optimization and Algorithmic Symbols}
\begin{itemize}
\item $\mathcal{L}$: objective functional for $PGW_\lambda(\cdot,\cdot)$.
\item $\alpha\in[0,1]$: line-search step size.
\item $\gamma^{(1)}$: initialization; $\gamma^{(k)}$, $\gamma^{(k)'}$: transport plans before/after Step~1 in the $k$-th FW iteration.
\item $G_{C,M}=\omega_1 C+\omega_2 \tilde M + M^\top\!\circ\! \gamma$, \quad $G=2\hat M\!\circ\! \hat \gamma$: gradients in two FW variants.
\item $a,b,c\in\mathbb{R}$: coefficients defined in~\eqref{eq:line_search_2} (For FPGW, replace $M$ by $M-2\lambda$).
\item $MPGW_{r,L,\rho}$ and $\Gamma^\rho_{\le}(\mu,\nu)$: mass-constrained partial GW and its feasible set; see~\eqref{eq:MPGW}, \eqref{eq:Gamma_leq^rho}.
\end{itemize}

\paragraph*{Entropic / Sinkhorn Notation}
\begin{itemize}
\item $\epsilon>0$: entropic regularization parameter.
\item $D_\phi(\cdot\Vert\cdot)$: $\phi$-divergence; special cases below.
\item $D_{KL}(\mu\Vert\nu)=\bar D_{KL}(\mu\Vert\nu)+|\nu|-|\mu|$ with
$\bar D_{KL}(\mu\Vert\nu)=\int \log\!\left(\frac{d\mu}{d\nu}\right)\,d\mu$ when $\mu\ll\nu$, $+\infty$ otherwise.
\item $D_{PTV}(\mu\Vert\nu)=\begin{cases} \|\mu-\nu\|_{\rm TV}=|\nu-\mu| & \text{if } \mu\le \nu,\\ +\infty & \text{otherwise.}\end{cases}$
\item $\Gamma_{\le}(\mu,\nu)$, $\Gamma_{\leq}^\rho(\mu,\nu)$: partial feasible sets (mass-unconstrained and mass-constrained with $|\gamma|=\rho$).
\item $|\gamma|=\gamma(X\times Y)$, $|\mu|=\mu(X)$, $|\nu|=\nu(Y)$: total mass.
\item $\mathcal{L}(\gamma)$: fused GW functional (feature + structure + penalties) used in EFUGW/EFPGW.

\item $c_\pi(x,y)=\tfrac{1}{2}\omega_1 d(x,y)+\omega_2\,[L(d_X^r,d_Y^r)\!\circ\!\pi](x,y)+\epsilon\,\bar D_{KL}(\pi\Vert \mu\!\otimes\!\nu)$: $\pi$-conditioned cost (cf.\ Prop.~\ref{pro:sinkhorn-fpgw}).
\item $[L(d_X,d_Y)\!\circ\!\pi](x,y)=\int_{X\times Y}L(d_X^r(x,x'),d_Y^r(y,y'))\,d\pi(x',y')$.
\item $K=\exp(-c/\epsilon)$: Gibbs kernel for feature cost; elementwise exponential.
\item $u\in\mathbb{R}^n_+$, $v\in\mathbb{R}^m_+$: Sinkhorn scaling vectors.
\item $\odot$, $\oslash$: elementwise (Hadamard) product and division.
\item $\mathrm{diag}(a)$: diagonal matrix with vector $a$ on the diagonal.
\item $\text{Proj}^{KL}_{\mathcal{C}_i}$: Bregman (KL) projection onto constraint set $\mathcal{C}_i$ (cf.\ Alg.~\ref{alg:sinkhorn-mopt}).
\item $\mathcal{C}_1=\{\gamma\ge0:\ \gamma_2\le q\}$,\; $\mathcal{C}_2=\{\gamma\ge0:\ \gamma_1\le p\}$,\;
$\mathcal{C}_3=\{\gamma\ge0:\ |\gamma|=\rho\}$: partial-marginal and mass constraints.
\item $\gamma^{\otimes 2}$, $(\mu\otimes\nu)^{\otimes 2}$: product measures on $(X\times Y)^2$.
\item $\pi,\gamma\in\mathcal{M}_+(X\times Y)$: current and updated couplings in alternating minimization; $\rho\in[0,\min(|p|,|q|)]$ target mass.
\item Stopping criteria: norms/duality gap on $(u,v)$ or fixed-point tolerance on $\gamma$.


\end{itemize}
\paragraph*{Graph-Specific Notation}
\begin{itemize}
\item $G=(V,E)$: graph with nodes $V=\{v_1,\dots,v_N\}$ and edges $E\subset V^2$.
\item $d_V:V^2\to\mathbb{R}$: structural distance (default: shortest-path distance).
\item $f:V\to \mathcal{F}$: node feature map; $\mathcal{F}$ is the feature space.
\item $wl:\mathcal{F}\to S^H$: Weisfeiler--Lehman feature map~\cite{vishwanathan2010graph} for discrete $\mathcal{F}$; $S$ finite alphabet, $H\in\mathbb{N}$.
\item $d_{\mathcal{F}}$: feature-space metric. Default: Euclidean if $\mathcal{F}=\mathbb{R}^d$; if $\mathcal{F}$ is discrete, Hamming on $wl(\mathcal{F})$:
\[
d_{\mathcal{F}}\!\big(f(v_1),f(v_2)\big):=\sum_{h=1}^H \delta\!\big(wl(f(v_1))_h \neq wl(f(v_2))_h\big).
\]
\end{itemize}

\section{Background of optimal transport problems}

\textbf{The Optimal Transport (OT) problems} for $\mu,\nu\in\mathcal{P}(\Omega)$,  with transportation cost $c(x,y): \Omega\times \Omega\to \mathbb{R}_+$ being a
lower-semi continuous function, is defined as: 
\begin{align}
&OT(\mu,\nu):=\min_{\gamma\in \Gamma(\mu,\nu)}\gamma(c), \label{eq: OT} \\
&\text{ where } \quad 
\gamma(c):=\int_{\Omega^2}c(x,y) \, d\gamma(x,y) \nonumber
\end{align}
and where $\Gamma(\mu,\nu)$ denotes the set of all joint probability measures on $\Omega^2:=\Omega\times \Omega$ with marginals $\mu,\nu$, i.e., 
$\gamma_1:=\pi_{1\#}\gamma=\mu,
 \gamma_2:= \pi_{2\#}\gamma=\nu$,
where $\pi_1,\pi_2:\Omega^2\to \Omega$ are the canonical projections  $\pi_1(x,y):=x,\pi_2(x,y):=y$. 
A minimizer for \eqref{eq: OT} always exists \cite{Villani2009Optimal,villani2021topics} and when $c(x,y)=\|x-y\|^p$, for $p\ge 1$, 
it defines a metric on $\mathcal{P}(\Omega)$, which is referred to as the ``$p$-Wasserstein distance'':
\begin{align}
W_p^p(\mu,\nu):=\min_{\gamma\in\Gamma(\mu,\nu)}\int_{\Omega^2}\|x-y\|^pd\gamma(x,y). \label{eq: Wp}
\end{align}
\textbf{The Partial Optimal Transport (POT) problem} \cite{chizat2018unbalanced, figalli2010new,piccoli2014generalized} extends the OT problem to the set of Radon measures $\mathcal{M}_+(\Omega)$, i.e., non-negative and finite measures. For $\lambda>0$ and $\mu,\nu\in \mathcal{M}_+(\Omega)$, the POT problem is defined as:
\begin{equation}
POT(\mu,\nu;\lambda):=\inf_{\gamma\in\mathcal{M}_+(\Omega^2)}\gamma(c)+\lambda(|\mu-\gamma_1|+|\nu-\gamma_2|),
\label{eq:pot}
\end{equation}
where, in general, $|\sigma|$ denotes the total variation norm of a measure $\sigma$, i.e., $|\sigma|:=\sigma(\Omega)$.  
The constraint $\gamma\in \mathcal{M}_+(\Omega^2)$ in \eqref{eq:pot} can be further restricted to $\gamma \in \Gamma_\leq(\mu,\nu)$: 
$$\Gamma_\leq(\mu,\nu):=\{\gamma\in \mathcal{M}_+(\Omega^2): \, \gamma_1\leq \mu,\gamma_2\leq \nu\},$$ denoting $\gamma_1\leq \mu$ if for any Borel set $B\subseteq\Omega$, $\gamma_1(B)\leq \mu(B)$ (respectively, for $\gamma_2\leq \nu$) \cite{figalli2010optimal}. Roughly speaking, the linear penalization indicates that if the classical transportation cost exceeds 
$2\lambda$, it is better to create/destroy' mass (see \cite{bai2022sliced} for further details). In addition, the above formulation has an equivalent form, to distinguish them, we call it ``mass-constraint partial optimal transport problem'' (MPOT): 
\begin{align}
MOPT(\mu,\nu;\rho):=\inf_{\gamma\in\Gamma_\leq^\rho(\mu,\nu)}\gamma(c) \label{eq:MPOT}.
\end{align}

\section{(fused)-Unbalanced Gromov Wasserstein and (fused)  Partial Gromov Wasserstein.}
The unbalanced Gromov Wasserstein problem is firstly proposed by \cite{sejourne2021unbalanced,chapel2020partial,bai2024efficient}. Given two $mm-$spaces, $\mathbb{X}=(X,d_X,\mu),\mathbb{Y}=(Y,d_Y,\nu)$, the UGW problem is defined by 
\begin{align}
UGW_{r,L,\lambda}(\mathbb{X},\mathbb{Y}):=\inf_{\gamma\in\mathcal{M}_+(X\times Y)}\langle L(d_X^r,d_Y^r), \gamma^{\otimes2} \rangle+\lambda(D_{\phi_1}(\gamma_1^{\otimes2}\parallel \mu^{\otimes2})+D_{\phi_2}(\gamma_2^{\otimes2}\parallel \nu^{\otimes2})),\label{eq:ugw}
\end{align}
where $D_{\phi_1}, D_{\phi_2}$ are f-divergence and can be distinct in the general setting. 

Similar to this work, the fused-Unbalanced Gromov Wasserstein, proposed by \cite{thual2022aligning} is defined as: 

$$FUGW_{r,L,\lambda}(\mathbb{X},\mathbb{Y}):=\inf_{\gamma\in\mathcal{M}_+(X\times Y)}\omega_1\langle C,\gamma\rangle+ \omega_2\langle L(d_X^r,d_Y^r), \gamma^{\otimes2} \rangle+\lambda(D_{\phi_1}(\gamma_1^{\otimes2}\parallel \mu^{\otimes2})+D_{\phi_2}(\gamma_2^{\otimes2}\parallel \nu^{\otimes2})).$$

And when the $f$-divergence terms $D_{\phi_1},D_{\phi_2}$ are KL divergences, the authors in \cite{thual2022aligning} propose a Sinkhorn computational solver. 

At the end of this section, we introduce the following relation between FUGW and FPGW, which is equivalent to the statement (1) in Theorem \ref{thm:main}. 
\begin{proposition}
When $D_{\phi_1},D_{\phi_2}$ are total variation, then we have: 
\begin{align}
FUGW_{r,L,\lambda}(\mathbb{X},\mathbb{Y})&=FPGW_{r,L,\lambda}(\mathbb{X},\mathbb{Y})\nonumber\\
&=\inf_{\gamma\in\Gamma_\leq(\mu,\nu)}\omega_1 \langle C,\gamma\rangle+\omega_2 \langle L(d_X^r,d_Y^r),\gamma^{\otimes2}\rangle +\lambda(|\mu|^2+|\nu|^2-2|\gamma|^2). \nonumber 
\end{align}
Equivalently speaking, one can restrict the searching space from $\mathcal{M}_+(X\times Y)$ to $\Gamma_\leq(\mu,\nu)$. 
\end{proposition}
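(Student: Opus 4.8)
The statement is really two claims: the identity $FUGW_{r,L,\lambda}(\mathbb{X},\mathbb{Y})=FPGW_{r,L,\lambda}(\mathbb{X},\mathbb{Y})$ when $D_{\phi_1},D_{\phi_2}$ are total variation, and the reduction of the admissible set from $\mathcal{M}_+(X\times Y)$ to $\Gamma_{\leq}(\mu,\nu)$ together with the closed form of the penalty. The first claim I would treat as definitional: the $\phi$-divergence generated by $\phi(t)=|t-1|$ is $D_\phi(A\parallel B)=|A-B|_{TV}$, and $|\cdot|_{TV}$ is symmetric, so substituting $D_{\phi_i}=|\cdot|_{TV}$ into the FUGW functional reproduces verbatim the penalty $\lambda(|\mu^{\otimes2}-\gamma_1^{\otimes2}|_{TV}+|\nu^{\otimes2}-\gamma_2^{\otimes2}|_{TV})$ of \eqref{eq:fpgw_orig}; hence $FUGW_{r,L,\lambda}=FPGW_{r,L,\lambda}$ with no computation. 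All the content is thus the second claim, which is exactly statement (1) of Theorem~\ref{thm:main}.

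For the reduction I would write $\inf_{\mathcal{M}_+}J$ and $\inf_{\Gamma_{\leq}}J$ for the two infima of the common objective $J(\gamma)=\omega_1\gamma(C)+\omega_2\gamma^{\otimes2}(L(d_X^r,d_Y^r))+\lambda(|\mu^{\otimes2}-\gamma_1^{\otimes2}|_{TV}+|\nu^{\otimes2}-\gamma_2^{\otimes2}|_{TV})$. Since $\Gamma_{\leq}(\mu,\nu)\subset\mathcal{M}_+(X\times Y)$, the inequality $\inf_{\mathcal{M}_+}J\leq\inf_{\Gamma_{\leq}}J$ is immediate. On $\Gamma_{\leq}$ the penalty simplifies cleanly: $\gamma_1\leq\mu$ forces $\gamma_1^{\otimes2}\leq\mu^{\otimes2}$ as measures, so $\mu^{\otimes2}-\gamma_1^{\otimes2}\geq0$ and its total-variation norm equals its mass $|\mu|^2-|\gamma_1|^2$; using $|\gamma_1|=|\gamma_2|=|\gamma|$ and the analogous identity for $\nu$, the two penalty terms collapse to $\lambda(|\mu|^2+|\nu|^2-2|\gamma|^2)$. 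This both establishes the easy inequality and pins the objective on $\Gamma_{\leq}$ to the claimed expression.

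The substance is the reverse inequality $\inf_{\mathcal{M}_+}J\geq\inf_{\Gamma_{\leq}}J$. My plan is to invoke existence of a minimizer $\gamma^{\ast}$ over $\mathcal{M}_+(X\times Y)$ (part (2), from lower semicontinuity of $J$ and tightness of minimizing sequences on the compact $X,Y$) and then to prove that such a minimizer may be chosen with $\gamma^{\ast}_1\leq\mu$ and $\gamma^{\ast}_2\leq\nu$, so that it already lies in $\Gamma_{\leq}$ and realizes the value computed above. The key quantitative input is the mass-balance lower bound $|\mu^{\otimes2}-\gamma_1^{\otimes2}|_{TV}\geq|\mu|^2-\min(|\gamma|,|\mu|)^2$ (and its $\nu$-analogue), which follows from $|A-B|_{TV}\geq\big||A|-|B|\big|$ and shows that any marginal mass exceeding $\mu$ (resp.\ $\nu$) is genuinely penalized. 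Combined with $C,L\geq0$, this makes excising excess marginal mass favorable both for the transport terms (the excised plan is dominated by $\gamma^{\ast}$) and, in isolation, for the corresponding penalty.

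The main obstacle is that the excision cannot be performed one marginal at a time. Because the two marginals carry the same total mass, $|\gamma_1|=|\gamma_2|=|\gamma|$, and the penalty is \emph{quadratic} in the marginals (the product measures $\gamma_i^{\otimes2}$), removing mass to cap $\gamma_1$ necessarily shrinks $\gamma_2$; if $\gamma_2$ already lies strictly below $\nu$, a naive cap can raise the second penalty by more than it lowers the first, which is precisely why one cannot prove the reverse inequality by dominating an \emph{arbitrary} $\gamma$ with a capped element of $\Gamma_{\leq}$. Resolving this requires using the \emph{optimality} of $\gamma^{\ast}$ rather than a pointwise construction: I would argue that at a minimizer the mass cannot be arranged to leave simultaneous excess in one marginal and slack in the other, by testing $J$ against infinitesimal removals of mass over the excess set and showing the one-sided directional derivative is nonpositive, which together with the lower bound above forces $\gamma^{\ast}\in\Gamma_{\leq}$. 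This is the total-variation/partial analogue of the relaxation argument of \cite[Proposition~4]{sejourne2021unbalanced}, and it is the only step demanding real work; the rest is bookkeeping.
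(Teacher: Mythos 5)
Your handling of the routine parts is fine: identifying $FUGW_{r,L,\lambda}$ with \eqref{eq:fpgw_orig} when the divergences are total variation, the trivial inequality $\inf_{\mathcal{M}_+}J\leq\inf_{\Gamma_\leq}J$, and the simplification of the penalty to $\lambda(|\mu|^2+|\nu|^2-2|\gamma|^2)$ on $\Gamma_\leq(\mu,\nu)$ (via $\gamma_1\leq\mu\Rightarrow\gamma_1^{\otimes2}\leq\mu^{\otimes2}$ and $|\gamma_1|=|\gamma_2|=|\gamma|$) all check out. The gap is in the one step you yourself flag as demanding real work. You propose to take a minimizer $\gamma^{\ast}$ over $\mathcal{M}_+(X\times Y)$ and deduce $\gamma^{\ast}\in\Gamma_\leq(\mu,\nu)$ from a first-order optimality condition, but you never compute the directional derivative, and the very obstruction you raise against the capping construction reappears there unchanged: if $\eta\leq\gamma^{\ast}$ is the portion of $\gamma^{\ast}$ sitting over the set where $\gamma_1^{\ast}$ exceeds $\mu$, then along $\gamma^{\ast}-t\eta$ the first penalty decreases at some rate while, whenever $\gamma_2^{\ast}$ still has slack below $\nu$, the second penalty \emph{increases} at rate $2\lambda|\gamma^{\ast}|\,|\eta|$ (in that regime it equals $|\nu|^2-|\gamma|^2$), and nothing in your sketch shows which rate wins. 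Saying that the one-sided derivative "forces $\gamma^{\ast}\in\Gamma_\leq$" restates the claim rather than proving it, so as written the argument does not close.

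For comparison, the paper does not argue at a minimizer at all. It takes an \emph{arbitrary} $\gamma\in\mathcal{M}_+(X\times Y)$, disintegrates it over its first marginal, forms $\gamma'=\gamma_{Y|X}(\cdot\mid\cdot)\,(\gamma_1\wedge\mu)$ so that $\gamma'\leq\gamma$ and $\gamma_1'=\gamma_1\wedge\mu\leq\mu$, and invokes Proposition L.2 of \cite{bai2024efficient} for the inequality that the quadratic term plus both TV penalties do not increase under this replacement; the fused term is then handled simply by $C\geq0$ together with $\gamma'\leq\gamma$, and the same capping is applied once more to bring the second marginal below $\nu$. So the quantitative trade-off between the two penalty terms that you correctly identify as the crux is, in the paper, discharged by citation to the unfused PGW reduction rather than by a variational argument. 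Your route would need either that lemma or a completed derivative computation; without one of them the reverse inequality is not established.
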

\begin{proof}
Choose $\gamma\in\mathcal{M}_+(X\times Y)$ and let $\gamma_{Y|X}(\cdot|x)$ denote the conditional measure of $\gamma$ given $x\in X$. Let $\gamma'=\gamma_{Y|X}(\cdot|X)\cdot (\gamma_1\wedge\mu)$, by proposition L.2, we have 
$\gamma'\leq \gamma,\gamma'_1\leq \mu$, and 
$$\omega_2\langle\|d_X-d_Y\|^p,\gamma{'}^{\otimes 2}\rangle+\lambda(\|\mu-\gamma_1'\|_{TV}+\|\nu-\gamma_2'\|_{TV})\leq \omega_2\langle\|d_X-d_Y\|^p,\gamma^{\otimes 2}\rangle+\lambda(\|\mu-\gamma_1\|_{TV}+\|\nu-\gamma_2\|_{TV}).$$

Since $C\ge 0, \gamma'\leq \gamma$, we have 
$\langle C, \gamma'\rangle \leq \langle C, \gamma\rangle$. Therefore, we have 
\begin{align}
&\omega_1 \langle C,\gamma'\rangle+
\omega_2\langle\|d_X-d_Y\|^p,\gamma{'}^{\otimes 2}\rangle+\lambda(\|\mu-\gamma_1'\|_{TV}+\|\nu-\gamma_2'\|_{TV})\nonumber\\
&\leq \omega_1 \langle C,\gamma\rangle+
\omega_2\langle\|d_X-d_Y\|^p,\gamma^{\otimes 2}\rangle+\lambda(\|\mu-\gamma_1\|_{TV}+\|\nu-\gamma_2\|_{TV})\nonumber.
\end{align}
That is, based on $\gamma$, we can construct $\gamma'$ such that $\gamma'_1\leq \mu\wedge \gamma_1,\gamma'\leq \gamma$ and the $\gamma'$ admits smaller cost.
Based on the same process, based on $\gamma'$, we can construct  $ \gamma''$ such that  $\gamma''\leq \gamma'$, $\gamma''\leq \nu$. That is $\gamma''\in\Gamma_\leq(\mu,\nu)$ and admits smaller cost than $\gamma'$. Therefore, we can restrict the searching space for \eqref{eq:fugw} in this case from $\mathcal{M}_+(X\times Y)$ to $\Gamma_\leq(X\times Y)$. 
\end{proof}

 \begin{figure}
    \centering
\includegraphics[width=0.7\linewidth]{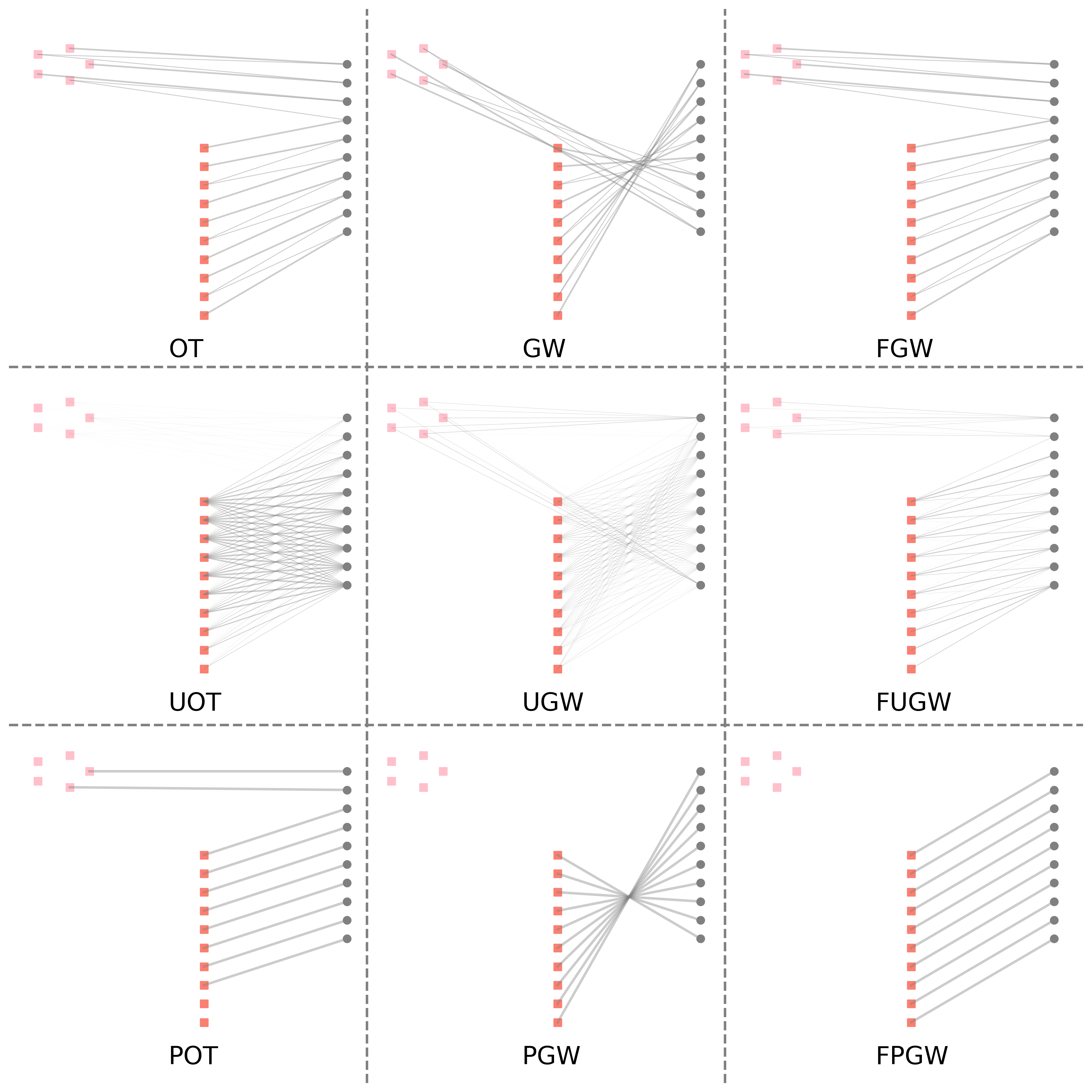}
    \vspace{-0.2in}
    \caption{In this toy example, we illustrate the relationships and differences between OT, Unbalanced OT, Partial OT, GW, Unbalanced GW, Partial GW, fused GW, fused unbalanced GW, and Fused Partial GW. The source shape consists of the union of the pink and red points, and the target shape is the grey shape. 
    The objective is to establish a reasonable matching between the two shapes. }
    \label{fig:toy_example}
\end{figure}

\begin{remark}
In Figure~\ref{fig:toy_example}, we illustrate the transportation plans of various methods, including optimal transport (OT), unbalanced optimal transport (UOT), partial optimal transport (POT), $\ldots$ and fused-Partial Gromov-Wasserstein (FPGW). 

Due to the balanced mass setting, OT, GW, and fused GW match all points from the source shape to the target shape. In contrast, the Sinkhorn entropic regularization in UOT, UGW, and FUGW allows for mass splitting, resulting in a small amount of mass between the grey shape and the pink points. PGW and fused PGW achieve the desired one-to-one matching, where the straight-line segment in the source shape (red points) corresponds to the straight-line segment in the target shape (grey points). However, since PGW does not account for the spatial location of points, there is a 50\% chance of obtaining ``anti-identity'' matching as demonstrated in Figure~\ref{fig:toy_example}.
\end{remark}

\subsection{Existence of minimizes of Fused PGW.}
In this section, we discuss the basic properties of the fused PGW. 

\begin{proposition}\label{pro:fpgw_minimizer}
Given mm-spaces $\mathbb{X}=(X,d_X,\mu),\mathbb{Y}=(Y,d_Y,\nu)$, where $X,Y$ are compact sets, the fused-PGW problems \eqref{eq:fpgw} and \eqref{eq:fmpgw} admit minimizers. That is, we can replace $\inf$ by $\min$ in the formulations.  
\end{proposition}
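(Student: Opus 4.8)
The plan is to apply the direct method of the calculus of variations, exactly as for the balanced GW and FGW problems whose existence was already asserted: I would exhibit the feasible set as a weak-$*$ compact subset of $\mathcal{M}_+(X\times Y)$ and show the objective is weak-$*$ lower semicontinuous, so that Weierstrass's theorem forces the infimum to be attained. Since $X$ and $Y$ are compact, $X\times Y$ is compact, and any family of non-negative Radon measures with uniformly bounded total mass is relatively sequentially compact for the weak-$*$ topology (Banach--Alaoglu / Prokhorov); moreover that topology is metrizable on bounded sets, so compactness and sequential compactness coincide.

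First I would check that the feasible sets are non-empty and weak-$*$ closed. The zero measure lies in $\Gamma_\leq(\mu,\nu)$, and for $\rho\in[0,\min(|\mu|,|\nu|)]$ the measure $\tfrac{\rho}{|\mu||\nu|}\,\mu\otimes\nu$ lies in $\Gamma_\leq^\rho(\mu,\nu)$, so both sets are non-empty. The constraints $\gamma_1\le\mu$ and $\gamma_2\le\nu$ force $|\gamma|\le\min(|\mu|,|\nu|)$, giving a uniform mass bound; these constraints pass to weak-$*$ limits by testing against non-negative continuous functions. For the mass-constrained set, the total-mass map $\gamma\mapsto|\gamma|=\gamma(X\times Y)$ is weak-$*$ continuous because the constant $1$ is continuous on the compact space $X\times Y$. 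Hence both $\Gamma_\leq(\mu,\nu)$ and $\Gamma_\leq^\rho(\mu,\nu)$ are weak-$*$ closed, and therefore compact.

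Next I would establish lower semicontinuity of the objective along a weak-$*$ convergent sequence $\gamma_n\rightharpoonup\gamma$. The linear feature term $\omega_1\gamma(C)$ is lower semicontinuous since $C$ is non-negative and lower semicontinuous, giving $\liminf_n\int C\,d\gamma_n\ge\int C\,d\gamma$. For the quadratic term the key step is that $\gamma_n^{\otimes2}\rightharpoonup\gamma^{\otimes2}$ on $(X\times Y)^2$: testing against a product of continuous functions factorizes the integral, and the uniform mass bound together with Stone--Weierstrass extends this to all continuous test functions. As $L(d_X^r,d_Y^r)$ is non-negative and lower semicontinuous, this yields $\liminf_n\gamma_n^{\otimes2}(L(d_X^r,d_Y^r))\ge\gamma^{\otimes2}(L(d_X^r,d_Y^r))$. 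Finally, writing the shifted integrand in \eqref{eq:fpgw} as $\gamma^{\otimes2}(L(d_X^r,d_Y^r)-2\lambda)=\gamma^{\otimes2}(L(d_X^r,d_Y^r))-2\lambda|\gamma|^2$, the extra piece $-2\lambda|\gamma|^2$ is weak-$*$ continuous because $|\gamma_n|\to|\gamma|$, so it preserves lower semicontinuity. For \eqref{eq:fmpgw} the integrand is already non-negative and no such splitting is needed.

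The main obstacle is the quadratic Gromov--Wasserstein term: one must upgrade weak-$*$ convergence of $\gamma_n$ to weak-$*$ convergence of the products $\gamma_n^{\otimes2}$, and then cope with the shifted cost $L-2\lambda$ being possibly negative, where naive lower semicontinuity of $\gamma\mapsto\gamma^{\otimes2}(\cdot)$ would fail. The resolution is precisely the decomposition above, which isolates the potentially-negative contribution as the continuous functional $-2\lambda|\gamma|^2$ while keeping the genuinely lower semicontinuous part $\gamma^{\otimes2}(L(d_X^r,d_Y^r))$ intact. With compactness of the feasible set and lower semicontinuity of the objective in hand, Weierstrass's theorem delivers a minimizer, so $\inf$ may be replaced by $\min$ in both \eqref{eq:fpgw} and \eqref{eq:fmpgw}.
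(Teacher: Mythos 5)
Your proof is correct, and it reaches the same overall structure as the paper's (weak-$*$ compactness of $\Gamma_\leq(\mu,\nu)$ and $\Gamma_\leq^\rho(\mu,\nu)$, then Weierstrass), but the key analytic step differs. The paper does not argue via lower semicontinuity at all: it invokes a Lipschitz-continuity lemma for the map $((x^1,y^1),(x^2,y^2))\mapsto |d_X^r(x^1,x^2)-d_Y^r(y^1,y^2)|^q$ on the compact product space (Lemma C.1 of \cite{bai2024efficient}) together with a convergence result $\langle\phi,(\gamma^n)^{\otimes2}\rangle\to\langle\phi,\gamma^{\otimes2}\rangle$ (Lemma C.3 there), so the full objective is weak-$*$ \emph{continuous} along convergent sequences and the sign of the shifted integrand $L-2\lambda$ is irrelevant. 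You instead assume only that $C$ and $L$ are non-negative and lower semicontinuous, prove $\gamma_n^{\otimes2}\rightharpoonup\gamma^{\otimes2}$ by hand via product test functions and Stone--Weierstrass, and then must explicitly split off the potentially negative part as $\gamma^{\otimes2}(L-2\lambda)=\gamma^{\otimes2}(L)-2\lambda|\gamma|^2$, observing that $-2\lambda|\gamma|^2$ is weak-$*$ continuous. Your route buys generality: it covers merely lower semicontinuous feature costs $C$ (the paper's Lemma~\ref{lem:inf} assumes $C$ bounded but then asserts $\langle C,\gamma^n\rangle\to\langle C,\gamma\rangle$ ``by definition of weak convergence'', which tacitly uses continuity of $C$), whereas the paper's route is shorter given that the Lipschitz machinery for the quadratic term was already established in prior work. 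Both arguments are sound for the setting of the proposition.
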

Note, this proposition is equivalent to the statement (2) in Theorem \ref{thm:main}.

We first introduce the following statements: 
\begin{proposition}\label{pro:compact}
The set $\Gamma_\leq(\mu,\nu)$ and $\Gamma_\leq^\rho(\mu,\nu)$ are weakly compact, convex sets, where $\rho\in[0,\min(|\mu|,|\nu|)]$.  
\end{proposition}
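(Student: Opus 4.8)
The plan is to establish the two assertions separately: convexity follows directly from the affine nature of the defining constraints, while weak-$*$ compactness will follow from the Banach--Alaoglu theorem once I verify that the constraints are preserved under weak-$*$ limits. Throughout I would work in the narrow (weak-$*$) topology on $\mathcal{M}_+(X\times Y)$, which is the correct notion here because $X\times Y$ is compact and the space of finite signed Radon measures is the dual of the space $C(X\times Y)$ of continuous functions.

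First I would dispatch convexity. For $\gamma,\gamma'\in\Gamma_\leq(\mu,\nu)$ and $t\in[0,1]$, the measure $t\gamma+(1-t)\gamma'$ is again nonnegative, and since the marginal map $\gamma\mapsto\gamma_i$ is linear, $(t\gamma+(1-t)\gamma')_1=t\gamma_1+(1-t)\gamma'_1\le t\mu+(1-t)\mu=\mu$, with the analogous bound on the second marginal; hence $\Gamma_\leq(\mu,\nu)$ is convex. The extra constraint $|\gamma|=\rho$ defining $\Gamma_\leq^\rho$ is affine in $\gamma$, because the total-mass functional $\gamma\mapsto|\gamma|$ is linear, so $\Gamma_\leq^\rho(\mu,\nu)$ is convex as well.

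For compactness I would proceed in two steps. For boundedness, any $\gamma\in\Gamma_\leq(\mu,\nu)$ satisfies $|\gamma|=\gamma_1(X)\le\mu(X)=|\mu|$, and likewise $|\gamma|\le|\nu|$, so the whole family lies in the mass ball of radius $\min(|\mu|,|\nu|)$, whose weak-$*$ closure is compact by Banach--Alaoglu (tightness is automatic since the ambient space is compact). It then suffices to show the sets are weak-$*$ closed, since a closed subset of a compact set is compact; concretely I would show that each of $\gamma\ge0$, $\gamma_1\le\mu$, $\gamma_2\le\nu$ (and $|\gamma|=\rho$ for $\Gamma_\leq^\rho$) cuts out a weak-$*$ closed set, and intersect.

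The heart of the argument, and the step I expect to require the most care, is the closedness of the domination constraint $\gamma_1\le\mu$. My plan is to use the characterization that, for nonnegative Radon measures, $\gamma_1\le\mu$ holds if and only if $\int_X g\,d\gamma_1\le\int_X g\,d\mu$ for every nonnegative $g\in C(X)$; the forward implication is immediate, and the converse follows by approximating indicators of closed sets from above by continuous functions and invoking outer regularity of Radon measures. Granting this, for fixed nonnegative $g$ the map $\gamma\mapsto\int_X g\,d\gamma_1=\int_{X\times Y}g(x)\,d\gamma(x,y)$ is weak-$*$ continuous, because $(x,y)\mapsto g(x)$ belongs to $C(X\times Y)$; hence $\{\gamma:\int g\,d\gamma_1\le\int g\,d\mu\}$ is closed, and $\{\gamma:\gamma_1\le\mu\}$, being the intersection over all such $g$ of closed sets, is closed. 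The constraints $\gamma_2\le\nu$ and $\gamma\ge0$ are handled identically, and $|\gamma|=\rho$ is closed because the constant $1$ is continuous on the compact set $X\times Y$, so $\gamma\mapsto|\gamma|$ is weak-$*$ continuous. Assembling these, $\Gamma_\leq(\mu,\nu)$ and its mass-constrained slice $\Gamma_\leq^\rho(\mu,\nu)$ are bounded, weak-$*$ closed, and convex, hence weak-$*$ compact and convex, which completes the proof.
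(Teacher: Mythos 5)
Your proof is correct. The route differs from the paper's mainly in self-containment: the paper disposes of the convexity and weak compactness of $\Gamma_\leq(\mu,\nu)$ entirely by citation (a lemma of Liu et al.\ for compactness and earlier partial-OT references for convexity), and only argues the mass-constrained slice $\Gamma_\leq^\rho(\mu,\nu)$ directly --- convexity via linearity of $\gamma\mapsto|\gamma|$, and compactness by extracting a weakly convergent subsequence inside the already-compact $\Gamma_\leq(\mu,\nu)$ and passing to the limit in $|\gamma^{n_k}|$ against the constant test function $1$. You instead prove the base case from scratch: Banach--Alaoglu gives compactness of the mass ball, and you check that each constraint ($\gamma\ge 0$, $\gamma_1\le\mu$, $\gamma_2\le\nu$, $|\gamma|=\rho$) is weak-$*$ closed, the one nontrivial point being that measure domination is detectable by nonnegative continuous test functions, which you correctly identify as the delicate step and sketch correctly (outer regularity plus approximation of indicators of closed sets from above). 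Your treatment of the $\rho$-slice (``closed subset of a compact set'') is just the topological counterpart of the paper's sequential argument, so the two proofs coincide there in substance. What your route buys is a complete, citation-free argument; what it costs is the extra duality lemma. One cosmetic observation: the paper's own proof of the $\rho$-case actually verifies $|\gamma|\ge\rho$ rather than $|\gamma|=\rho$ (reflecting an inconsistency between the definition in the main text and the discrete one in the appendix); your argument works verbatim for either version, since a level set and a superlevel set of the weak-$*$ continuous linear functional $\gamma\mapsto|\gamma|$ are both closed and convex.
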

\begin{proof}
By \cite{liu2023ptlp} Lemma B.1, we have $\Gamma_\leq(\mu,\nu)$ is a weakly compact set. By e.g. \cite{caffarelli2010free,bai2022sliced}, we have $\Gamma_\leq(\mu,\nu)$ is convex. 

It remains to show the compactness and convexity of set $\Gamma^\rho_\leq(\mu,\nu)$. 

Pick  $\gamma^1,\gamma^2\in\Gamma^\rho_\leq(\mu,\nu)$ and $\omega\in[0,1]$ and let $\gamma=(1-\omega)\gamma^1+\omega\gamma^2$.
We have $\gamma\in\Gamma_\leq(\mu,\nu)$ by the convexity of $\Gamma_\leq(\mu,\nu)$. In addition, $$|(1-\omega)\gamma^1+\omega\gamma^2|=(1-\omega)|\gamma^1|+\omega|\gamma^2|\ge (1-\omega)\rho+\omega\rho=\rho,$$ 
thus we have: $\gamma\in\Gamma_\leq^\rho(\mu,\nu)$. 

Pick sequence $(\gamma^n)_{n=1}^\infty\subset \Gamma^\rho_\leq(\mu,\nu)\subset\Gamma_\leq(\mu,\nu)$. By the compactness of $\Gamma_\leq(\mu,\nu)$, there exists a subsequence $(\gamma^{n_k})_{k=1}^\infty$ that converges to some $\gamma^*\in\Gamma_\leq(\mu,\nu)$ weakly. It remains to show $\gamma^*\in\Gamma_\leq^\rho(\mu,\nu)$. 
Indeed, 
$$|\gamma^*|=\langle 1_{X\times Y}\times \gamma\rangle =\lim_{k\to\infty}\langle 1_{X\times Y},\gamma^{n_k} \rangle=\lim_{k\to\infty}|\gamma^{n_k}|.$$
Since for each $k\ge 1$, we have $|\gamma^{n_k}|\ge \rho$, thus $|\gamma^*|\ge \rho$. That is $\gamma^*\in\Gamma_\leq^\rho(\mu,\nu)$, and we complete the proof. 
\end{proof}

\begin{lemma}\label{lem:inf}
Suppose $X, Y$ are compact sets. Let $Z=(X\times Y)$ and $d_Z((x^1,y^1),(x^2,y^2))=d_X(x^1,x^2)+d_Y(y^1,y^2)$. Suppose $\phi: Z^2\to \mathbb{R}$ is Lipschitz continuous and $C: Z\to \mathbb{R}$ is bounded, then the following problem admits a minimizer: 
\begin{align}
\inf_{\gamma\in\mathcal{M}}\langle \phi,\gamma^{\otimes2} \rangle+\langle C,\gamma \rangle \label{eq:inf_prob}
\end{align}
where $\mathcal{M}$ is a non-empty weakly compact set. 
\end{lemma}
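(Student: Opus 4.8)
The plan is to apply the direct method of the calculus of variations. Write $J(\gamma) := \langle \phi, \gamma^{\otimes 2}\rangle + \langle C, \gamma\rangle$ and $J^* := \inf_{\gamma\in\mathcal{M}} J(\gamma)$. Since $Z = X \times Y$ is compact, the constant function $1$ lies in $C(Z)$, so $\gamma \mapsto |\gamma| = \langle 1, \gamma\rangle$ is weakly continuous; as $\mathcal{M}$ is weakly compact this yields a uniform mass bound $M_0 := \sup_{\gamma\in\mathcal{M}} |\gamma| < \infty$, and because $\phi$ and $C$ are bounded, $J$ is bounded on $\mathcal{M}$, so $J^*$ is finite. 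I would then pick a minimizing sequence $(\gamma^n) \subset \mathcal{M}$ with $J(\gamma^n) \to J^*$. Because $Z$ is compact, the weak-$*$ topology on the bounded set $\mathcal{M} \subset \mathcal{M}_+(Z)$ is metrizable, so weak compactness gives a subsequence $\gamma^{n_k} \rightharpoonup \gamma^*$ with $\gamma^* \in \mathcal{M}$ (a weakly compact set is weakly closed). It then suffices to prove $J(\gamma^{n_k}) \to J(\gamma^*)$, which combined with $\gamma^* \in \mathcal{M}$ forces $J(\gamma^*) = J^*$ and exhibits a minimizer.

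The linear term is immediate: since $C$ is a (bounded) continuous function on the compact set $Z$, the map $\gamma \mapsto \langle C, \gamma\rangle$ is weakly continuous by the very definition of weak convergence, so $\langle C, \gamma^{n_k}\rangle \to \langle C, \gamma^*\rangle$. If one only wishes to assume $C$ bounded and lower semicontinuous, this step weakens to weak lower semicontinuity, which still suffices for the argument.

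The heart of the proof is the quadratic term, where I must show that weak convergence is preserved under forming products, i.e. $\gamma^{n_k} \otimes \gamma^{n_k} \rightharpoonup \gamma^* \otimes \gamma^*$ on $Z^2$. I would first verify this on tensor-product test functions: for $f, g \in C(Z)$, Fubini gives $\langle f \otimes g, \gamma^{\otimes 2}\rangle = \langle f, \gamma\rangle\,\langle g, \gamma\rangle$, hence $\langle f\otimes g, (\gamma^{n_k})^{\otimes 2}\rangle = \langle f,\gamma^{n_k}\rangle\,\langle g,\gamma^{n_k}\rangle \to \langle f,\gamma^*\rangle\,\langle g,\gamma^*\rangle = \langle f\otimes g, (\gamma^*)^{\otimes 2}\rangle$. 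I would then extend to an arbitrary $\phi \in C(Z^2)$, and in particular to our Lipschitz $\phi$, by approximation: by Stone--Weierstrass the span of tensor products is dense in $C(Z^2)$ (as $Z^2$ is compact), so for $\varepsilon > 0$ pick $\psi = \sum_i f_i \otimes g_i$ with $\|\phi - \psi\|_\infty < \varepsilon$. Splitting $\langle \phi, (\gamma^{n_k})^{\otimes 2}\rangle - \langle \phi, (\gamma^*)^{\otimes 2}\rangle$ through $\psi$ and bounding the two tail terms by $\varepsilon M_0^2$ and $\varepsilon |\gamma^*|^2$ while the middle term vanishes by the tensor case gives $\limsup_k |\langle \phi, (\gamma^{n_k})^{\otimes 2}\rangle - \langle \phi, (\gamma^*)^{\otimes 2}\rangle| \le \varepsilon (M_0^2 + |\gamma^*|^2)$; letting $\varepsilon \to 0$ yields $\langle \phi, (\gamma^{n_k})^{\otimes 2}\rangle \to \langle \phi, (\gamma^*)^{\otimes 2}\rangle$.

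Combining the two parts gives $J(\gamma^{n_k}) \to J(\gamma^*)$, so $J(\gamma^*) = J^*$ and $\gamma^*$ is the desired minimizer. The main obstacle is exactly the product-measure step: continuity of $\phi$ (here furnished by the Lipschitz hypothesis) is essential, since weak convergence is tested only against continuous functions, and the uniform mass bound $M_0$ coming from weak compactness is what keeps the Stone--Weierstrass approximation error controlled. Without compactness of $Z$ — hence without both the density of tensor products and the finite mass bound — the quadratic functional could fail to be weakly continuous, so the compactness assumptions are used in an essential way.
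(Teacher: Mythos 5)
Your argument is correct and follows the same skeleton as the paper's proof: the direct method, extracting a weakly convergent subsequence of a minimizing sequence from the weak compactness of $\mathcal{M}$, and concluding via weak continuity of the objective. The one substantive difference is how the key step is handled: the paper simply cites Lemma~C.3 of Bai et al.\ for the convergence $\langle \phi,(\gamma^{n_k})^{\otimes 2}\rangle\to\langle\phi,(\gamma^*)^{\otimes 2}\rangle$, whereas you prove it from scratch --- first on tensor-product test functions $f\otimes g$ (where it reduces to the product of two weakly convergent scalar sequences) and then for general continuous $\phi$ on the compact set $Z^2$ by Stone--Weierstrass, using the uniform mass bound $M_0=\sup_{\gamma\in\mathcal{M}}|\gamma|<\infty$ to control the approximation error. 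This makes your write-up self-contained and also clarifies exactly where compactness of $X\times Y$ and boundedness of the masses enter. You are also right to flag the hypothesis on $C$: the statement only assumes $C$ bounded, but both your argument and the paper's implicitly test weak convergence against $C$, which requires $C$ continuous (or at least lower semicontinuous, in which case one settles for weak lower semicontinuity of the linear term, which still suffices); for a merely bounded measurable $C$ the claimed weak continuity would fail, so your explicit remark is a genuine improvement in precision over the paper's one-line treatment.
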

\begin{proof}
Pick weakly convergent $(\gamma^n)_{n=1}^\infty\subset \mathcal{M}$, we have
$\gamma^n\overset{w}{\rightharpoonup}\gamma^*\in\mathcal{M}$.

From lemma C.3 in \cite{bai2024efficient}, we have $$\langle \phi,(\gamma^{n})^{\otimes2}\rangle\to \langle \phi,\gamma^* \rangle. $$
By definition of weak convergence and the fact that the sets $X, Y$ are compact, we have 
\begin{align}
  \langle C,\gamma^n \rangle\to \langle C,\gamma \rangle.\label{pf:convergence}  
\end{align}
Thus, we have 
$$\langle \phi,(\gamma^n)^{\otimes2} \rangle+\langle C,\gamma^n\rangle\to \langle \phi,\gamma^{\otimes2} \rangle+\langle C,\gamma \rangle.$$

Choose a sequence $\gamma^n\in \mathcal{M}$ such that 
$\langle\phi,(\gamma^n)^{\otimes2} \rangle+\langle C,\gamma^n \rangle$ achieves the infimum of the problem \eqref{eq:inf_prob}. By compactness of $\mathcal{M}$, there exists a convergent subsequence $\gamma^{(n_k)}\overset{w}{\rightharpoonup}\gamma^*\in\mathcal{M}$.  By \eqref{pf:convergence}, we have $\gamma^*$ is a minimizer of \eqref{eq:inf_prob}. Thus, we complete the proof.  
\end{proof}

\begin{proof}[Proof of Proposition \ref{pro:fpgw_minimizer}.]
From Lemma C.1. in \cite{bai2024efficient}, we have $$(X\times Y)^2\ni ((x^1,y^1),(x^2,y^2))\to |d_X^r(x^1,x^2)-d_Y^r(y^1,y^2)|^q$$ is Lipschitz continuous. In addition, $C: X\times Y$ is a bounded mapping. From lemma \ref{pro:compact}, we have $\Gamma_\leq(\mu,\nu),\Gamma_\leq^\rho(\mu,\nu)$ are compact. From lemma \ref{lem:inf}, we complete the proof.  
\end{proof}

\section{Metric property of Fused PGW}
In this section, we discuss the proof of Theorem  \ref{thm:main} (3).
We will discuss the details in the following subsections. The related conclusion can be treated as an extension of Theorem 6.3 in \cite{vay2019fgw} and Theorem 3.1 in \cite{vayer2020fused}.
\subsection{Background: Isometry, fused-GW semi-metric.}

Given $\mathbb{X}=(X,d_X,\mu),\mathbb{Y}=(Y,d_Y,\nu)$, we say $X,Y$ are equivalent, noted $X\sim Y$ if and only if: 

There exists a mapping $\phi: X\to Y$ such that
\begin{itemize} 
    \item $\phi_\#\mu=\nu$
    \item $d_X(x,x')=d_Y(\phi(x),\phi(x')),\forall x,x'\in X$. 
\end{itemize}
Such a function $\phi$ is called \textbf{measure preserving isometry}.

In addition, in the formulation of fused-GW \eqref{eq:fpgw}, we set $d(x,y)=\|x-y\|^q$ and $L(\mathrm{r}_1,\mathrm{r}_2)=|\mathrm{r}_1-\mathrm{r}_2|^q$. The reduced formulation is called ``fused-Gromov Wasserstein distance'' \cite{vayer2020fused}:

\begin{align}
d_{FGW,r,q}(\mathbb{X},\mathbb{Y}):=
\inf_{\gamma\in\Gamma(\mu,\nu)}
\int_{(X\times Y)^2}\omega_1\frac{\|x-y\|^q}{|\gamma|}+\omega_2|d_X^r(x,x')-d_Y^r(y,y')|^q d\gamma(x,y)d\gamma(x',y').
\label{eq:fgw_metric}
\end{align}
and it defines a metric where the identity is induced by the above equivalence relation. 

Inspired by this formulation, we introduce the \textbf{Fused Partial GW metric}: 
{\footnotesize
\begin{align}
d_{FPGW,r,q,\lambda}^p(\mathbb{X},\mathbb{Y}):=
\inf_{\gamma\in\Gamma_\leq(\mu,\nu)}
\int_{(X\times Y)^2}\omega_1\frac{\|x-y\|^q}{|\gamma|}+\omega_2|d_X^r(x,x')-d_Y^r(y,y')|^q+\lambda\left(\frac{|\mu|^2}{|\gamma|^2}+\frac{|\nu|^2}{|\gamma|^2}-2\right) d\gamma(x,y)d\gamma(x',y').
\label{eq:fpgw_metric}
\end{align}
}

\begin{remark}
We adapt the convention $\frac{1}{0}\cdot0=1$, thus the above formulation is
well-defined. In particular, when $|\gamma|=0$, i.e., $\gamma$ is zero measure,
the above integration is defined by 
\begin{align}
&\int_{(X\times Y)^2}\omega_1\frac{\|x-y\|^q}{|\gamma|}+\omega_2|d_X^r(x,x')-d_Y^r(y,y')|^q+\lambda\left(\frac{|\mu|^2}{|\gamma|^2}+\frac{|\nu|^2}{|\gamma|^2}-2\right) d\gamma(x,y)d\gamma(x',y')\nonumber\\
&=\lambda(|\mu|^2+|\nu|^2)\label{eq:fpgw_gamma0}\\
&=\lim_{|\gamma|\searrow 0}\int_{(X\times Y)^2}\omega_1\frac{\|x-y\|^q}{|\gamma|}+\omega_2|d_X^r(x,x')-d_Y^r(y,y')|^q+\lambda\left(\frac{|\mu|^2}{|\gamma|^2}+\frac{|\nu|^2}{|\gamma|^2}-2\right) d\gamma(x,y)d\gamma(x',y').\nonumber
\end{align}
\end{remark}
\begin{remark}
By Proposition \ref{pro:fpgw_minimizer}, the above problem admits a minimizer. 
\end{remark}

Next, we introduce the formal statement of Theorem \ref{thm:main} (2). 

\begin{theorem}\label{thm:metric_formal}
Define space for mm-spaces, 
$$\mathcal{G}=\{\mathbb{X}=(X,d_X,\mu), X\subset\mathbb{R}^d, X \text{ is compact}; d_X \text{ is a metric}; \mu\in\mathcal{M}_+(X) \}.$$
 
Then Fused PGW \eqref{eq:fpgw_metric}  defines a semi-metric in quotient space $\mathcal{G}/\sim$. In particular: 
\begin{enumerate}
    \item $d_{FPGW,r,q,\lambda}(\cdot,\cdot)$ is non-negative and symmetric. 
   
    \item  Suppose $\omega_2,\lambda>0$, then 
    $d_{FPGW,r,q,\lambda}(\mathbb{X},\mathbb{Y})=0$ iff $\mathbb{X}\sim\mathbb{Y}$. 
    \item If $\omega_2>0$, for $q\ge 1$, we have  \begin{align}
     d_{FPGW,r,q,\lambda}(\mathbb{X},\mathbb{Y})\leq 2^{q-1}(d_{FPGW,r,q,\lambda}(\mathbb{X},\mathbb{Z})+d_{FPGW,r,q,\lambda}(\mathbb{Y},\mathbb{Z})).\label{eq:triangle_ineq}   
    \end{align}
    In particular, when $q=1$, fused-PGW satisfies the triangle inequality. 
\end{enumerate}
\end{theorem}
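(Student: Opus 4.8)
The plan is to verify the three claims in turn: (1) is routine; (2) combines the existence of minimizers (Proposition~\ref{pro:fpgw_minimizer}) with classical Gromov--Wasserstein rigidity; and (3) rests on a gluing argument in the spirit of \cite{vayer2020fused}, adapted to sub-couplings.

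For (1), non-negativity of $d_{FPGW,r,q,\lambda}$ follows by checking that the full integrand in \eqref{eq:fpgw_metric} is pointwise non-negative. The feature term $\omega_1\|x-y\|^q/|\gamma|$ and the structural term $\omega_2|d_X^r-d_Y^r|^q$ are manifestly non-negative, while for the mass term I would use that every $\gamma\in\Gamma_\leq(\mu,\nu)$ satisfies $|\gamma|\leq\min(|\mu|,|\nu|)\leq\sqrt{|\mu|\,|\nu|}$, so that $|\mu|^2+|\nu|^2\geq 2|\gamma|^2$ by AM--GM and hence $|\mu|^2/|\gamma|^2+|\nu|^2/|\gamma|^2-2\geq 0$. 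Symmetry is immediate: the transpose $\gamma\mapsto\gamma^\top$ is a bijection between $\Gamma_\leq(\mu,\nu)$ and $\Gamma_\leq(\nu,\mu)$ that preserves the objective, since every cost term is invariant under interchanging $(x,x')\leftrightarrow(y,y')$ and $|\mu|\leftrightarrow|\nu|$.

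For (2), in the direction $\mathbb{X}\sim\mathbb{Y}\Rightarrow d_{FPGW,r,q,\lambda}=0$ I would take the isometry $\phi$ witnessing the equivalence and plug in the graph coupling $\gamma=(\mathrm{id}\times\phi)_{\#}\mu\in\Gamma(\mu,\nu)\subset\Gamma_\leq(\mu,\nu)$: since $\phi_\#\mu=\nu$ forces $|\mu|=|\nu|=|\gamma|$, the mass term vanishes; the structural term vanishes because $d_X(x,x')=d_Y(\phi(x),\phi(x'))$; and the feature term vanishes because $\phi$ additionally preserves features (the feature-matching part of the fused equivalence). For the converse, I would invoke Proposition~\ref{pro:fpgw_minimizer} to select an actual minimizer $\gamma^\ast$; since the three non-negative terms sum to zero, each is zero. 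With $\lambda>0$ the identity $|\mu|^2+|\nu|^2=2|\gamma^\ast|^2$ together with $|\gamma^\ast|\leq\min(|\mu|,|\nu|)$ forces $|\mu|=|\nu|=|\gamma^\ast|$, whence $\gamma^\ast_1=\mu$, $\gamma^\ast_2=\nu$, i.e. $\gamma^\ast\in\Gamma(\mu,\nu)$. With $\omega_2>0$ the vanishing structural term gives $d_X(x,x')=d_Y(y,y')$ for $\gamma^{\ast\otimes2}$-a.e. pairs, which is exactly the statement that $\gamma^\ast$ realizes zero classical GW cost; the rigidity result quoted in the footnote to \eqref{eq:gw} then produces a measure-preserving isometry, and the vanishing feature term upgrades it to a feature-preserving one, giving $\mathbb{X}\sim\mathbb{Y}$.

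For (3), write $\mathbb{Z}=(Z,d_Z,\xi)$ and take near-optimal plans $\gamma^{12}\in\Gamma_\leq(\mu,\xi)$ for $(\mathbb{X},\mathbb{Z})$ and $\gamma^{23}\in\Gamma_\leq(\xi,\nu)$ for $(\mathbb{Z},\mathbb{Y})$. I would disintegrate both along the $Z$-coordinate, glue them into a measure $\pi$ on $X\times Z\times Y$, and let $\gamma^{13}\in\Gamma_\leq(\mu,\nu)$ be its $(X,Y)$-marginal. On $\gamma^{13}$ the two triangle inequalities $\|x-y\|\le\|x-z\|+\|z-y\|$ and $|d_X^r(x,x')-d_Y^r(y,y')|\le|d_X^r(x,x')-d_Z^r(z,z')|+|d_Z^r(z,z')-d_Y^r(y,y')|$, followed by the convexity bound $(a+b)^q\le 2^{q-1}(a^q+b^q)$ for $q\ge 1$, produce the factor $2^{q-1}$ and reduce the cost of $\gamma^{13}$ to those of $\gamma^{12}$ and $\gamma^{23}$; for $q=1$ this bound is an equality with constant $1$, yielding the genuine triangle inequality. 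The main obstacle is the gluing in the partial regime: unlike the balanced case, the intermediate $Z$-marginals $(\gamma^{12})_2$ and $(\gamma^{23})_1$ need not coincide, so I cannot glue against a single common measure. I would resolve this by first rescaling the two plans to a common total mass and gluing against the minimum of the two $Z$-marginals (equivalently, a common dominating disintegration), and then bounding the mass-penalty term of $\gamma^{13}$ by those of $\gamma^{12}$ and $\gamma^{23}$. This last step is the delicate one, and it is precisely where the normalizations $1/|\gamma|$ and $1/|\gamma|^2$ in \eqref{eq:fpgw_metric} are engineered to keep the mass bookkeeping consistent across the glued plan.
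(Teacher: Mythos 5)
Your treatments of (1) and (2) are essentially correct and follow the same route as the paper: non-negativity and symmetry by direct inspection (your AM--GM argument for the mass term is a worthwhile addition, since $|\gamma|\leq\min(|\mu|,|\nu|)$ does need to be invoked to see that $|\mu|^2+|\nu|^2-2|\gamma|^2\geq 0$), and for (2) the graph coupling $(\mathrm{id}\times\phi)_\#\mu$ in one direction, and in the other direction the minimizer from Proposition~\ref{pro:fpgw_minimizer}, the forced equality $|\mu|=|\nu|=|\gamma^\ast|$ from $\lambda>0$, and then the balanced fused-GW rigidity theorem.

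Part (3), however, contains a genuine gap, and it sits exactly where you flag it. You propose to glue the sub-couplings $\gamma^{12}\in\Gamma_\leq(\mu,\xi)$ and $\gamma^{23}\in\Gamma_\leq(\xi,\nu)$ directly on $X\times Z\times Y$ after ``rescaling to a common total mass and gluing against the minimum of the two $Z$-marginals,'' and then to ``bound the mass-penalty term of $\gamma^{13}$ by those of $\gamma^{12}$ and $\gamma^{23}$.'' Neither step is justified, and neither is routine: rescaling a plan changes the quadratic (structural) cost by the square of the scale factor, so it distorts the very quantities you need to compare; and after gluing against $(\gamma^{12})_2\wedge(\gamma^{23})_1$ the composed plan can lose a large amount of mass relative to both inputs, so the inequality $\lambda(|\mu|^2+|\nu|^2-2|\gamma^{13}|^2)\leq 2^{q-1}\sum\lambda(\cdots)$ does not follow from any pointwise estimate --- this is precisely the bookkeeping that fails for naive compositions of partial couplings. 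The paper avoids the issue entirely by a different mechanism: it augments $S$, $X$, $Y$ with auxiliary points $\hat\infty_0,\hat\infty_1,\hat\infty_2$ carrying the deficient mass, extends the distances by $\infty$ and the losses by the values $L^q_\lambda(\infty,\mathrm{r})=\lambda/\omega_2$ and $\hat D^q=0$ on auxiliary pairs, and proves in Proposition~\ref{pro:fpgw-fgw} that $d_{FPGW,r,q,\lambda}$ equals the \emph{balanced} surrogate \eqref{eq:fgw_hat_metric} on the augmented spaces. The standard gluing lemma (Lemma~\ref{lem:gamma_hat}) then applies to genuine couplings with a common middle marginal $\hat\sigma$, the mass penalties are automatically encoded in the extended cost, and the factor $2^{q-1}$ comes from \eqref{eq:ab_inq} as you anticipated. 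Your final remark that the $1/|\gamma|$ and $1/|\gamma|^2$ normalizations ``keep the mass bookkeeping consistent across the glued plan'' is not substantiated; their actual role is only to write the objective as an integral against $\gamma^{\otimes 2}$ so that the correspondence with the augmented balanced problem holds. To complete (3) you would either need to carry out the augmentation argument or supply a genuinely new proof of the mass-penalty bound for the directly glued plan, which your sketch does not provide.
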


\subsection{Proof of the (1)(2) in Theorem \ref{thm:metric_formal}}
For statement (1), by definition, for each $\gamma\in\Gamma_\leq(\mu,\nu)$, we have 
$$\int_{X\times Y} \|x-y\|^q d\gamma,\int_{(X\times Y)^2} \|d^r_X(x,x')-d^r_Y(y,y')\|^q d\gamma^{\otimes2}\ge 0.$$
Thus, $d_{FPGW,r,q,\lambda}(\mathbb{X},\mathbb{Y})\ge0$. 

In addition, 
\begin{align}
&d_{FPGW,r,q,\lambda}(\mathbb{X},\mathbb{Y})\nonumber\\ &=\inf_{\gamma\in\Gamma_\leq(\mu,\nu)}\int_{(X\times Y)^2}\omega_1\|x-y\|^q +\omega_2|d_X^r(x,x')-d_Y^r(y,y')|^q)^p d\gamma(x,y)d\gamma(x',y')+\lambda(|\mu|^2+|\nu|^2-2|\gamma|^2)\nonumber\\
&=\inf_{\gamma\in\Gamma_\leq(\nu,\mu)}\int_{(Y\times X)^2}\omega_1\|y-x\|^q +\omega_2|d_Y^r(y,y')-d_X^r(x,x')|^q)^p d\gamma(y,x)d\gamma(y',x')+\lambda(|\mu|^2+|\nu|^2-2|\gamma|^2)\nonumber\\
&=d_{FPGW,r,q,\lambda}(\mathbb{Y},\mathbb{X})\nonumber. 
\end{align}
And we prove the symmetric.

For statement (2), suppose measure preserving isometry $\phi: X\to Y$ exists, then we have $|\mu|=|\nu|$. Let $\gamma=(\text{id}\times\phi)_\#\mu$, we have $\gamma\in\Gamma(\mu,\nu)\subset \Gamma_\leq(\mu,\nu)$. Thus $|\gamma|=|\mu|=|\nu|$. 

Furthermore, 
\begin{align}
&d_{FPGW,r,q,\lambda}^p(\mathbb{X},\mathbb{Y})\nonumber\\ &\leq \int_{(X\times Y)^2}\omega_1\|x-y\|^q +\omega_2|d_X^r(x,x')-d_Y^r(y,y')|^q d\gamma(x,y)d\gamma(x',y')+\lambda(|\mu|^2+|\nu|^2-2|\gamma|^2)\nonumber\\
&=\int_{(X\times Y)^2}\omega_1\underbrace{\|x-\phi(x)\|^q}_{0} +\omega_2\underbrace{|d_X^r(x,x')-d_Y^r(\phi(x),\phi(x'))|^q}_{0} d\mu(x)d\mu(x')+\lambda(\underbrace{|\mu|^2+|\nu|^2-2|\gamma|^2}_0)\nonumber\\
&=0.
\end{align}

For the other direction, suppose 
$d_{FPGW,r,q,\lambda}(\mathbb{X},\mathbb{Y})=0$. We have two cases:

Case 1: $|\mu|=|\nu|=0$. We've done. 

Case 2: $|\mu|>0$ or  $|\nu|>0$. 

By the Proposition \ref{pro:fpgw_minimizer}, a minimizer for problem \eqref{eq:fpgw_metric} exists. Choose one minimizer, denoted as $\gamma^*$. 

First, we claim $|\mu|=|\nu|=|\gamma^*|$. 

Indeed, assume the above equation is not true. For convenience, we suppose $|\mu|<|\nu|$. Then $|\gamma|\leq |\mu|<|\nu|$. 
We have 
\begin{align}
0=d_{FPGW,r,q,\lambda}(\mathbb{X},\mathbb{Y})\ge \lambda(|\mu|^2+|\nu|^2-2|\gamma^*|^2)> \lambda(|\mu|^2-|\gamma^*|^2)\ge0, 
\end{align}
since $\lambda>0$. Thus we have contradiction. 

Since $\gamma^*\in\Gamma_\leq(\mu,\nu)$, we have $\gamma^*\in\Gamma(\mu,\nu)$. Thus, we have
\begin{align}
0&\leq d_{FPGW,r,q}(\mathbb{X},\mathbb{Y}) \nonumber\\
&\leq \int_{(X \times Y)^2} \omega_1 \frac{\|x-y\|^q}{|\gamma|} + \omega_2 |d_X^r(x,x') - d_Y^r(y,y')|^q \, d\gamma^*(x,y) d\gamma^*(x',y') \nonumber\\
&= \int_{(X \times Y)^2} \omega_1 \frac{\|x-y\|^q}{|\gamma|} + \omega_2 |d_X^r(x,x') - d_Y^r(y,y')|^q 
+ \lambda \left( \frac{|\mu|^2}{|\gamma|^2} + \frac{|\nu|^2}{|\gamma|^2} - 2 \right) 
\, d\gamma^*(x,y) d\gamma^*(x',y') \nonumber\\
&=d_{FPGW,r,q}(\mathbb{X},\mathbb{Y})\nonumber\\
&=0.\nonumber
\end{align}
That is, $d_{FPGW,r,q}(\mathbb{X},\mathbb{Y})=0$. Since $\omega_2>0$, by Proposition 5.2 in \cite{vayer2020fused} or Theorem 6.4 in \cite{vay2019fgw}, there exists measure preserving isometry $\phi: X\to Y$ and thus we have $\mathbb{X}\sim\mathbb{Y}$ and we complete the proof.

\subsection{Proof of statement (3) in Theorem \ref{thm:metric_formal}.}
Choose mm-spaces $\mathbb{S}=(S,d_S,\sigma),\mathbb{X}=(X,d_X,\mu),\mathbb{Y}=(Y,d_Y,\nu)$. In this section, we will prove the triangle inequality
$$d_{FPGW,r,q,\lambda}(\mathbb{X},\mathbb{Y})\leq d_{FPGW,r,q,\lambda}(\mathbb{S},\mathbb{X})+d_{FPGW,r,q,\lambda}(\mathbb{S},\mathbb{Y}).$$

First, introduce auxiliary points $\hat\infty_0,\hat\infty_1,\hat\infty_2$ and set 
\begin{align}
\begin{cases}
\hat{S}&=S\cup 
\{\hat\infty_0,\hat\infty_1,\hat\infty_2\},\nonumber\\
\hat{X}&=X\cup \{\hat\infty_0,\hat\infty_1,\hat\infty_2\},\nonumber \\
\hat{Y}&=Y\cup \{\hat\infty_0,\hat\infty_1,\hat\infty_2\}.\nonumber
\end{cases}
\end{align}
Define $\hat\sigma, \hat\mu,\hat\nu$ as follows: 
\begin{equation}
\begin{cases} 
\hat\sigma&=\sigma+|\mu|\delta_{\hat\infty_1}+|\nu|\delta_{\hat\infty_2},\\
\hat\mu&=\mu+|\sigma|\delta_{\hat\infty_0}+|\nu|\delta_{\hat\infty_2},\\
\hat\nu&=\nu+|\sigma|\delta_{\hat\infty_0}+|\mu|\delta_{\hat\infty_1}.
\end{cases}\label{pf:sigma_hat,mu_hat,nu_hat}    
\end{equation}

Next, we define $d_{\hat{S}}:\hat{S}^2\to \mathbb{R}\cup\{\infty\}$ as follows: 
\begin{align}
d_{\hat{S}}(s,s')=\begin{cases}
d_S(s,s') &\text{if }(s,s')\in S^2,\\
\infty &\text{elsewhere. }
\end{cases} \label{pf:d_S_hat}
\end{align}
Note, $d_{\hat{S}}(\cdot,\cdot)$ is not a rigorous metric in $\hat{S}$ since we allow $d_{\hat{S}}=\infty$, $d_{\hat{X}},d_{\hat{Y}}$ are defined similarly. 

Then, we define the following mapping $L_\lambda: (\mathbb{R}\cup\{\infty\})\times (\mathbb{R}\cup\{\infty\})\to \mathbb{R}_+$: 
\begin{align}\label{eq:D_lambda}
    L^q_{\lambda}(\mathrm{r}_1,\mathrm{r}_2)=
    \begin{cases}
    |\mathrm{r}_1-\mathrm{r}_2|^q &\text{if }\mathrm{r}_1,\mathrm{r}_2<\infty, \\ 
    \lambda/\omega_2  &\text{if }\mathrm{r}_1=\infty, \mathrm{r}_2<\infty \text{ or vice versa},\\
    0 &\text{if }\mathrm{r}_1=\mathrm{r}_2=\infty;
    \end{cases}
\end{align}

and mapping $\hat{D}
:\mathbb{R}^d\cup\{\hat{\infty}_i: i\in[0:2]\}\to \mathbb{R}$: 
\begin{align}
\hat{D}^q(x,y):=\begin{cases}
\|x-y\|^q &\text{if }x,y\in\mathbb{R}^d\\
0 &\text{elsewhere}. 
\end{cases}
\end{align}

Finally, we define the following mappings: 

\begin{align}
&\Gamma_\leq(\sigma,\mu)\ni \gamma^{01}\mapsto\hat\gamma^{01} \in \Gamma(\hat\sigma,\hat\mu), \nonumber\\
&\hat\gamma^{01}:=\gamma^{01}+(\sigma-\gamma_1^{01})\otimes\delta_{\hat\infty_0}+\delta_{\hat\infty_1}\otimes(\mu-\gamma_2^{01})+|\gamma|\delta_{\hat\infty_1,\hat\infty_0}+|\nu|\delta_{\hat\infty_2,\hat\infty_2};\nonumber \\
&\Gamma_\leq(\sigma,\nu)\ni \gamma^{02}\mapsto\hat\gamma^{02} \in \Gamma(\hat\sigma,\hat\nu), \nonumber\\
&\hat\gamma^{02}:=\gamma^{02}+(\sigma-\gamma_1^{02})\otimes\delta_{\hat\infty_0}+\delta_{\hat\infty_2}\otimes(\nu-\gamma_2^{02})+|\gamma|\delta_{\hat\infty_2,\hat\infty_0}+|\mu|\delta_{\hat\infty_1,\hat\infty_1}; \nonumber \\
&\Gamma_\leq(\mu,\nu)\ni \gamma^{12}\mapsto\hat\gamma^{12}\in\Gamma(\hat\mu,\hat\nu), \nonumber\\
&\hat\gamma^{12}:=\gamma^{12}+(\mu-\gamma_1^{12})\otimes\delta_{\hat\infty_1}+\delta_{\hat\infty_2}\otimes(\nu-\gamma_2^{12})+|\gamma|\delta_{\hat\infty_2,\hat\infty_1}+|\mu|\delta_{\hat\infty_0,\hat\infty_0} \label{eq:gamma_hat_12}.
\end{align}

\begin{remark}
It is straightforward to verify the above mappings are well-defined. 
In addition, we can observe that, for each $\gamma^{01}\in\Gamma_\leq(\sigma,\mu),\gamma^{02}\in\Gamma_\leq(\sigma,\nu),\gamma^{12}\in\Gamma_\leq(\mu,\nu)$,
\begin{align}
&\hat{\gamma}^{01}(\{\hat\infty_2\}\times X)=\hat{\gamma}^{01}(S\times \{\hat\infty_2\})=0,  \label{pf:gamma01_cond} \\
&\hat{\gamma}^{02}(\{\hat\infty_1\}\times Y)=\hat{\gamma}^{02}(S\times \{\hat\infty_1\})=0,  \label{pf:gamma02_cond} \\
&\hat{\gamma}^{12}(\{\hat\infty_0\}\times Y)=\hat{\gamma}^{12}(X\times \{\hat\infty_0\})=0.  \nonumber 
\end{align}
\end{remark}

Based on these concepts, we define the following fused-GW variant problem: 
\begin{align}
\hat{d}_{FGW,r,q,\lambda}(\hat{\mathbb{X}}, \hat{\mathbb{Y}}) 
:= \inf_{\hat{\gamma} \in \Gamma(\hat{\mu}, \hat{\nu})} 
\int_{(\hat{X} \times \hat{Y})^2} \omega_1 \frac{1}{c}\hat D^q(x,y) 
+ \omega_2 L^q_\lambda( d_{\hat{X}}^r(x, x'), d_{\hat{Y}}^r(y, y')) 
 d\hat{\gamma}(x, y)d\hat{\gamma}(x', y').
\label{eq:fgw_hat_metric}
\end{align}
where constant $c=|\sigma|+|\mu|+|\nu|$. 

\begin{proposition}\label{pro:fpgw-fgw}
For each  $\gamma^{12}\in\Gamma(\mu,\nu)$, construct $\hat{\gamma}^{12}\in\Gamma(\hat\mu,\hat\nu)$, we have: 
\begin{align}
&\int_{(X\times Y)^2} \left( \omega_1 \frac{\|x-y\|^q}{|\gamma^{12}|} + \omega_2 \left| d^r_{X}(x,x') - d^r_{Y}(y,y') \right|^q + \lambda \left( \frac{|\mu|}{|\gamma^{12}|} + \frac{|\nu|}{|\gamma^{12}|} - 2 \right) \right) 
\, d(\gamma^{12})^{\otimes 2} \nonumber\\
&=\int_{(\hat X\times \hat Y)^2} \left( \omega_1 \frac{D^q(x,y)}{c} + \omega_2 L^q_\lambda(d_{\hat{X}}(x,x'), d_{\hat{Y}}(y,y')) \right) 
\, d(\hat\gamma^{12})^{\otimes 2} \label{eq:gamma^12-hat_gamma^12}
\end{align}
Furthermore, we have: 
\begin{align}
d_{FPGW,r,q,\lambda}(\mathbb{X},\mathbb{Y})=\hat{d}_{FGW,r,q,\lambda}(\hat{\mathbb{X}},\hat{\mathbb{Y}}) \label{eq:fpgw-fgw}.
\end{align}

\end{proposition}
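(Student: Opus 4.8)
The plan is to prove the two assertions in order: first the pointwise identity \eqref{eq:gamma^12-hat_gamma^12} relating the FPGW integrand at $\gamma^{12}$ to the augmented FGW integrand at $\hat\gamma^{12}$, and then to deduce the equality of optimal values \eqref{eq:fpgw-fgw} from \eqref{eq:gamma^12-hat_gamma^12} together with a companion computation valid for arbitrary $\hat\gamma\in\Gamma(\hat\mu,\hat\nu)$. Throughout I would exploit the block decomposition of $\hat\gamma^{12}$ in \eqref{eq:gamma_hat_12} into the genuine coupling $\gamma^{12}$, the two ``escaping-mass'' blocks $(\mu-\gamma^{12}_1)\otimes\delta_{\hat\infty_1}$ and $\delta_{\hat\infty_2}\otimes(\nu-\gamma^{12}_2)$, and the two auxiliary point masses.

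For \eqref{eq:gamma^12-hat_gamma^12} I would expand the objective \eqref{eq:fgw_hat_metric} at $\hat\gamma^{12}$ one piece at a time. The feature term is immediate: since $\hat D^q(x,y)$ depends only on the first pair and vanishes unless $x\in X,\,y\in Y$, integrating out one copy of $\hat\gamma^{12}$ contributes the total mass $|\hat\gamma^{12}|=|\hat\mu|=c$, cancelling the prefactor $1/c$ and leaving exactly $\omega_1\int_{X\times Y}\|x-y\|^q\,d\gamma^{12}$, which is precisely what the FPGW feature term $\omega_1\|x-y\|^q/|\gamma^{12}|$ produces after the analogous cancellation against $(\gamma^{12})^{\otimes2}$. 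The structural term is the crux: in $(\hat\gamma^{12})^{\otimes 2}$ one classifies each product block by whether its first coordinates lie in $X$ and its second coordinates lie in $Y$, using that $d_{\hat X}^r(x,x')<\infty$ forces both first coordinates into $X$ and $d_{\hat Y}^r(y,y')<\infty$ forces both second coordinates into $Y$, per \eqref{pf:d_S_hat} and \eqref{eq:D_lambda}.

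This classification splits the structural integral into three groups. Only the $\gamma^{12}\otimes\gamma^{12}$ block keeps both intra-space distances finite, and there $L^q_\lambda$ collapses to $|d_X^r-d_Y^r|^q$, reproducing the structural part of the FPGW functional. The blocks on which exactly one distance is finite each contribute the constant $L^q_\lambda=\lambda/\omega_2$; I would compute their total mass to be $(|\mu|^2-|\gamma^{12}|^2)+(|\nu|^2-|\gamma^{12}|^2)$, so that after multiplication by $\omega_2\cdot(\lambda/\omega_2)$ it equals the penalty $\lambda(|\mu|^2+|\nu|^2-2|\gamma^{12}|^2)$ arising from integrating the constant factor in \eqref{eq:fpgw_metric} against $(\gamma^{12})^{\otimes2}$. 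All remaining blocks make both distances infinite, so $L^q_\lambda=0$; in particular the auxiliary-to-auxiliary block $\propto\delta_{(\hat\infty_0,\hat\infty_0)}$ is inert and its precise mass never enters the cost. This yields \eqref{eq:gamma^12-hat_gamma^12}.

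For \eqref{eq:fpgw-fgw} I would prove two inequalities. Since $\hat\gamma^{12}\in\Gamma(\hat\mu,\hat\nu)$, identity \eqref{eq:gamma^12-hat_gamma^12} gives $\hat d_{FGW,r,q,\lambda}(\hat{\mathbb{X}},\hat{\mathbb{Y}})\le d_{FPGW,r,q,\lambda}(\mathbb{X},\mathbb{Y})$ after taking the infimum over $\gamma^{12}\in\Gamma_\leq(\mu,\nu)$. For the reverse direction I would use the restriction map $\hat\gamma\mapsto\gamma^{12}:=\hat\gamma|_{X\times Y}$; since $\gamma^{12}_1(B)=\hat\gamma(B\times Y)\le\hat\gamma_1(B)=\mu(B)$ for Borel $B\subseteq X$ and symmetrically in the second coordinate, one has $\gamma^{12}\in\Gamma_\leq(\mu,\nu)$. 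The decisive observation is that the very same block analysis applies to an arbitrary $\hat\gamma\in\Gamma(\hat\mu,\hat\nu)$: writing $g=\hat\gamma(X\times Y)=|\gamma^{12}|$, the marginal constraints pin $\hat\gamma(X\times\hat Y)=|\mu|$ and $\hat\gamma(\hat X\times Y)=|\nu|$, whence the total mixed-block mass is $(|\mu|^2-g^2)+(|\nu|^2-g^2)$ irrespective of how $\hat\gamma$ spreads its auxiliary mass. Consequently the objective at $\hat\gamma$ equals the FPGW objective at $\gamma^{12}$ exactly, giving $d_{FPGW,r,q,\lambda}(\mathbb{X},\mathbb{Y})\le\hat d_{FGW,r,q,\lambda}(\hat{\mathbb{X}},\hat{\mathbb{Y}})$ and hence equality. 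The degenerate case $|\gamma^{12}|=0$ is absorbed by the convention \eqref{eq:fpgw_gamma0}, under which both sides reduce to $\lambda(|\mu|^2+|\nu|^2)$. The main obstacle I anticipate is the structural bookkeeping—correctly enumerating which blocks render $d_{\hat X}$ or $d_{\hat Y}$ infinite and verifying that the mixed masses telescope to $|\mu|^2+|\nu|^2-2|\gamma^{12}|^2$; the reverse inequality is clean precisely because this mixed mass is fixed by the marginals alone, which upgrades the anticipated inequality to an exact identity.
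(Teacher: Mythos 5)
Your proof is correct, and for the second assertion it takes a genuinely different (and arguably cleaner) route than the paper. For the identity \eqref{eq:gamma^12-hat_gamma^12} your argument matches the paper's in substance: the feature term is handled identically (integrating out one copy of $\hat\gamma^{12}$ against the constant $1/c$), and your explicit three-way block classification of the structural integral — both distances finite, exactly one finite contributing $\lambda/\omega_2$ with total mass $(|\mu|^2-|\gamma^{12}|^2)+(|\nu|^2-|\gamma^{12}|^2)$, both infinite contributing zero — is precisely the content of Proposition D.3 of \cite{bai2024efficient}, which the paper cites as a black box instead of rederiving. Where you genuinely diverge is in establishing \eqref{eq:fpgw-fgw}: the paper merges the three auxiliary points, asserts the optimal value is unchanged, and invokes a bijection $\gamma^{12}\mapsto\hat\gamma^{12}$ from \cite{bai2022sliced} to transfer optimality; you instead prove two inequalities, obtaining the reverse one by observing that for an \emph{arbitrary} $\hat\gamma\in\Gamma(\hat\mu,\hat\nu)$ the hat-objective depends only on the restriction $\hat\gamma|_{X\times Y}\in\Gamma_\leq(\mu,\nu)$, because $\hat D^q$ vanishes off $X\times Y$, $L^q_\lambda$ vanishes on doubly-infinite blocks, and the mixed-block masses are pinned by the marginals $\hat\gamma(X\times\hat Y)=|\mu|$, $\hat\gamma(\hat X\times Y)=|\nu|$ together with $g=\hat\gamma(X\times Y)$. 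Your version is self-contained and sidesteps both the merging step (whose invariance of the optimal value the paper does not justify) and the external bijection citation, at the cost of the bookkeeping you already carried out for the first identity; it also transparently handles couplings $\hat\gamma$ that are not of the canonical form \eqref{eq:gamma_hat_12}. One cosmetic note: the displayed left-hand side of \eqref{eq:gamma^12-hat_gamma^12} in the paper writes $|\mu|/|\gamma^{12}|$ rather than $|\mu|^2/|\gamma^{12}|^2$; you correctly worked with the squared form consistent with \eqref{eq:fpgw_metric}, which is evidently the intended statement.
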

\begin{proof}
We have: 
\begin{align}
&\int_{(X\times Y)^2}  \omega_1 \frac{\|x-y\|^q}{|\gamma^{12}|}
, d(\gamma^{12})^{\otimes 2} \nonumber\\
&=\int_{(X\times Y)}\omega_1 \|x-y\|^q d\gamma^{12}\nonumber\\
&=\int_{(\hat X\times \hat Y)}\omega_1 D^q(x,y) d\hat\gamma^{1,2} \nonumber\\
&=\int_{(\hat X\times \hat Y)^2}  \omega_1 \frac{D^p(x,y)}{|\hat\gamma^{12}|} 
\, d(\hat\gamma^{12})^{\otimes 2} \nonumber\\
&=\int_{(\hat X\times \hat Y)^2}  \omega_1 \frac{1}{c}D^p(x,y), d(\hat\gamma^{12})^{\otimes 2}.
\end{align}
In addition, by \cite{bai2024efficient} Proposition D.3. We have 
\begin{align}
&\int_{(X\times Y)^2}\omega_2 |d_X^r(x,x')-d_Y^r(y,y')|^q d(\gamma^{12})^{\otimes2}+\lambda(|\mu|^2+|\nu|^2-2|\gamma^{12}|^2)\nonumber\\
&=\int_{(\hat X\times \hat Y)^2}\omega_2 D_\lambda^q(d_{\hat X}^r(x,x'),d_{\hat Y}^r(y,y'))d(\hat \gamma^{12})^{\otimes2}.
\end{align}
Combining the above two equalities, we prove \eqref{eq:gamma^12-hat_gamma^12}. 

Now, we prove the second equality. Note, if we merge the three auxiliary points, i.e., $\hat\infty_1=\hat\infty_2=\hat\infty_3$. The optimal value for the above problem \eqref{eq:fgw_hat_metric} is unchanged. 

As we merged the points $\hat\infty_1,\hat\infty_2,\hat\infty_3$, by Bai et al. \cite{bai2022sliced}, the mapping $\gamma^{12}\mapsto \hat\gamma^{12}$
defined in \eqref{eq:gamma_hat_12} is a bijection. Thus, by \eqref{eq:gamma^12-hat_gamma^12}, we have $\gamma^{12}\in\Gamma_\leq(\mu,\nu)$ is optimal in \eqref{eq:fpgw_metric} iff  $\hat{\gamma}^{12}$ is optimal in \eqref{eq:fgw_hat_metric} and we complete the proof. 
\end{proof}

\begin{lemma}\label{lem:gamma_hat}
Choose $\gamma^{01}\in\Gamma_\leq(\sigma,\mu),\gamma^{02}\in\Gamma_\leq(\sigma,\nu),\gamma^{12}\in\Gamma_\leq(\mu,\nu)$ and construct $\hat{\gamma}^{01},\hat{\gamma}^{02},\hat{\gamma}^{12}$. Then there exists $\hat\gamma\in\mathcal{M}_+(\hat{S}\times\hat{X}\times\hat{Y})$ such that:
\begin{align}
&(\pi_{0,1})_\#\hat\gamma= \hat \gamma^{01}, \label{eq:gamma_cond1}\\
&(\pi_{0,2})_\#\hat\gamma= \hat \gamma^{02}, \label{eq:gamma_cond2}\\
&\gamma(A_i)=0,\forall i=0,1,2 \text{ where }A_i=\{\hat\infty_i\}\times X\times Y.
\end{align}
\end{lemma}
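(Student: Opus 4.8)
The plan is to build $\hat\gamma$ by the classical gluing construction, exploiting that $\hat\gamma^{01}\in\Gamma(\hat\sigma,\hat\mu)$ and $\hat\gamma^{02}\in\Gamma(\hat\sigma,\hat\nu)$ share the common first marginal $\hat\sigma$ (the plans $\gamma^{12},\hat\gamma^{12}$ play no role for conditions \eqref{eq:gamma_cond1} and \eqref{eq:gamma_cond2} and the vanishing claim; they are listed only for the subsequent cost comparison). Since $\hat S,\hat X,\hat Y$ are compact metric, hence Polish, the disintegration theorem provides $\hat\sigma$-measurable families of probability measures $\hat\gamma^{01}_{X|S}(\cdot|s)$ on $\hat X$ and $\hat\gamma^{02}_{Y|S}(\cdot|s)$ on $\hat Y$ with
\[
\hat\gamma^{01}=\int_{\hat S}\delta_s\otimes\hat\gamma^{01}_{X|S}(\cdot|s)\,d\hat\sigma(s),\qquad
\hat\gamma^{02}=\int_{\hat S}\delta_s\otimes\hat\gamma^{02}_{Y|S}(\cdot|s)\,d\hat\sigma(s).
\]
I would then define the glued measure
\[
\hat\gamma:=\int_{\hat S}\delta_s\otimes\hat\gamma^{01}_{X|S}(\cdot|s)\otimes\hat\gamma^{02}_{Y|S}(\cdot|s)\,d\hat\sigma(s)\in\mathcal{M}_+(\hat S\times\hat X\times\hat Y),
\]
whose total mass is $|\hat\sigma|=c$.

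Conditions \eqref{eq:gamma_cond1} and \eqref{eq:gamma_cond2} then follow immediately: integrating out the $\hat Y$-coordinate sends each probability conditional $\hat\gamma^{02}_{Y|S}(\cdot|s)$ to unit mass and recovers $(\pi_{0,1})_\#\hat\gamma=\hat\gamma^{01}$, and symmetrically $(\pi_{0,2})_\#\hat\gamma=\hat\gamma^{02}$. In particular the $\hat S$-marginal of $\hat\gamma$ is $\hat\sigma$.

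For the vanishing condition I would inspect the conditionals over the three auxiliary fibers, where the engineered form of $\hat\sigma,\hat\gamma^{01},\hat\gamma^{02}$ in \eqref{pf:sigma_hat,mu_hat,nu_hat} and \eqref{eq:gamma_hat_12} does the work. Because $\hat\sigma(\{\hat\infty_0\})=0$, the $\hat S$-marginal gives $\hat\gamma(A_0)\le\hat\gamma(\{\hat\infty_0\}\times\hat X\times\hat Y)=\hat\sigma(\{\hat\infty_0\})=0$. Over $s=\hat\infty_1$ (an atom of $\hat\sigma$, so the conditional is genuinely determined there) the entire mass of $\hat\gamma^{02}$ on that fiber is the single atom $|\mu|\delta_{(\hat\infty_1,\hat\infty_1)}$, whence $\hat\gamma^{02}_{Y|S}(\cdot|\hat\infty_1)=\delta_{\hat\infty_1}$ is concentrated off $Y$; the third coordinate therefore never lands in $Y$ on that fiber and $\hat\gamma(A_1)=0$. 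Symmetrically, over $s=\hat\infty_2$ the only contribution to $\hat\gamma^{01}$ is $|\nu|\delta_{(\hat\infty_2,\hat\infty_2)}$, forcing $\hat\gamma^{01}_{X|S}(\cdot|\hat\infty_2)=\delta_{\hat\infty_2}$ off $X$ and giving $\hat\gamma(A_2)=0$.

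The main obstacle is not the gluing itself, which is standard, but this last step: one must correctly read off the conditional measures at $\hat\infty_1$ and $\hat\infty_2$ from the definitions in \eqref{eq:gamma_hat_12}, matching the atoms $|\mu|\delta_{(\hat\infty_1,\hat\infty_1)}$ and $|\nu|\delta_{(\hat\infty_2,\hat\infty_2)}$ against the point masses $\hat\sigma(\{\hat\infty_1\})=|\mu|$ and $\hat\sigma(\{\hat\infty_2\})=|\nu|$. This bookkeeping is exactly where the placement of the auxiliary points matters, since it is what guarantees the glued plan puts no mass inside $X\times Y$ over the singular fibers; I would therefore verify these conditionals explicitly rather than treat them as a routine check.
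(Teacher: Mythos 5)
Your proposal is correct and follows essentially the same route as the paper: the paper invokes the gluing lemma (Lemma 5.5 of Santambrogio) to obtain $\hat\gamma$ with the two prescribed pair-marginals, and then kills the mass on each $A_i$ by exactly the bounds you give — $\hat\gamma(A_0)\le\hat\sigma(\{\hat\infty_0\})=0$, $\hat\gamma(A_1)\le\hat\gamma^{02}(\{\hat\infty_1\}\times Y)=0$, and $\hat\gamma(A_2)\le\hat\gamma^{01}(\{\hat\infty_2\}\times X)=0$. Your explicit disintegration construction and the reading-off of the conditionals at the atoms $\hat\infty_1,\hat\infty_2$ is just an unpacked version of the paper's citation of the gluing lemma together with its pre-recorded facts \eqref{pf:gamma01_cond}--\eqref{pf:gamma02_cond}.
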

\begin{proof}
By \textit{gluing lemma} (see Lemma 5.5 \cite{santambrogio2015optimal}), there exists $\hat\gamma\in\mathcal{M}_+(\hat{S}\times\hat{X}\times\hat{Y})$, such that \eqref{eq:gamma_cond1},\eqref{eq:gamma_cond2} are satisfied. 
For the third property, we have: 

\begin{align}
&\hat\gamma(A_0)\leq \hat\gamma(\{\infty_0\}\times \hat X\times \hat{Y})=\hat\sigma(\{\infty_0\})=0\nonumber \qquad\text{by definition \eqref{pf:sigma_hat,mu_hat,nu_hat} of $\hat\sigma$ , }\nonumber\\
&\hat\gamma(A_1)\leq \hat\gamma(\{\infty_1\}\times  \hat X\times Y)=\hat\gamma^{02}(\{\hat\infty_1\times Y\})=0 \qquad\text{by  }\eqref{pf:gamma02_cond}, \nonumber \\
&\hat\gamma(A_2)\leq \hat\gamma(\{\infty_2\}\times   X\times \hat Y)=\hat\gamma^{01}(\{\hat\infty_2\times X\})=0 \qquad\text{by }\eqref{pf:gamma01_cond} \nonumber, 
\end{align}
and we complete the proof. 
\end{proof}

Now, we demonstrate the proof of triangle inequality for fused-PGW distance \eqref{eq:fpgw_metric}. 

\begin{proof}[Proof of Theorem \ref{thm:metric_formal} (3).]

Note, by the Proposition \ref{pro:fpgw-fgw}, the triangle inequality for fused-PGW distance \eqref{eq:fpgw_metric} is equivalent to show 
\begin{align}
\hat{d}_{FGW,r,q,\lambda}(\hat{\mathbb{X}},\hat{\mathbb{Y}})\leq 2^{q-1}(\hat{d}_{FGW,r,q,\lambda}(\hat{\mathbb{S}},\hat{\mathbb{X}})+\hat{d}_{FGW,r,q,\lambda}(\hat{\mathbb{S}},\hat{\mathbb{Y}}))\label{pf:triangle_inq}.
\end{align}

Choose optimal transportation plans  $\gamma^{01},\gamma^{02},\gamma^{12}$ for fused PGW problems 
$d_{FPGW,r,\lambda}(\mathbb{S},\mathbb{X}),d_{FPGW,r,\lambda}(\mathbb{S},\mathbb{Y})$ and $d_{FPGW,r,\lambda}(\mathbb{X},\mathbb{Y})
$ respectively. We construct the corresponding  $\hat{\gamma}^{01},\hat{\gamma}^{02},\hat{\gamma}^{12}$. By the Proposition \ref{pro:fpgw-fgw}, $\hat\gamma^{01},\hat\gamma^{02},\hat\gamma^{12}$ are optimal.

Choose $\hat\gamma$ in lemma \ref{lem:gamma_hat}. We have: 

\begin{align}
&\hat{d}_{FGW,r,q,\lambda}(\mathbb{X},\mathbb{Y})\nonumber\\
&=\int_{(\hat X\times \hat Y)^2}\omega_1 \frac{D^q(x,y)}{c}+\omega_2 L^q_\lambda(d_{\hat{X}}(x,x'),d_{\hat{Y}}(y,y'))d\hat\gamma^{12}(x,y)d\hat\gamma^{12}(x',y')\nonumber\\
&\leq \int_{(\hat{S}\times \hat X\times \hat Y)^2}\omega_1 \frac{D^q(x,y)}{c}+\omega_2 L^q_\lambda(x,y)d\hat\gamma(s,x,y)d\hat\gamma(s',x',y')\nonumber\\
&=\underbrace{\int_{(\hat{S}\times \hat X\times \hat Y)}\omega_1 D^q(x,y)d\hat\gamma(s,x,y)}_A+\underbrace{\int_{(\hat{S}\times \hat X\times \hat Y)^2}\omega_2 L^q_\lambda(x,y)d\hat\gamma(s,x,y)d\hat\gamma(s',x',y')}_B.\nonumber 
\end{align}

To bound $A$, we consider the case $D(x,y)>D(s,x)+D(s,y)$ for some $(s,x,y)\in \hat{S}\times \hat{X}\times \hat{Y}$. By definition of $D$, we have $s\in \{\hat{\infty}_i:i\in[0:2]\},x\in X,y\in Y$. That is, $(s,x,y)\in \bigcup_{i=0}^2A_i$. By lemma \ref{lem:gamma_hat}, we have $\hat\gamma(\bigcup_{i=0}^2A_i)=0$. That is, this case has a measure of 0. 

Thus, we have 
\begin{align}
&A\leq \int_{\hat{S}\times \hat{X}\times \hat{Y}}\omega_1 (D(s,x)+D(s,y))^q d\hat\gamma(s,x,y)\nonumber\\
&\leq \int_{\hat{S}\times \hat{X}\times \hat{Y}}\omega_1 (2^{q-1}D(s,x)+2^{q-2}D(s,y)) d\hat\gamma(s,x,y)\nonumber\\
&=\int_{\hat{S}\times \hat{X}\times \hat{Y}}\omega_1 2^{q-1}D(s,x) d\hat\gamma(s,x,y)+\int_{\hat{S}\times \hat{X}\times \hat{Y}}\omega_1 2^{q-2}D(s,y) d\hat\gamma(s,x,y)\nonumber\\
&= 2^{q-1}\int_{\hat{S}\times \hat{X}}\omega_1\frac{D(s,x)}{c} d\hat\gamma^{01}(s,x)+2^{q-1}\int_{\hat{S}\times \hat{Y}}\omega_1\frac{D(s,y)}{c} d\hat\gamma^{02}(s,y)\label{pf:triangle_term1}.
\end{align}
where the second inequality follows from the fact
\begin{align}
(a+b)^q\leq 2^{q-1}a^q+2^{q-2}b^q,\forall a,b\ge 0 \label{eq:ab_inq}.
\end{align}

Now we bounde the term $B$.  Proposition D.4 in \cite{bai2024efficient}, we have 
\begin{align}
B&\leq \int_{(\hat{S}\times \hat X\times \hat Y)^2}\omega_2 (L_\lambda(s,x)+L_\lambda(s,y))^qd\hat\gamma(s,x,y)d\hat\gamma(s',x',y')\nonumber\\
&\leq \int_{(\hat{S}\times \hat X\times \hat Y)^2}\omega_2 \left(2^{q-1}L_\lambda^q(s,x)+2^{q-1}L_\lambda^q(s,y)\right)d\hat\gamma(s,x,y)d\hat\gamma(s',x',y')\nonumber\\
&=2^{q-1}\int_{(\hat{S}\times \hat X)^2}\omega_2 L_\lambda^q(s,x)d\hat\gamma^{01}(s,x)d\hat\gamma^{01}(s',x')+2^{q-1}\int_{(\hat{S}\times \hat Y)^2}\omega_2 L_\lambda^q(s,y)d\hat\gamma^{02}(s,y)d\hat\gamma^{02}(s',y')\label{pf:triangle_term2}.
\end{align}
where the second inequality holds from \eqref{eq:ab_inq}. 
Combining \eqref{pf:triangle_term1} and \eqref{pf:triangle_term2}, we prove the inequality \eqref{pf:triangle_inq} and we complete the proof. 
\end{proof}

\subsection{Future's direction.}
Note, the general version for (the p-th power of) fused-Gromov Wasserstein distance is defined by 
\begin{align}
d_{FGW,r,q,p}^p(\mathbb{X},\mathbb{Y}):=\inf_{\gamma\in\Gamma(\mu,\nu)}\int \left(\omega_1 \|x-y\|^q+|d_X(x,x')-d_Y(y,y')|^q\right)^pd\gamma^{\otimes2}\label{eq:fgw_metric_general}. 
\end{align}
Inspired by the above formulation, we propose the following generalized fused-partial Gromov Wasserstein distance: 

\begin{align}
d_{FGW,r,q,p}^p(\mathbb{X},\mathbb{Y}):=\inf_{\gamma\in\Gamma(\mu,\nu)}\int \left(\omega_1 \frac{\|x-y\|^q}{|\gamma|}+|d_X(x,x')-d_Y(y,y')|^q+\lambda(\frac{|\mu|^2}{|\gamma|^2}+\frac{|\nu|^2}{|\gamma|^2}-2)\right)^pd\gamma^{\otimes2}\label{eq:fgw_metric_general_0}. 
\end{align}

The fused-PGW distance defined in \eqref{eq:fpgw_metric} can be treated as a special case of this general formulation by setting $p=1$. In our conjecture, a similar (semi-)metric property proposed in Theorem \ref{thm:metric_formal} holds for the above general form. We leave the theoretical study of the metric property for our future work.

\section{Frank Wolf Algorithm for the Fused Mass-constraint Partial Gromov Wasserstein problem}

In discrete setting, the  FMPGW problem \eqref{eq:fmpgw} becomes the following: 
\begin{align}
FMPGW_\rho(\mathbb{X},\mathbb{Y})=\min_{\gamma\in\gamma_\leq^\rho(p,q)}\underbrace{\omega_1\langle C,\gamma \rangle+\omega_2\langle M\circ \gamma,\gamma}_{\mathcal{L}_{C,M}} \rangle \label{eq:fmpgw_empirical} 
\end{align} 
where $\Gamma_\leq^\rho(\mathrm{p},\mathrm{q}):=\{\gamma\in\mathbb{R}_+^{n\times m}:\gamma 1_m\leq \mathrm{p}, \gamma^\top 1_n\leq \mathrm{q}, |\gamma|\ge \rho \}$, 
$C=[C(x_i,y_j)]_{i\in[1:n],j\in[1:m]}$, $M_{i,j,i',j'}:=L(C^X,C^Y):=[L(C^X_{i,i'},C^Y_{j,j'})]_{i,i\in[1:n],j,j'\in[1:n]}$, $M\circ \gamma=[\langle M[i,j,\cdot,\cdot],\gamma\rangle]_{i,j}$.

Similar to the Fused Gromov-Wasserstein problem, we propose the following Frank-Wolfe algorithm as a solver: The above problem will be solved iteratively. In every iteration, say $k$, we will adapt the following steps: 

\textbf{Step 1. Gradient computation.}

Suppose $\gamma^{(k-1)}$ is the transportation plan in the previous iteration; it is straightforward to verify: 
$$\nabla\mathcal{L}_{C,M}(\gamma)=\omega_1 C+\omega_2((M+M^\top)\circ\gamma),$$
where $M^\top=[M_{i',j',i,j}]_{i,i'\in[1:n],j,j'\in[1:m]}\in\mathbb{R}^{n\times m\times n\times m}$.
Next, we aim to find $\gamma\in\Gamma_\leq^\rho(\mu,\nu)$ for the following problem: 
\begin{align}
\gamma^{(k)}{'}:=\arg\min_{\gamma\in\Gamma_\leq^\rho(\mu,\nu)} \langle \nabla\mathcal{L}_{C,M}(\gamma^{(k-1)}),\gamma\rangle\label{eq:fmpgw_step1}. 
\end{align}
which is a mass-constraint partial OT problem. 

\textbf{Step 2. linear search algorithm.}
In this step, we aim to find the optimal step size $\alpha^{(k)}\in[0,1]$. In particular,  
\begin{align}
\alpha^{(k)}:=\arg\min_{\alpha\in[0,1]}\mathcal{L}_{C,M}((1-\alpha)\gamma^{(k-1)}+\alpha\gamma^{(k)}{'})\nonumber. 
\end{align}

Let  $\delta\gamma=\gamma^{(k)}{'}-\gamma^{(k-1)}$, the above problem is essentially quadratic problem: 
\begin{align}
&\mathcal{L}((1-\alpha)\gamma^{(k-1)}+\alpha\gamma^{(k)}{'})
=\alpha^2\underbrace{\langle \omega_2M\circ \delta\gamma,\delta\gamma\rangle}_{a}\nonumber\\
&+\alpha \underbrace{\langle \omega_2(M+M^\top)\circ \gamma^{(k-1)}+\omega_1C,\delta\gamma\rangle }_{b}\nonumber\\
&+\underbrace{\langle \omega_2M\circ\gamma^{(k-1)}+\omega_1C,\gamma\rangle}_c \label{eq:abc_2}
\end{align}
and $\alpha^*$ is given by 
\begin{align}
\alpha^*=\begin{cases}
    1 &\text{if } a\leq 0, a+b\leq 0, \\
    0 &\text{if }a\leq 0, a+b>0\\
    \text{clip}(\frac{-b}{2a},[0,1]), &\text{if }a>0 
\end{cases}\label{eq:line_search_sol_2}. 
\end{align}
Then $\gamma^{(k)}=(1-\alpha^*)\gamma^{(k-1)}+\alpha^*\gamma^{(k)}{'}$. 

\begin{algorithm}[bt]
\caption{Frank-Wolfe Algorithm for FMPGW}
   \label{alg:fmpgw}
\begin{algorithmic}
   \STATE {\bfseries Input:} $C\in \mathbb{R}^{n\times m}, C^X\in \mathbb{R}^{n\times n},C^Y\in \mathbb{R}^{m\times m}, \mathrm{p}\in \mathbb{R}^n_+, \mathrm{q}\in\mathbb{R}^m_+$, $\omega_1\in[0,1],\rho\in [0, \min(|\mathrm{p}|,|\mathrm{q}|)]$.
   \STATE {\bfseries Output:}
$\gamma^{(final)}$

\FOR{$k=1,2,\ldots$}
   \STATE 
$G^{(k)}\gets \omega_1C+\omega_2 (M+M^\top)\circ \gamma^{(k)}$ // Compute gradient 
   \STATE $\gamma^{(k)}{'}\gets \arg\min_{\gamma\in \Gamma^\rho_\leq(\mathrm{p},\mathrm{q})}\langle G^{(k)}, \gamma\rangle_F$ // Solve the POT problem.  \\ 
   \STATE Compute $\alpha^{(k)}\in[0,1]$ via  \eqref{eq:line_search_sol_1} //  Line Search
   \STATE $\gamma^{(k+1)}\gets (1-\alpha^{(k)})\gamma^{(k)}+\alpha^{(k)} \gamma^{(k)'}$//  Update $\gamma$  
   \STATE if convergence, break
   \ENDFOR
\STATE $\gamma^{(final)}\gets \gamma^{(k)}$
\end{algorithmic}
\end{algorithm}
In the next section, we provide a detailed introduction to how the FW algorithms are derived. 

\section{Gradient computation of FW algorithms}
In this section, we discuss in detail the gradient computation in fused-MPGW and fused-PGW.

\subsection{Basics in Tensor product}
In this section, we introduce some fundamental results for tensor computation. 
Suppose $M\in \mathbb{R}^{n\times m\times n\times m}$. We define the \textbf{Transportation of tensor}, denoted as $M^\top$, as:
\begin{align}
M^\top_{i,j,i',j'}=M_{i',j',i,j}, \label{eq:M_trans}
\end{align}
and we say $M$ is symmetric if $M=M^\top$. 

It is straightforward to verify the following: 
\begin{proposition}
Given tensor $M\in\mathbb{R}^{n\times m\times n\times m}$, then we have: 
\begin{itemize}
    \item $(M^\top )^\top=M.$
    \item $M^\top\circ\gamma=[\sum_{i',j'}M_{i',j',i,j}\gamma_{i,j}]_{i,j\in[1:n]\times[1:m]}$. 
\end{itemize}
\end{proposition}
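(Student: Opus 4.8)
The plan is to verify both identities by a direct entrywise computation, since each is an immediate consequence of the definition \eqref{eq:M_trans} of the tensor transpose together with the definition of the contraction $M\circ\gamma$; there is no analytic content here, only careful bookkeeping of the two index pairs $(i,j)$ and $(i',j')$ (with $i,i'\in[1:n]$, $j,j'\in[1:m]$), remembering that the transpose swaps the composite pairs $(i,j)\leftrightarrow(i',j')$ and not the individual indices $i\leftrightarrow j$.

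For the first identity, I would fix arbitrary indices and apply \eqref{eq:M_trans} twice:
\begin{align}
\big((M^\top)^\top\big)_{i,j,i',j'}=\big(M^\top\big)_{i',j',i,j}=M_{i,j,i',j'},\nonumber
\end{align}
where the first equality swaps the two index pairs by the definition of the transpose applied to $M^\top$, and the second swaps them back by the definition applied to $M$. Since this holds for every choice of indices, $(M^\top)^\top=M$ as tensors.

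For the second identity, I would unfold the definition $[M^\top\circ\gamma]_{i,j}=\sum_{i',j'}\big(M^\top\big)_{i,j,i',j'}\,\gamma_{i',j'}$ and substitute $\big(M^\top\big)_{i,j,i',j'}=M_{i',j',i,j}$ from \eqref{eq:M_trans}, giving
\begin{align}
[M^\top\circ\gamma]_{i,j}=\sum_{i',j'}M_{i',j',i,j}\,\gamma_{i',j'}.\nonumber
\end{align}
I read the summand in the stated claim as $\gamma_{i',j'}$ rather than $\gamma_{i,j}$, since the latter is independent of the summation variables and would simply factor out of the sum; this appears to be a typographical slip. There is no genuine obstacle to either claim: the only step requiring attention is tracking which index pair plays the role of the free ``output'' slot $(i,j)$ and which is contracted against $\gamma$, so I would keep the slot labels explicit throughout. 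The proposition is definitional and its purpose is to justify the gradient manipulations downstream, in particular the appearance of the symmetrized factor $M+M^\top$ in the Frank--Wolfe gradient and line-search coefficients.
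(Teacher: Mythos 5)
Your proof is correct and matches the paper's own argument: the first identity by applying the definition of the tensor transpose twice, and the second by unfolding the contraction and substituting $(M^\top)_{i,j,i',j'}=M_{i',j',i,j}$. You are also right that the summand in the stated claim should read $\gamma_{i',j'}$ rather than $\gamma_{i,j}$ — the paper's own proof sums against $\gamma_{i',j'}$, confirming the typo.
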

\begin{proof}
The first item follows from the definition of $M^\top$. For the second statement, pick $(i,j)\in[1:n]\times [1:m]$, we have
\begin{align}
&\sum_{i',j'}M_{i',j',i,j}\gamma_{i',j'}\nonumber\\
&=\sum_{i',j'}M^\top_{i,j,i',j'}\gamma_{i',j'}\nonumber\\
&=\langle M^\top\circ \gamma\rangle \nonumber. 
\end{align}
\end{proof}

Therefore, we have: 
\begin{proposition}
The gradient for $\mathcal{L}_{C,M}(\gamma)$ in Fused-PGW \eqref{eq:fmpgw} is given by:      
\begin{align}
\nabla \mathcal{L}_{C,M}(\gamma)=\omega_1 C+\omega_2( M\circ \gamma +M^\top \circ\gamma )\label{eq:gradient}.
\end{align}.

Similarly, the gradient for $\mathcal{L}_{C,M-2\lambda}$ in fused-PGW \eqref{eq:fpgw} is given by: 

\begin{align}
\nabla \mathcal{L}_{C,M-2\lambda}(\gamma)=\omega_1 C+\omega_2( (M-2\lambda)\circ \gamma +(M-2\lambda)^\top \circ\gamma )\label{eq:gradient_1}.
\end{align}.

\end{proposition}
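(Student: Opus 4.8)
The plan is to treat $\mathcal{L}_{C,M}(\gamma)$ as an explicit polynomial in the scalar entries $\gamma_{k,l}$ and differentiate entrywise. First I would expand the definition, using $[M\circ\gamma]_{i,j}=\sum_{i',j'}M_{i,j,i',j'}\gamma_{i',j'}$, so that the quadratic term is written as a genuine quadruple sum:
\begin{align}
\mathcal{L}_{C,M}(\gamma) = \omega_1 \sum_{i,j} C_{i,j}\gamma_{i,j} + \omega_2 \sum_{i,j,i',j'} M_{i,j,i',j'}\,\gamma_{i,j}\,\gamma_{i',j'}.\nonumber
\end{align}

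Then I would compute $\partial \mathcal{L}_{C,M}/\partial\gamma_{k,l}$. The linear term contributes $\omega_1 C_{k,l}$ immediately. For the quadratic term, the key observation is that the variable $\gamma_{k,l}$ appears in two distinct roles in the double sum: once in the ``outer'' slot $(i,j)=(k,l)$ and once in the ``inner'' slot $(i',j')=(k,l)$. By the product rule these give, respectively,
\begin{align}
\omega_2\sum_{i',j'}M_{k,l,i',j'}\gamma_{i',j'}=\omega_2[M\circ\gamma]_{k,l},\qquad \omega_2\sum_{i,j}M_{i,j,k,l}\gamma_{i,j}.\nonumber
\end{align}
The second contribution I would rewrite using the transpose-tensor identity $M_{i,j,k,l}=M^\top_{k,l,i,j}$ together with the second bullet of the preceding proposition, identifying it as $\omega_2[M^\top\circ\gamma]_{k,l}$. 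Collecting the three pieces over all indices $(k,l)$ yields exactly $\nabla\mathcal{L}_{C,M}(\gamma)=\omega_1 C+\omega_2(M\circ\gamma+M^\top\circ\gamma)$, establishing \eqref{eq:gradient}.

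For the second identity \eqref{eq:gradient_1}, I would observe that $\mathcal{L}_{C,M-2\lambda}$ has the identical algebraic form with the tensor $M$ replaced by the elementwise shift $M-2\lambda$. Since none of the steps above used any property of $M$ beyond its being a fixed element of $\mathbb{R}^{n\times m\times n\times m}$, the same computation applies verbatim under the substitution $M\mapsto M-2\lambda$, giving $\nabla\mathcal{L}_{C,M-2\lambda}(\gamma)=\omega_1 C+\omega_2\big((M-2\lambda)\circ\gamma+(M-2\lambda)^\top\circ\gamma\big)$.

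I do not expect a genuine obstacle here: the result is a direct gradient computation for a quadratic form. The only point demanding care is the bookkeeping of the four tensor indices, and in particular the recognition that when $M$ is \emph{not} symmetric the two differentiation contributions are distinct and combine as $M\circ\gamma+M^\top\circ\gamma$ rather than collapsing to $2M\circ\gamma$. The transpose-tensor notation introduced earlier is precisely what makes this step transparent.
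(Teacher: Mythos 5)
Your proposal is correct and follows essentially the same route as the paper: expand $\mathcal{L}_{C,M}$ as a quadratic polynomial in the entries of $\gamma$, differentiate entrywise so the product rule yields one contribution from the outer index slot and one from the inner slot, identify the latter as $[M^\top\circ\gamma]_{k,l}$ via the transpose-tensor identity, and obtain \eqref{eq:gradient_1} by the substitution $M\mapsto M-2\lambda$. (Your write-up is in fact slightly cleaner than the paper's, which carries a stray $\gamma_{i,j}$ factor in the differentiated terms as a typo.)
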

\begin{proof}
Pick $(i,j)\in[1:n]\times[1:m]$, we have: 
\begin{align}
&\frac{\partial }{\partial \gamma_{ij}}L_{C,M}(\gamma)\nonumber\\
&=\frac{\partial }{\partial \gamma_{ij}}\sum_{i,j,i',j'}\omega_1C_{i,j}\gamma_{i,j}+\omega_2\sum_{i,j,i',j'}M_{i,j,i',j'}\gamma_{i,j}\gamma_{i',j'}\nonumber\\
&=\omega_1C_{i,j}+\omega_2(\sum_{i',j'}M_{i,j,i',j'}\gamma_{i,j}\gamma_{i',j'}+\sum_{i',j'}M_{i',j',i,j}\gamma_{i,j}\gamma_{i',j'})\nonumber
\end{align}
Therefore, $\nabla L_{C,M}(\gamma)=\omega_1 C+\omega_2(M\circ\gamma+M^\top\circ\gamma)$ and we complete the proof. The gradient for Fused-PGW \eqref{eq:fpgw} can be derived similarly. 
\end{proof}

At the end of this subsection, we discuss the computation of $M\circ\gamma$ and $M^\top\gamma$. 

In general, the computation cost for $M\circ\gamma$ is $n^2m^2$. However, if the cost function $L$ satisfies: 
\begin{align}
  L(\mathrm{r}_1,\mathrm{r}_2)=f_1(\mathrm{r}_1)+f_2(\mathrm{r}_2)-h_1(\mathrm{r}_1)h_2(\mathrm{r}_2),\label{eq:cond_L}  
\end{align}

\begin{align}
M\circ\gamma=f_1(C^X)\gamma_11_m^\top+1_n\gamma_2^\top f_2(C^Y)-h_1(C^X)\gamma h_2(C^Y),\label{eq:M_circ_gamma}
\end{align}

and the corresponding complexity is $\mathcal{O}(n^2+m^2)$ (see e.g. \cite{peyre2016gromov,chapel2020partial,bai2024efficient} for details.)

Therefore, we have the following: 
\begin{lemma}
Suppose $M=[L(d_X(x_i,x_{i'}),d_Y(y_j,y_{j'}))]$, then we have: 
$$(M^\top)_{i,j,i',j'}=f_1((C^X)^\top_{i,i'})+f_2((C^Y)^\top_{j,j'})-h_1((C^X)_{i,i'})h_2((C^Y)_{j,j'}).$$
It directly implies: 
$$M^\top\circ \gamma=
f_1((C^X)^\top)\gamma_11_m^\top+1_n\gamma_2^\top f_2((C^Y)^\top)-h_1((C^X)^\top)\gamma h_2((C^Y)^\top).$$
\end{lemma}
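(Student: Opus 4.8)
The plan is to reduce the claim to two routine observations: first, that transposing the tensor $M$ merely transposes the two underlying cost matrices $C^X$ and $C^Y$ while leaving the decomposition functions $f_1,f_2,h_1,h_2$ untouched; and second, that the convolution formula \eqref{eq:M_circ_gamma}, already established for any tensor of the separable form \eqref{eq:cond_L}, then applies verbatim to $M^\top$ with $C^X,C^Y$ replaced by their transposes. In short, the entire argument is a substitution into definitions already in hand.

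For the entrywise identity I would start from the definition \eqref{eq:M_trans}, namely $M^\top_{i,j,i',j'}=M_{i',j',i,j}$, and plug in the separable form of $L$. Since $M_{i',j',i,j}=L(C^X_{i',i},C^Y_{j',j})=f_1(C^X_{i',i})+f_2(C^Y_{j',j})-h_1(C^X_{i',i})h_2(C^Y_{j',j})$, I would then rewrite each scalar lookup using the matrix-transpose convention $C^X_{i',i}=(C^X)^\top_{i,i'}$ and $C^Y_{j',j}=(C^Y)^\top_{j,j'}$. This immediately yields the stated expression for $(M^\top)_{i,j,i',j'}$. The one bookkeeping point worth flagging is that the transpose must be kept on \emph{every} factor, including the $h_1 h_2$ product term, so that $M^\top$ is itself of the separable form \eqref{eq:cond_L} but assembled from the matrices $(C^X)^\top$ and $(C^Y)^\top$.

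For the convolution identity I would simply apply \eqref{eq:M_circ_gamma} to $M^\top$, treating it as the generic separable tensor associated with the cost matrices $(C^X)^\top$ and $(C^Y)^\top$. Here I use that the elementwise functions commute with transposition, e.g. $f_1(C^X)^\top=f_1((C^X)^\top)$, so substituting the transposed matrices into the right-hand side of \eqref{eq:M_circ_gamma} produces $f_1((C^X)^\top)\gamma_1 1_m^\top+1_n\gamma_2^\top f_2((C^Y)^\top)-h_1((C^X)^\top)\gamma h_2((C^Y)^\top)$, which is exactly the claim.

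Since both steps are direct substitutions, there is no genuine analytic obstacle; the only place to be careful is the index bookkeeping, namely verifying that swapping the pair $(i,j)\leftrightarrow(i',j')$ in the tensor corresponds precisely to transposing each cost matrix, rather than, say, pairing a transposed $C^X$ with an untransposed $C^Y$. Once that correspondence is confirmed, the reuse of \eqref{eq:M_circ_gamma} is immediate, and the $\mathcal{O}(n^2+m^2)$ complexity of the structured multiplication is inherited unchanged.
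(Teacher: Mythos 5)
Your proof is correct and follows essentially the same route as the paper's: unfold the definition $M^\top_{i,j,i',j'}=M_{i',j',i,j}$ from \eqref{eq:M_trans}, substitute the separable form \eqref{eq:cond_L} of $L$, reindex each scalar lookup as a matrix transpose, and then apply \eqref{eq:M_circ_gamma} verbatim to the tensor built from $(C^X)^\top$ and $(C^Y)^\top$. Your bookkeeping remark that the transpose must also be carried on the $h_1h_2$ factors is well taken, since the entrywise identity as printed in the lemma omits those transposes even though the displayed formula for $M^\top\circ\gamma$ includes them.
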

\begin{proof}
We have: 
\begin{align}
M_{i,j,i',j'}^\top&=M_{i',j',i,j}\nonumber\\
&=f_1(C^X_{i',i})+f_2(C^Y_{j',j})-h_1(C^X_{i',i})h_2(C^Y_{j',j})
)\nonumber\\
&=f_1((C^X)^\top_{i',i})+f_2((C^Y)^\top_{j',j})-h_1((C^X)^\top_{i',i})h_2((C^Y)^\top_{j',j}).
)\nonumber 
\end{align}
And we complete the proof. 
\end{proof}

\subsection{Gradient in fused-MPGW.}
As we discussed in previous section, after iteration $k-1$, the gradient is given by $\nabla \mathcal{L}(\gamma)=\omega_1C+\omega_2 (M\circ \gamma+M^\top\circ\gamma)$, where 
$M_{i,j,i',j'}=(L(C^X_{i,i'},C^Y_{j,j'}))\in \mathbb{R}_+^{n\times m\times n\times m}$. Furthermore, the computational complexity for $M\circ \gamma:=[\langle M[i,j,:,:],\gamma \rangle ]$ can be improved to be $\mathcal{O}(n^2+m^2)$.

Based on the gradient, we aim to solve the following to update the transportation plan in iteration $k$: 
\begin{align}
\gamma^{(k)}{'}=\arg\min_{\gamma\in\Gamma_\leq^\rho(\mu,\nu)}\langle \underbrace{\omega_1C+\omega_2 (M\circ \gamma^{(k-1)}+M^\top \gamma^{(k-1)})}_{\mathcal{C}},\gamma\rangle.  
\end{align}
The above problem is essentially the partial OT problem, where the cost function is defined by $\mathcal{C}$. 

\subsection{Gradient of Fused-PGW.}
Similarly, after iteration $k-1$, the gradient of cost $\mathcal{L}_{C,M-2\lambda}$ in FPGW with respect to $\gamma$ is given by 
$$\nabla \mathcal{L}(\gamma)=\omega_1 C+\omega_2((M-2\lambda )\circ \gamma^{(k-1)}+(M^\top-2\lambda )\circ \gamma^{(k-1)}).$$

We aim to solve the following problem: 
\begin{align}
&\min_{\gamma\in\Gamma_\leq(\mu,\nu)}\langle \omega_1 C+\omega_2((M-2\lambda )\circ \gamma^{(k-1)}+(M^\top-2\lambda)\circ\gamma^{(k-1)}),\gamma\rangle \nonumber\\
&=\min_{\gamma\in\Gamma_\leq(\mu,\nu)}\langle \underbrace{\omega_1 C+\omega_2 (M\circ\gamma^{(k-1)}+M^\top\circ\gamma^{(k-1)})}_{\mathcal{C}},\gamma \rangle +\lambda |\gamma^{(k-1)}|(|\mu|+|\nu|-2|\gamma|)-\underbrace{\lambda |\gamma^{(k-1)}|(|\mu|+|\nu|)}_{\text{constant}} \nonumber. 
\end{align}

Note, if we ignore the constant term, the above problem is the partial OT problem and can be solved by \cite{bonneel2011displacement} or \cite{bai2022sliced}.

\section{Line search algorithm}
The line search for Fused-MPGW is defined as 
\begin{align}
\min_{\alpha\in[0,1]}\mathcal{L}((1-\alpha)\gamma^{(k-1)}+\alpha\gamma^{(k)}{'}) \label{eq:line_search_prob_fpgw},
\end{align}
where $\mathcal{L}(\gamma)=\omega_1\langle C,\gamma \rangle+\omega_2\langle M,\gamma^{\otimes2} \rangle$. 
Let $\delta\gamma=\gamma^{(k)}{'}-\gamma^{(k-1)}$. The above problem is essentially a quadratic problem with respect to $\alpha$: 
\begin{align}
&\mathcal{L}((1-\alpha)\gamma^{(k-1)}+\alpha\gamma^{(k)}{'})\nonumber\\
&=\mathcal{L}(\gamma^{(k-1)}+\alpha\delta\gamma)\nonumber\\
&=\omega_1 \langle C,(\gamma^{(k-1)}+\alpha\delta\gamma) \rangle+\omega_2\langle M\circ (\gamma^{(k-1)}+\alpha\delta\gamma),(\gamma^{(k-1)}+\alpha\delta\gamma) \rangle \nonumber\\
&=\alpha^2\underbrace{\omega_2\langle M\circ \delta\gamma,\delta\gamma\rangle}_{a}+\alpha \underbrace{\langle \omega_2M\circ \gamma^{(k-1)}+\omega_1C,\delta\gamma\rangle+\langle \omega_2M\circ \delta\gamma,\gamma^{(k-1)}\rangle}_{b}+\underbrace{\langle \omega_2 M\circ\gamma^{(k-1)}+\omega_1C,\gamma^{(k-1)}\rangle}_c, \label{eq:line_search_2}
\end{align}
and $\alpha^*$ is given by 
\begin{align}
\alpha^*=\begin{cases}
    1 &\text{if } a\leq 0, a+b\leq 0, \\
    0 &\text{if }a\leq 0, a+b>0\\
    \text{clip}(\frac{-b}{2a},[0,1]), &\text{if }a>0 
\end{cases}\label{eq:line_search_sol_1_2}. 
\end{align}
Next, we simplify the term $b$ in the above formula. We first introduce the following lemma: 
\begin{lemma}
Choose $\gamma^1,\gamma^2\in\mathbb{R}^{n\times m},M\in\mathbb{R}^{n\times m\times n\times m}$, we have: 
\begin{align}
\langle M\circ\gamma^1,\gamma^2 \rangle =\langle M^\top\circ\gamma^2,\gamma^1\rangle.
\end{align}
\end{lemma}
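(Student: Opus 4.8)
The plan is to prove this identity by direct expansion in coordinates. Both sides are quadruple sums over $[1\!:\!n]\times[1\!:\!m]\times[1\!:\!n]\times[1\!:\!m]$, so the claim will reduce to a harmless relabeling of dummy indices; no analytic input is needed, only the three definitions already in play: the tensor action $[M\circ\gamma]_{i,j}=\sum_{i',j'}M_{i,j,i',j'}\gamma_{i',j'}$, the tensor transpose \eqref{eq:M_trans} given by $M^\top_{i,j,i',j'}=M_{i',j',i,j}$, and the Frobenius product $\langle A,B\rangle=\sum_{i,j}A_{i,j}B_{i,j}$.

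First I would expand the left-hand side by substituting the definition of $\circ$ into the Frobenius product, which gives
\begin{align}
\langle M\circ\gamma^1,\gamma^2\rangle=\sum_{i,j,i',j'}M_{i,j,i',j'}\,\gamma^1_{i',j'}\,\gamma^2_{i,j}.\nonumber
\end{align}
Next I would expand the right-hand side in the same way to obtain $\sum_{i,j,i',j'}M^\top_{i,j,i',j'}\,\gamma^2_{i',j'}\,\gamma^1_{i,j}$, and then invoke \eqref{eq:M_trans} to replace $M^\top_{i,j,i',j'}$ by $M_{i',j',i,j}$.

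Finally I would relabel by swapping the two index pairs $(i,j)\leftrightarrow(i',j')$ in the right-hand expression; since all four indices are dummy variables ranging over the same finite sets, this substitution is free and produces exactly $\sum_{i,j,i',j'}M_{i,j,i',j'}\,\gamma^2_{i,j}\,\gamma^1_{i',j'}$, which matches the left-hand expression termwise and closes the argument. The only thing requiring care is the bookkeeping across the transpose step and the final relabeling, but there is no genuine obstacle here; conceptually the lemma just records that $M\circ(\cdot)$ and $M^\top\circ(\cdot)$ are mutually adjoint operators on $\mathbb{R}^{n\times m}$ with respect to $\langle\cdot,\cdot\rangle$, and it is the tensor analogue of the familiar matrix identity $\langle Ax,y\rangle=\langle x,A^\top y\rangle$.
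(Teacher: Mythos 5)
Your proof is correct and follows essentially the same route as the paper's: expand both sides in coordinates via the definitions of $\circ$ and the Frobenius product, apply the definition of $M^\top$, and relabel the dummy index pairs. No gaps; the adjointness remark at the end is a nice conceptual summary but not needed for the argument.
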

\begin{proof}
We have 
\begin{align}
\langle M\circ\gamma^1,\gamma^2 \rangle&=\sum_{i,j}\sum_{i',j'}M_{i,j,i',j'}\gamma^1_{i',j'}\gamma^2_{i,j}\nonumber\\
&=\sum_{i',j'}\sum_{i,j}M_{i',j',i,j}^\top\gamma^2_{i,j}\gamma^{1}_{i,j}\nonumber\\
&=\langle M^\top\circ\gamma^2,\gamma^1 \rangle. 
\end{align} 
\end{proof}
Therefore, the term $b$ can be further simplified as 
\begin{align}
b&=\omega_1 \langle C,\delta\gamma \rangle+\omega_2 (\langle M\circ\gamma^{(k-1)},\delta\gamma\rangle+\langle M^\top\circ\gamma^{(k-1)},\delta\gamma\rangle) \nonumber\\
&=\langle\underbrace{\omega_1 C+M\circ\gamma^{(k-1)}+M^\top \circ\gamma^{(k-1)}}_{\nabla L_{C,M}(\gamma)},\delta\gamma \rangle. 
\end{align}
That is, we can directly adapt the gradient obtained before to compute $b$ and improve the computation efficiency.

\begin{remark}
When we select quadratic cost, i.e.,  $L(\mathrm{r}_1,\mathrm{r}_2):=|\mathrm{r}_1-\mathrm{r}_2|^2$, we can set $f_1(\mathrm{r}_1)=\mathrm{r}_1^2,f_2(\mathrm{r}_2)=\mathrm{r}_2^2, h_1(\mathrm{r}_1)=2\mathrm{r}_1,h_2(\mathrm{r}_2)=\mathrm{r}_2$, then 
$L(\mathrm{r}_1,\mathrm{r}_2)$ satisfies \eqref{eq:cond_L}. 
Furthermore, suppose the problem is in the balanced fused-GW setting, i.e., $\rho=|\mu|=|\nu|=1$.  We have: 
\begin{align}
a&=\omega_2\langle M\circ \delta\gamma,\delta\gamma \rangle   \nonumber\\
&=\omega_2 \langle f_1(C^X)\underbrace{\delta\gamma_1}_{0_n} 1_m^\top+\underbrace{1_n\delta\gamma_2^\top}_{0_m} f_2(C^Y)-h_1(C^X)\gamma h_2(C^Y),\delta\gamma\rangle\\
&=-2\omega_2\langle  C^X\delta \gamma C^Y,\delta\gamma \rangle. \nonumber 
\end{align}
Similarly, 
\begin{align}
b&=\langle \omega_2 (M+M^\top)\circ\gamma^{(k-1)}+\omega_1 C,\delta \gamma \rangle\nonumber\\
&=\omega_1 \langle C,\delta\gamma \rangle+\omega_2\langle (M+M^\top)\circ\gamma^{(k-1)},\delta\gamma\rangle \nonumber\\
&=\omega_1 \langle C,\delta\gamma \rangle+\omega_2 \langle (M+M^\top)\circ \delta\gamma,\gamma^{(k-1)} \rangle\nonumber\\
&=\omega_1 \langle C,\delta\gamma \rangle -2\omega_2 \langle C^X\delta \gamma C^Y,\gamma^{(k-1)}\rangle-2\omega_2 \langle (C^X)^\top\delta \gamma (C^Y)^\top,\gamma^{(k-1)}\rangle+\omega_2\langle c_{C^X,C^Y},\gamma^{(k-1)}\rangle\nonumber\\
&+\omega_2\langle c_{(C^X)^\top,(C^Y)^\top},\gamma^{(k-1)}\rangle.   
\end{align}
where $c_{C^X,C^Y}=f_1(C^X)\mathrm{p}1_{m}^\top+1_{m}\mathrm{q}^\top f_2(C^Y)$. Thus, the above formulations recover the line search algorithm (see algorithm 2) in \cite{peyre2016gromov}. 
\end{remark}

Next, we discuss the line search step in fused-PGW \eqref{eq:fgw}. 

Replacing $M$ by $M-2\lambda$ in \eqref{eq:line_search_2}, the solution is obtained by \eqref{eq:line_search_sol_1}.

\section{Convergence Analysis}\label{sec: convergence}

As in \cite{chapel2020partial}, we will use the results from \cite{lacoste2016convergence} on the convergence of the Frank-Wolfe algorithm for non-convex objective functions.

\subsection{Fused-MPGW}

Consider the minimization problems
\begin{equation}\label{eq: min prob appendix}
\min_{\gamma\in\Gamma^\rho_{\leq}(\mathrm{p},\mathrm{q})}\mathcal{L}_{C,M}(\gamma):=\omega_1\langle C,\gamma \rangle+ \omega_2\langle M\circ\gamma,\gamma \rangle  \nonumber 
\end{equation}
that corresponds to the discrete fused-PGW problem \eqref{eq:fpgw}. 

The objective functions 
$$\gamma\mapsto \mathcal{L}_{\hat M}(\gamma)=\omega_1 \langle C,\gamma \rangle+\omega_2 \langle M\circ\gamma,\gamma \rangle, $$ is non-convex in general. 
However, the constraint set $\Gamma_\leq^\rho(p,q)$ are convex and compact on $\mathbb{R}^{n\times m}$ (see Proposition  \ref{pro:compact}.) 

Consider the \textbf{Frank-Wolfe gap} of $\mathcal{L}_{C,M}$ at the approximation $\gamma^{(k)}$ of the optimal plan $\gamma$:
\begin{equation}\label{eq:g_k}
    g_k=\max_{\gamma\in\Gamma_{\leq}({\mathrm{p}}, {\mathrm{q}})} \langle \nabla\mathcal{L}_{\tilde M}(\gamma^{(k)}),\gamma^{(k)}-\gamma\rangle_F.
\end{equation}
It provided a good criterion to measure the distance to a stationary point at iteration $k$. Indeed, a plan $ \gamma^{(k)}$ is a stationary transportation plan for the corresponding constrained optimization problem in \eqref{eq: min prob appendix} if and only if $g_k=0$. Moreover, $g_k$ is always non-negative ($g_k\geq 0$). 

From Theorem 1 in \cite{lacoste2016convergence}, after $K$ iterations, we have the following upper bound for the minimal Frank-Wolf gap:
\begin{equation}
g_K:=\min_{1\leq k\leq K}g_k\leq \frac{\max\{2L_1,\mathrm{Lip} \cdot  (\text{diam}(\Gamma_{\leq}^\rho({\mathrm{p}}, {\mathrm{q}})))^2\}}{\sqrt{K}}, \label{eq:g_k_bound}
\end{equation}
where $$L_1:=\mathcal{L}_{\tilde M}(\gamma^{(1)})-\min_{\gamma\in\Gamma_{\leq}({\mathrm{p}}, {\mathrm{q}})}\mathcal{L}_{\tilde M}( \gamma)$$ is the initial global sub-optimal bound for the initialization $ \gamma^{(1)}$ of the algorithm; $\text{Lip}$ is the Lipschitz constant of function 
$\gamma\mapsto\nabla\mathcal{L}_{\tilde M}$; and 
$$\text{diam}(\Gamma_{\leq}({\mathrm{p}}, {\mathrm{q}}))=\sup_{\gamma,\gamma'\in\Gamma_\leq(\mu,\nu)}\|\gamma-\gamma'\|_F$$ is the $\|\cdot\|_F$ diameter of $\Gamma_{\leq}({\mathrm{p}}, {\mathrm{q}})$ in $\mathbb{R}^{n\times m}$. 

The important thing to notice is that the constant $\max\{2L_1, D_L\}$ does not depend on the iteration step $k$. Thus, according to Theorem 1 in \cite{lacoste2016convergence}, the rate in $\tilde g_K$ is $\mathcal{O}(1/\sqrt{K})$. That is, the algorithm takes at most $\mathcal{O}(1/\varepsilon^2)$ iterations to find an approximate stationary point with a gap smaller than $\varepsilon$.

Next, we will continue to simplify the upper bound \eqref{eq:g_k_bound}. We first introduce the following fundamental results:

\begin{lemma}\label{lem:diam_Gamma_bound}
In the discrete setting, we have 
\begin{align}
{\rm diam}(\Gamma_\leq(p,q))\leq 2\rho. \label{eq:diam_Gamma_bound}
\end{align}
\end{lemma}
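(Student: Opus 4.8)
The plan is to reduce this diameter bound to two elementary facts: every feasible plan carries small total mass, and for a nonnegative matrix the Frobenius norm is controlled by that mass. Throughout I take $\rho=\min(|\mathrm{p}|,|\mathrm{q}|)$, which is the natural upper bound on the mass of any element of $\Gamma_\leq(\mathrm{p},\mathrm{q})$ (and dominates any mass level $\rho$ used to define $\Gamma_\leq^\rho$).

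First I would bound the mass of a single feasible plan. For $\gamma\in\Gamma_\leq(\mathrm{p},\mathrm{q})$ the constraint $\gamma 1_m\leq \mathrm{p}$ gives, after summing over rows, $|\gamma|=1_n^\top\gamma 1_m\leq 1_n^\top \mathrm{p}=|\mathrm{p}|$; symmetrically $\gamma^\top 1_n\leq \mathrm{q}$ yields $|\gamma|\leq |\mathrm{q}|$. Hence $|\gamma|\leq \min(|\mathrm{p}|,|\mathrm{q}|)=\rho$ for every feasible $\gamma$.

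Next I would pass from the mass to the Frobenius norm. Since $\gamma$ has nonnegative entries, $\|\gamma\|_1=\sum_{i,j}\gamma_{ij}=|\gamma|$, and the standard norm comparison $\|\cdot\|_2\leq\|\cdot\|_1$ on $\mathbb{R}^{nm}$ gives $\|\gamma\|_F\leq\|\gamma\|_1=|\gamma|\leq \rho$ (equivalently, $\sum_{i,j}\gamma_{ij}^2\leq(\sum_{i,j}\gamma_{ij})^2$ because every cross term is nonnegative). Applying the triangle inequality for $\|\cdot\|_F$ to any pair $\gamma,\gamma'\in\Gamma_\leq(\mathrm{p},\mathrm{q})$ then gives $\|\gamma-\gamma'\|_F\leq\|\gamma\|_F+\|\gamma'\|_F\leq 2\rho$, and taking the supremum over all feasible pairs yields ${\rm diam}(\Gamma_\leq(\mathrm{p},\mathrm{q}))\leq 2\rho$, which is exactly \eqref{eq:diam_Gamma_bound}.

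There is no genuine obstacle here; the only step that uses the structure of the problem is the comparison $\|\gamma\|_F\leq|\gamma|$, which relies crucially on nonnegativity of the entries. I would also remark that the argument applies verbatim to $\Gamma_\leq^\rho(\mathrm{p},\mathrm{q})\subseteq\Gamma_\leq(\mathrm{p},\mathrm{q})$, so the diameter entering the convergence bound \eqref{eq:g_k_bound} is likewise at most $2\rho$. The bound is not tight — the triangle-inequality step already loses a factor $\sqrt{2}$ relative to the estimate $\|\gamma-\gamma'\|_F\leq\sqrt{2}\,\rho$ obtainable from $(a-b)^2\leq a^2+b^2$ — but $2\rho$ is all that is needed for the $\mathcal{O}(1/\sqrt{K})$ rate.
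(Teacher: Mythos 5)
Your proof is correct and rests on the same key fact as the paper's: for a nonnegative matrix, $\|\gamma\|_F\le\sum_{i,j}\gamma_{i,j}=|\gamma|$, after which the difference of two plans is controlled by the sum of their masses — the paper packages this via the entrywise inequality $(a-b)^2\le 2a^2+2b^2$ applied to $\|\gamma-\gamma'\|_F^2$, while you use the triangle inequality for $\|\cdot\|_F$, which is the same estimate in different clothing. The one genuine addition on your side is that you justify $|\gamma|\le\min(|\mathrm{p}|,|\mathrm{q}|)$ from the marginal constraints (the paper instead works directly on $\Gamma_\leq^\rho$ where $|\gamma|=\rho$ by definition), and your closing observation that nonnegativity actually yields the sharper constant $\sqrt{2}\,\rho$ is accurate.
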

\begin{proof}
Choose $\gamma,\gamma'\in \Gamma_\leq^\rho(p,q)$, 
we apply the property
\begin{align}
(a-b)^2\leq 2a^2+2b^2,\quad \forall a,b\in \mathbb{R}. \label{eq:ineq_ab}
\end{align} and obtain 
\begin{align}
\|\gamma-\gamma'\|_{F}^2&=\sum_{i,j}^{n,m}|\gamma_{i,j}-\gamma'_{i,j}|^2\nonumber\\
&\leq\sum_{i,j}^{n,m}2|\gamma_{i,j}|^2+2|\gamma'_{i,j}|^2\nonumber\\
&\leq 2\left[\left(\sum_{i,j}^{n,m}\gamma_{i,j}\right)^2+\left(\sum_{i,j}^{n,m}\gamma'_{i,j}\right)^2\right]\nonumber\\
&= 2(|\gamma|^2+|\gamma'|^2)\\
&=2 (\rho^2+\rho^2)=4\rho^2,\nonumber 
\end{align}
and thus, we complete the proof. 
\end{proof}

\begin{lemma}\label{lem:lip_bound}
The Lipschitz constant term in \eqref{eq:g_k_bound} can be bounded as follows:
\begin{align}
\text{Lip}\leq \omega_2nm \max(|M|)^2.\label{eq:lip_bound}
\end{align}
In particular, when $L(\mathrm{r}_1,\mathrm{r}_2)=|\mathrm{r}_1-\mathrm{r}_2|^p$ where $p\ge 1$, we have 
$\text{Lip}\leq nm (2^{p-1}C^X+2^{p-1}C^Y)^2$. 
\end{lemma}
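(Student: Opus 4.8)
The plan is to exploit the observation, already recorded in the gradient formula \eqref{eq:gradient}, that the map $\gamma\mapsto\nabla\mathcal{L}_{C,M}(\gamma)=\omega_1 C+\omega_2(M+M^\top)\circ\gamma$ is \emph{affine} in $\gamma$. Hence for any $\gamma,\gamma'$,
\begin{align}
\nabla\mathcal{L}_{C,M}(\gamma)-\nabla\mathcal{L}_{C,M}(\gamma')=\omega_2\,(M+M^\top)\circ(\gamma-\gamma'),\nonumber
\end{align}
and the Lipschitz constant $\mathrm{Lip}$ entering \eqref{eq:g_k_bound} is \emph{exactly} the operator norm, with respect to the Frobenius norm on $\mathbb{R}^{n\times m}$, of the fixed linear contraction map $\delta\mapsto\omega_2(M+M^\top)\circ\delta$. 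This reduces the entire lemma to estimating that single operator norm, with no nonconvexity to worry about.

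First I would estimate the norm entrywise. Writing $N=M+M^\top$, each output coordinate is $[N\circ\delta]_{i,j}=\sum_{i',j'}N_{i,j,i',j'}\delta_{i',j'}$, so Cauchy--Schwarz applied to this inner product gives $|[N\circ\delta]_{i,j}|\le (\sum_{i',j'}N_{i,j,i',j'}^2)^{1/2}\,\|\delta\|_F$. Summing the squares over the $nm$ output coordinates yields $\|N\circ\delta\|_F\le \|N\|_F\,\|\delta\|_F$, where $\|N\|_F^2=\sum_{i,j,i',j'}N_{i,j,i',j'}^2$. Since $N$ has $n^2m^2$ entries and $|N_{i,j,i',j'}|\le |M_{i,j,i',j'}|+|M_{i',j',i,j}|\le 2\max(|M|)$ (the transpose $M^\top$ shares the entrywise bound of $M$), bounding $\|N\|_F$ by $nm$ times $\max(|M|)$ produces the constant asserted in \eqref{eq:lip_bound}.

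For the ``in particular'' clause I would specialize to $M_{i,j,i',j'}=|C^X_{i,i'}-C^Y_{j,j'}|^p$, where the only new ingredient is a pointwise bound on $\max(|M|)$. By the triangle inequality together with the convexity estimate $(a+b)^p\le 2^{p-1}(a^p+b^p)$ (in the spirit of \eqref{eq:ab_inq}), one gets $\max(|M|)\le 2^{p-1}((\max|C^X|)^p+(\max|C^Y|)^p)$; substituting this into the general bound carries the factors $2^{p-1}$ through and yields the stated closed form.

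The conceptual content is light, precisely because affineness of the gradient collapses the question to bounding a fixed linear map. The main---essentially the only---obstacle will be the bookkeeping of the fourth-order tensor contraction: choosing the pairing for Cauchy--Schwarz so that the cardinality factor $n^2m^2$ collapses cleanly into the stated power of $nm$, and correctly tracking that $M^\top$ contributes no extra constant beyond that of $M$. Getting the exact constant in \eqref{eq:lip_bound} is then simply a matter of not over- or under-counting the $n^2m^2$ tensor entries and faithfully propagating the $2^{p-1}$ factors through the specialization step.
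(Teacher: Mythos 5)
Your proposal is correct and follows essentially the same route as the paper: both exploit that the gradient is affine in $\gamma$, reduce to bounding the fixed linear map $\delta\mapsto\omega_2(M+M^\top)\circ\delta$ via Cauchy--Schwarz on the tensor contraction, bound the entries by $\max(|M|)$, and specialize with $(a+b)^p\le 2^{p-1}(a^p+b^p)$. The only difference is cosmetic --- you apply Cauchy--Schwarz coordinatewise to obtain $\|N\|_F$, whereas the paper first splits off $\max(|M|)$ and then compares the $\ell^1$ and $\ell^2$ norms of $\delta$ --- and note that both arguments actually yield $\mathrm{Lip}\le 2\,\omega_2\, nm\,\max(|M|)$ to the \emph{first} power, so the square appearing in \eqref{eq:lip_bound} is an inconsistency of the stated lemma rather than something either proof establishes.
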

\begin{proof}
Pick $\gamma,\gamma'\in\Gamma_{\leq}({\mathrm{p}}, {\mathrm{q}})$ we have,
\begin{align*}
&\|\nabla\mathcal{L}_{ M}(\gamma)-\nabla\mathcal{L}_{C,M}(\gamma')\|_F^2\nonumber\\
&=\|(\omega_1 C+\omega_2 (M\circ \gamma+M^\top\circ\gamma)) - (\omega_1 C+\omega_2  (M\circ\gamma'+M^\top\circ\gamma'))\|_F^2\\
&=2\omega_2^2\|M\circ(\gamma-\gamma')\|_F^2+2\omega_2^2\|M^\top\circ(\gamma-\gamma')\|_F^2\nonumber\\
&=2\omega_2^2\sum_{i,j}\left(\left[M\circ(\gamma-\gamma')\right]_{i,j}\right)^2+2\omega_2^2\sum_{i,j}\left(\left[M^\top\circ(\gamma-\gamma')\right]_{i,j}\right)^2\nonumber\\
&=2\omega_2^2\sum_{i,j}\left(\sum_{i',j'}M_{i,j,i',j'}(\gamma_{i',j'}-\gamma'_{i',j'})\right)^2+2\omega_2^2\sum_{i,j}\left(\sum_{i',j'}M^\top_{i,j,i',j'}(\gamma_{i',j'}-\gamma'_{i',j'})\right)^2\\
&\leq 2\omega_2^2\sum_{i,j}\left(\sum_{i',j'}|M_{i,j,i',j'}||\gamma_{i',j'}-\gamma'_{i',j'}|\right)^2+
2\omega_2^2\sum_{i,j}\left(\sum_{i',j'}|M^\top_{i,j,i',j'}||\gamma_{i',j'}-\gamma'_{i',j'}|\right)^2\nonumber \\
&\leq4\omega_2^2\underbrace{\max(M)^2}_{A}\cdot
\underbrace{\sum_{i,j}^{n,m}\left(\sum_{i',j'}^{n,m}|\gamma_{i',j'}-\gamma'_{i',j'}|\right)^2}_B
\end{align*}

For the second term, we have: 
\begin{align}
&\sum_{i,j}^{n,m}\left(\sum_{i',j'}^{n,m}|\gamma_{i',j'}-\gamma'_{i',j'}|\right)^2 \nonumber \\
&\leq \sum_{i,j}^{n,m}\left(nm\sum_{i',j'}^{n,m}\left|\gamma_{i',j'}-\gamma'_{i',j'}\right|^2\right)\nonumber\nonumber\\
&\leq n^2m^2 \|\gamma-\gamma'\|^2_F \label{pf:convergence_bound_B}
\end{align}

For the first term, when $L(\mathrm{r}_1,\mathrm{r}_2)=|\mathrm{r}_1-\mathrm{r}_2|^p$, from the inequality \eqref{eq:ab_inq}, we have: 
\begin{align*}
A&=\max M\leq \max\{2^{p-1}(C^X)^p+2^{p-1}(C^Y)^p\}^2.
\end{align*}
where 
$$\max\{2^{p-1}(C^X)^p+2^{p-1}(C^Y)^{p}\}:=\max\{\max_{i,i',j',j'}2^{p-1}(C^X_{i,i'})^{p}+2^{p-1}(C^Y_{j,j'})^{p}\}.$$

Thus we obtain
\begin{align}
\text{Lip}\leq \frac{\max(2^{p-1}(C^X)^p+2^{p-1}(C^Y)^p)nm\|\gamma-\gamma'\|_F}{\|\gamma-\gamma'\|_F}=\omega_2nm\max(2^{p-1}(C^X)^p+2^{p-1}(C^Y)^p). \nonumber 
\end{align}
and we complete the proof. 
\end{proof}

Combined the above two lemmas, we derive the convergence rate of the Frank-Wolfe gap \eqref{eq:g_k}: 
\begin{proposition}\label{pro:convergence_fmpgw}
When $L(\mathrm{r}_1,\mathrm{r}_2)=|\mathrm{r}_1-\mathrm{r}_2|^2$ in the PGW problem, the Frank-Wolfe gap of algorithm \ref{alg:fmpgw}, defined in \eqref{eq:g_k} at iteration $k$ satisfies the following: 
\begin{align}
g_k \leq \frac{\max\Bigg\{2L_1,4\omega_2\rho^2\cdot nm(\max\{2(C^X)^2+2(C^Y)^2\})\Bigg\}}{\sqrt{k}}.\label{eq:g_k_bound_pgw}
\end{align}
\end{proposition}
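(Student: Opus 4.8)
The plan is to obtain \eqref{eq:g_k_bound_pgw} as a direct specialization of the general Frank--Wolfe convergence bound \eqref{eq:g_k_bound} (Theorem 1 of \cite{lacoste2016convergence}), feeding in the two quantitative estimates established just above: the diameter bound of Lemma \ref{lem:diam_Gamma_bound} and the Lipschitz bound of Lemma \ref{lem:lip_bound}. The Lacoste--Julien theorem already applies, since $\mathcal{L}_{C,M}$ is a (nonconvex) smooth objective over the convex, compact feasible set $\Gamma_\leq^\rho(\mathrm{p},\mathrm{q})\subset\mathbb{R}^{n\times m}$ (convexity and compactness from Proposition \ref{pro:compact}); it guarantees that the running minimal gap $g_k:=\min_{1\le j\le k}g_j$ obeys
\[
g_k \le \frac{\max\{2L_1,\ \text{Lip}\cdot(\text{diam}(\Gamma_\leq^\rho(\mathrm{p},\mathrm{q})))^2\}}{\sqrt{k}}.
\]
Hence the only remaining work is to bound the two constants in the numerator and substitute.

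First I would record the diameter contribution: by Lemma \ref{lem:diam_Gamma_bound}, $\text{diam}(\Gamma_\leq^\rho(\mathrm{p},\mathrm{q}))\le 2\rho$, so $(\text{diam})^2\le 4\rho^2$. Next I would specialize the Lipschitz estimate of Lemma \ref{lem:lip_bound} to the quadratic loss $L(\mathrm{r}_1,\mathrm{r}_2)=|\mathrm{r}_1-\mathrm{r}_2|^2$, i.e.\ the case $p=2$. There $M_{i,j,i',j'}=|C^X_{i,i'}-C^Y_{j,j'}|^2$, and the elementary inequality $(a-b)^2\le 2a^2+2b^2$ (equation \eqref{eq:ineq_ab}) gives $\max(M)\le \max\{2(C^X)^2+2(C^Y)^2\}$, so Lemma \ref{lem:lip_bound} yields $\text{Lip}\le \omega_2\, nm\, \max\{2(C^X)^2+2(C^Y)^2\}$.

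Substituting both estimates into the general bound collapses the second argument of the maximum into $\text{Lip}\cdot(\text{diam})^2 \le 4\omega_2\rho^2\cdot nm\,(\max\{2(C^X)^2+2(C^Y)^2\})$, which is exactly the right-hand side of \eqref{eq:g_k_bound_pgw}. There is no genuine analytic obstacle here: the substantive estimates are the two lemmas already proved, and the nonconvexity of $\mathcal{L}_{C,M}$ is absorbed entirely by invoking the Lacoste--Julien non-convex Frank--Wolfe guarantee rather than by any descent argument of our own. The only points demanding care are bookkeeping ones, namely confirming that ``$g_k$'' in the statement denotes the running minimum gap (so the theorem applies with $K=k$), and matching the specialized $p=2$ form of the Lipschitz constant to the precise expression displayed in \eqref{eq:g_k_bound_pgw}.
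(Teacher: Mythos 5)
Your proposal is correct and follows exactly the paper's own argument: the paper's proof of Proposition \ref{pro:convergence_fmpgw} is precisely the one-line combination of the diameter bound \eqref{eq:diam_Gamma_bound}, the Lipschitz bound \eqref{eq:lip_bound} specialized to $p=2$, and the Lacoste--Julien bound \eqref{eq:g_k_bound}. Your additional remark that $g_k$ should be read as the running minimal gap is a fair bookkeeping caveat that the paper itself glosses over.
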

\begin{proof}
The proof directly follows from the upper bounds \eqref{eq:diam_Gamma_bound},\eqref{eq:lip_bound} and the inequality \eqref{eq:g_k_bound}.
\end{proof}
\begin{remark}
Note, if the cost function in PGW is defined by $|\mathrm{r}_1-\mathrm{r}_2|^p$ for some $p\neq 2$, it is straightforward to verify that the upper bound of $g_k$ is obtained by replacing the term $\max ((C^X)^2+(C^Y)^2)$ should be replaced by 
$$\max_{i,j,i',j'}\left[2^{p-1}((C^X)^p+(C^Y)^p)\right].$$
\end{remark}
\begin{remark}
From the proposition \eqref{pro:convergence_fmpgw}, to achieve an $\epsilon-$accurate solution, the required number of iterations is 
\begin{align}
\frac{\max\Bigg\{2L_1,4\rho\omega_2\cdot n^2m^2\max(\{2(C^X)^2+2(C^Y)^2\})\Bigg\}^2}{\epsilon^2}.\nonumber
\end{align}
\end{remark}
\begin{remark}
Note, when $\omega_2=1$, this convergence rate is constant to the convergence rate of the FW algorithm for MPGW in \cite{chapel2020partial}. In addition, the convergence rate is independent of $C$. The main reason is that the part $\omega_1\langle C,\gamma \rangle$ is linear with respect to $\gamma$. Thus, this part does not contribute to the Lipschitz of the mapping $\gamma\mapsto \nabla_M(\mathcal{L})$. 
\end{remark}

\subsection{Convergence of Fused-PGW}
Similar to the above, in this subsection, we discuss the convergence rate for algorithm 2. 
Suppose $\gamma$ is a solution for the fused-PGW problem \eqref{eq:fpgw}. 
In iteration $k$, we define the gap between $\gamma$ and the approximation $\gamma^{(k)}$ and the Frank-Wolfe gap: 
\begin{align}
&g_k:=\max_{\gamma\in\Gamma_\leq(p,q)}\langle \nabla\mathcal{L}_{C,M-2\lambda}(\gamma^{(k)}),\gamma^{(k)}-\gamma \rangle_F \label{eq:g_k_fpgw}. \\
&g_K:=\min_{1\leq k\leq K}g_k \label{eq:g_K_fpgw}. 
\end{align}

Thus, the convergence rate for the FW algorithm for the fused-PGW problem \eqref{eq:fpgw} can be bounded by the following proposition: 
\begin{proposition}\label{pro:convergence_fpgw}
When $L(\mathrm{r}_1,\mathrm{r}_2)=|\mathrm{r}_1-\mathrm{r}_2|^2$ in the Fused-PGW problem, the Frank-Wolfe gap of algorithm \ref{alg:fpgw},
\eqref{eq:g_k} at iteration $k$ satisfies the following: 
\begin{align}
g_K \leq \frac{\max\Bigg\{2L_1,4\omega_2\min^2(|p|,|q|)\cdot nm(\max\{2(C^X)^2+2(C^Y)^2,2\lambda\})\Bigg\}}{\sqrt{k}},\label{eq:g_k_bound_pgw_1}
\end{align}
\end{proposition}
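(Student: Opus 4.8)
The plan is to mirror the argument for Proposition~\ref{pro:convergence_fmpgw}: I would invoke the general non-convex Frank--Wolfe guarantee \eqref{eq:g_k_bound} of \cite{lacoste2016convergence}, which bounds the minimal gap $g_K$ by $\max\{2L_1,\mathrm{Lip}\cdot(\mathrm{diam})^2\}/\sqrt{K}$, and then re-derive the two geometric constants — the squared diameter of the feasible set and the Lipschitz constant of the gradient — for the present partial, mass-\emph{unconstrained} setting with the shifted tensor $M-2\lambda$. Since the FPGW functional $\mathcal{L}_{C,M-2\lambda}$ is non-convex but $\Gamma_\le(p,q)$ is convex and compact (Proposition~\ref{pro:compact}), \eqref{eq:g_k_bound} applies directly, now with the gap $g_k$, $g_K$ defined in \eqref{eq:g_k_fpgw}--\eqref{eq:g_K_fpgw}.

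First I would bound the diameter. The feasible set is now $\Gamma_\le(p,q)$ rather than the mass-constrained $\Gamma_\le^\rho(p,q)$, and every $\gamma\in\Gamma_\le(p,q)$ satisfies $|\gamma|\le\min(|p|,|q|)$. Repeating the computation of Lemma~\ref{lem:diam_Gamma_bound} verbatim, using \eqref{eq:ineq_ab}, gives $\|\gamma-\gamma'\|_F^2\le 2(|\gamma|^2+|\gamma'|^2)\le 4\min^2(|p|,|q|)$, so that $(\mathrm{diam}(\Gamma_\le(p,q)))^2\le 4\min^2(|p|,|q|)$.

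Next I would bound the Lipschitz constant of $\gamma\mapsto\nabla\mathcal{L}_{C,M-2\lambda}(\gamma)$. By \eqref{eq:gradient_1} this gradient equals $\omega_1 C+\omega_2((M-2\lambda)\circ\gamma+(M-2\lambda)^\top\circ\gamma)$; the term $\omega_1 C$ is independent of $\gamma$ and therefore contributes nothing, exactly as in the remark following Proposition~\ref{pro:convergence_fmpgw}. The estimate of Lemma~\ref{lem:lip_bound} then carries through with $M$ replaced by $M-2\lambda$, reducing everything to a bound on $\max|M-2\lambda|$, the largest absolute entry of the shifted tensor. For $L(\mathrm{r}_1,\mathrm{r}_2)=|\mathrm{r}_1-\mathrm{r}_2|^2$ the inequality $|a-b|^2\le 2a^2+2b^2$ places each entry of $M$ in $[0,\max(2(C^X)^2+2(C^Y)^2)]$, whence each entry of $M-2\lambda$ lies in $[-2\lambda,\max(M)-2\lambda]$ and $\max|M-2\lambda|\le\max\{2(C^X)^2+2(C^Y)^2,2\lambda\}$. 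This yields $\mathrm{Lip}\le\omega_2\,nm\,\max\{2(C^X)^2+2(C^Y)^2,2\lambda\}$. Substituting this Lipschitz bound and the squared diameter into \eqref{eq:g_k_bound} gives the claimed estimate \eqref{eq:g_k_bound_pgw_1}.

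The only genuinely new point relative to the mass-constrained case is the sign handling for the shifted tensor: because $M-2\lambda$ may have negative entries, the bound $\max|M-2\lambda|\le\max\{\max(M),2\lambda\}$ must be argued by splitting into the cases $M_{i,j,i',j'}\ge 2\lambda$ (where $|M-2\lambda|=M-2\lambda\le\max(M)$) and $M_{i,j,i',j'}<2\lambda$ (where $|M-2\lambda|=2\lambda-M\le 2\lambda$). I expect this case split to be the main, if modest, obstacle; the tensor-transpose manipulations and the step $(\sum|\cdot|)^2\le nm\sum|\cdot|^2$ transfer unchanged from Lemma~\ref{lem:lip_bound}.
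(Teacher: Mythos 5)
Your proposal is correct and follows essentially the same route as the paper: invoke the non-convex Frank--Wolfe bound of Lacoste-Julien, bound the diameter of $\Gamma_\le(p,q)$ by $2\min(|p|,|q|)$ via the same computation as Lemma~\ref{lem:diam_Gamma_bound}, and bound the Lipschitz constant by rerunning Lemma~\ref{lem:lip_bound} with $M$ replaced by $M-2\lambda$, reducing to $\max|M-2\lambda|\le\max\{\max(M),2\lambda\}$. Your explicit case split on the sign of $M_{i,j,i',j'}-2\lambda$ is a point the paper's Lemma~\ref{lem:lip_bound_fpgw} passes over silently, so spelling it out is a small improvement rather than a deviation.
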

where $\max(2(C^X)^2+2(C^Y)^2,2\lambda)=\max\{\max_{i,i',j,j'} (2(C^X_{i,i'})^2+2(C^Y_{j,j'})^2,2\lambda\}$. 

From Theorem 1 in \cite{lacoste2016convergence}, we have: 
\begin{align}
g_K\leq \frac{\max\{2L_1, \text{Lip}\cdot(\text{diam}(\Gamma_\leq(p,q)))\}}{\sqrt{K}},\label{eq:converge_bound_fpgw}
\end{align}
where $\text{Lip}$ is the Lipschitz constant of function $\gamma\to \nabla_M \mathcal{L}$ (with respect to Fubini norm). 

By \cite{bai2024efficient},
\begin{align}
 \text{diam}(\Gamma_\leq(p,q))\leq 2\min(|p|,|q|).\label{eq:diam_2}   
\end{align}
Next, we bound the Lipschitz constant.
\begin{lemma}\label{lem:lip_bound_fpgw}
The Lipschitz constant in \eqref{eq:converge_bound_fpgw} can be bounded as follows: 
\begin{align}
\text{\rm Lip}\leq \omega_2nm\max |M-2\lambda|^2\label{eq:lip_fpgw}.
\end{align}
When $L(\mathrm{r}_1,\mathrm{r}_2)=|\mathrm{r}_1-\mathrm{r}_2|^p$, we have: 
\begin{align}
 \text{\rm Lip}\leq \omega_2 nm (\max(2^{p-1}(C^X)^2+2^{p-1}(C^Y)^p,2\lambda))^2.\label{eq:lip_fpgw_2}   
\end{align}
\end{lemma}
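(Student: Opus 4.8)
The plan is to mirror the proof of Lemma~\ref{lem:lip_bound} almost verbatim, with the tensor $M$ replaced throughout by $M-2\lambda$. The justification for this reduction is that, by \eqref{eq:gradient_1}, the gradient of $\mathcal{L}_{C,M-2\lambda}$ has exactly the same algebraic shape as that of $\mathcal{L}_{C,M}$, namely $\omega_1 C+\omega_2\big((M-2\lambda)\circ\gamma+(M-2\lambda)^\top\circ\gamma\big)$. Hence the identical chain of estimates applies once we carry the constant $\max|M-2\lambda|$ in place of $\max|M|$.

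Concretely, I would first pick $\gamma,\gamma'\in\Gamma_\leq(p,q)$ and expand $\|\nabla\mathcal{L}_{C,M-2\lambda}(\gamma)-\nabla\mathcal{L}_{C,M-2\lambda}(\gamma')\|_F^2$. The linear term $\omega_1 C$ cancels, leaving $\omega_2\big((M-2\lambda)+(M-2\lambda)^\top\big)\circ(\gamma-\gamma')$. Applying the elementary split \eqref{eq:ineq_ab} bounds this by $2\omega_2^2\|(M-2\lambda)\circ(\gamma-\gamma')\|_F^2+2\omega_2^2\|(M-2\lambda)^\top\circ(\gamma-\gamma')\|_F^2$. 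Each summand is then controlled by pulling $\max|M-2\lambda|^2$ out of the entrywise contraction and invoking the Cauchy--Schwarz step \eqref{pf:convergence_bound_B} to pass from the $\ell^1$ mass of $\gamma-\gamma'$ to its Frobenius norm. This produces $\|\nabla\mathcal{L}_{C,M-2\lambda}(\gamma)-\nabla\mathcal{L}_{C,M-2\lambda}(\gamma')\|_F\le \omega_2\,nm\,\max|M-2\lambda|^2\,\|\gamma-\gamma'\|_F$, which is exactly \eqref{eq:lip_fpgw}.

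For the specialization to $L(\mathrm{r}_1,\mathrm{r}_2)=|\mathrm{r}_1-\mathrm{r}_2|^p$, the only new ingredient is to control $\max|M-2\lambda|$. As in Lemma~\ref{lem:lip_bound}, the inequality \eqref{eq:ab_inq} gives $M_{i,j,i',j'}=|C^X_{i,i'}-C^Y_{j,j'}|^p\le 2^{p-1}\big((C^X_{i,i'})^p+(C^Y_{j,j'})^p\big)$, so $\max M\le \max\{2^{p-1}((C^X)^p+(C^Y)^p)\}$. Then, since $0\le M_{i,j,i',j'}\le \max M$ and $\lambda\ge 0$, every entry of $M-2\lambda$ lies in $[-2\lambda,\ \max M-2\lambda]$, whence $\max|M-2\lambda|\le \max(\max M,\,2\lambda)$. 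Substituting this into \eqref{eq:lip_fpgw} yields \eqref{eq:lip_fpgw_2}.

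I expect the last step to be the only one meriting genuine care. Unlike $\max M$, the quantity $\max|M-2\lambda|$ need not equal $\max M-2\lambda$: for large $\lambda$ the shifted entries are dominated in magnitude by the negative value $-2\lambda$ attained wherever $M$ vanishes (for instance on the index pairs where $d_X^r=d_Y^r$). The two-sided bound $\max|M-2\lambda|\le \max(\max M,2\lambda)$ is precisely what covers both the small-$\lambda$ and large-$\lambda$ regimes cleanly; everything else is a routine transcription of the Fused-MPGW argument.
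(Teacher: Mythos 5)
Your proposal is correct and follows essentially the same route as the paper: cancel the linear term $\omega_1 C$, bound the two quadratic contractions by $\max|M-2\lambda|^2$ times the $\ell^1$-to-Frobenius estimate \eqref{pf:convergence_bound_B}, and then control $\max|M-2\lambda|$ by $\max(\max M,2\lambda)$ via \eqref{eq:ab_inq}. Your explicit justification of the two-sided bound $\max|M-2\lambda|\le\max(\max M,\,2\lambda)$ (noting that entries of $M-2\lambda$ lie in $[-2\lambda,\,\max M-2\lambda]$) is actually spelled out more carefully than in the paper, which asserts this step without comment.
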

\begin{proof}

We have: 

\begin{align*}
&\|\nabla\mathcal{L}_{C,M-2\lambda}(\gamma)-\nabla\mathcal{L}_{C,M-2\lambda}(\gamma')\|_F^2\nonumber\\
&=\|(\omega_1 C+\omega_2 ((M-2\lambda)\circ \gamma+(M^\top-2\lambda)\circ\gamma)) - (\omega_1 C+\omega_2  ((M-2\lambda)\circ\gamma'+(M^\top-2\lambda)\circ\gamma'))\|_F^2\\
&=2\omega_2^2\|(M-2\lambda)\circ(\gamma-\gamma')\|_F^2+2\omega_2^2\|(M^\top-2\lambda)\circ(\gamma-\gamma')\|_F^2\nonumber\\
&=2\omega_2^2\sum_{i,j}\left(\left[(M-2\lambda)\circ(\gamma-\gamma')\right]_{i,j}\right)^2+2\omega_2^2\sum_{i,j}\left(\left[(M^\top-2\lambda)\circ(\gamma-\gamma')\right]_{i,j}\right)^2\nonumber\\
&=2\omega_2^2\sum_{i,j}\left(\sum_{i',j'}(M_{i,j,i',j'}-2\lambda)(\gamma_{i',j'}-\gamma'_{i',j'})\right)^2+2\omega_2^2\sum_{i,j}\left(\sum_{i',j'}(M^\top_{i,j,i',j'}-2\lambda)(\gamma_{i',j'}-\gamma'_{i',j'})\right)^2\\
&\leq 2\omega_2^2\sum_{i,j}\left(\sum_{i',j'}|M_{i,j,i',j'}-2\lambda||\gamma_{i',j'}-\gamma'_{i',j'}|\right)^2+
2\omega_2^2\sum_{i,j}\left(\sum_{i',j'}(|M^\top_{i,j,i',j'}-2\lambda||\gamma_{i',j'}-\gamma'_{i',j'}|\right)^2\nonumber \\
&\leq4\omega_2^2\underbrace{\max(|M-2\lambda|)^2}_{A}\cdot
\underbrace{\sum_{i,j}^{n,m}\left(\sum_{i',j'}^{n,m}|\gamma_{i',j'}-\gamma'_{i',j'}|\right)^2}_B.
\end{align*}

Term $B$ can be bounded by \eqref{pf:convergence_bound_B}. Thus, we obtain the upper bound \eqref{eq:lip_fpgw}. Furthermore, when $L(\mathrm{r}_1,\mathrm{r}_2)=|\mathrm{r}_1-\mathrm{r}_2|^p$, from inequality \eqref{eq:ab_inq}, we have: 
\begin{align}
\max (|M-2\lambda|)\leq \max(\max_{i,i'}2^{p-1}(C^X)^p_{i,i'}+\max_{j,j'}2^{p-1}(C^Y)_{j,j'}^p,2\lambda):=\max(2^{p-1}(C^X)^{p-1},2^{p-1}(C^Y)^p,2\lambda)\nonumber 
\end{align}
and we obtain the bound \eqref{eq:lip_fpgw_2}. 
\end{proof}

\begin{proof}[Proof of Proposition \ref{pro:convergence_fpgw}.]
Combining Lemma \ref{lem:lip_bound}, \eqref{eq:diam_2} and \eqref{eq:converge_bound_fpgw}, we obtain the upper bound \eqref{eq:g_k_bound_pgw} and complete the proof. 
\end{proof}

\section{Sinkhon Algorithm for Fused Partial Gromov Wasserstein problem}
For convenience, in this section we default to setting 
$$L(d_X^r,d_Y^R)=\|d_X-d_Y\|^2,$$ 

while all propositions, algorithms and proofs extend without loss of generality to a generic loss function $L(d_X^r,d_Y^r)$. 

\subsection{Sinkhorn Algorithm for FPGW metric}
Given mm-spaces $\mathbb{X}=(X,d_X,\mu),\mathbb{Y}=(Y,d_Y,\nu)$, the general unbalanced Fused-Gromov Wasserstein setting, the entropic problem is defined as the following: 
\begin{align}
\min_{\gamma\in\mathcal{M}_+(\mathcal{X}\times\mathcal{Y})}& \mathcal{L}(\gamma)+\epsilon D_{KL}(\gamma^{\otimes 2}\parallel (\mu\otimes\nu)^{\otimes2})\label{eq:entropy-fugw}\\
\mathcal{L}(\gamma)&:=\omega_1\int_{X\times Y}d(x,y)d\gamma+\omega_2\int_{X^2\times Y^2}|d_X-d_Y|^2 d\gamma^{\otimes 2}\nonumber\\
&\qquad+\lambda(D_\phi(\gamma_1^{\otimes 2}\parallel \mu^{\otimes 2})+D_\phi(\gamma_2^{\otimes 2} \parallel \nu^{\otimes2}))\nonumber
\end{align}
where $D_\phi$ is $\phi-$divergence,

$$D_{KL}(\mu\parallel\nu)=\begin{cases}
\underbrace{\int \ln (\frac{d\mu}{d\nu})d\mu}_{\bar{D}_{KL}(\mu\parallel \nu)}+|\nu|-|\mu|&\text{if }\mu\ll \nu \\
+\infty &\text{elsewhere}
\end{cases}.$$

In the Fused Partial GW setting, $D_\phi$ becomes 
\begin{align}
&D_{PTV} (\mu\parallel\nu):=\begin{cases}
|\mu-\nu|_{TV}=|\nu-\mu| &\text{if }\mu\leq \nu \\
+\infty &\text{elsewhere}
\end{cases}\label{eq:ptv}.
\end{align}
Note, from the definition of \eqref{eq:ptv}, we can restrict the searching space for $\gamma$ to $\Gamma_\leq(\mu,\nu)$: 
\begin{align}
EFPGW (\mathbb{X},\mathbb{Y})&:=\min_{\gamma\in \Gamma_\leq(\mu,\nu)}\mathcal{L}(\gamma)+\epsilon D_{KL}(\gamma^{\otimes 2}\parallel (\mu\otimes \nu)^{\otimes2})\label{eq:entropy-fpgw}\\
&\mathcal{L}(\gamma)=\omega_1\langle c,\gamma \rangle+\omega_2 \langle |d_X-d_Y|^2\otimes \gamma^{\otimes 2} \rangle+\lambda(|\mu|^2+|\nu|^2-2|\gamma|^2) \nonumber  
\end{align}

Problem \eqref{eq:entropy-fpgw} can be further relaxed as:

\begin{align}
&\min_{\gamma,\pi\in\Gamma_\leq(\mu,\nu)}\mathcal{F}(\gamma,\pi)+\epsilon D_{KL}(\pi\otimes \gamma \parallel (\mu\otimes \nu)^{\otimes2})\label{eq:entropy-fpgw-2}\\
&\mathcal{F}(\mu,\nu):=\omega_1 \langle d(x,y),\frac{\gamma+\pi}{2}\rangle+\omega_2\langle |d_X-d_Y|^2,\gamma\otimes \pi\rangle+\lambda(|\mu|^2+|\nu|^2-2|\gamma||\pi|)  \nonumber 
\end{align}
It is clear $\mathcal{F}(\gamma,\gamma)=\mathcal{F}(\gamma)$. Thus, $\eqref{eq:entropy-fpgw-2}\leq \eqref{eq:entropy-fpgw}$, and we denote \eqref{eq:entropy-fpgw-2} as $LB-FPGW_\lambda(\mathbb{X},\mathbb{Y})$ (lower bound of Fused Partial Gromov Wasserstein). Essentially, the Sinkhorn algorithm aims to solve $LB-FPGW$. 

We first introduce the following fundamental proposition. Note, a similar version can be found in \cite[Proposition 4]{sejourne2021unbalanced}:  
\begin{proposition}\label{pro:sinkhorn-fpgw}
Given a fixed $\pi\in \Gamma_\leq(\mu,\nu)$, considering the problem: 
$$\min_{\gamma\in\Gamma_\leq(\mu,\nu)}\mathcal{F}(\pi,\gamma)+\epsilon D_{KL}(\pi\otimes \gamma\parallel (\mu\otimes \nu)^{\otimes 2}),$$
it is equivalent to solve the following entropic optimal partial transport problem: 
\begin{align}
\min_{\gamma\in\Gamma_\leq(\mu,\nu)}\int_{X\times Y} c_{\pi}(x,y) d\gamma+\lambda |\pi| (|\mu|+|\nu|-2|\gamma|)+\epsilon |\pi|D_{KL}(\gamma\parallel \mu\otimes \nu) \label{eq:entropic_pot2}  
\end{align}
where
\begin{align}
c_\pi(x,y)&=\frac{1}{2}\omega_1d(x,y)+\omega_2 [|d_X-d_Y|^2\circ \pi] (x,y)+\epsilon \overline{D}_{KL}(\pi\parallel \mu\otimes 
\nu)\nonumber\\
[|d_X-d_Y|^2\circ \pi](x,y) &=\int_{X\times Y}|d_X(x,x')-d_Y(y,y')|d\pi(x',y')\nonumber 
\end{align} 
\end{proposition}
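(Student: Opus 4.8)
The plan is to expand the objective $\mathcal{F}(\pi,\gamma)+\epsilon D_{KL}(\pi\otimes\gamma\parallel(\mu\otimes\nu)^{\otimes2})$ term by term, isolating the parts depending on the free variable $\gamma$ from those that are constant once $\pi$ is fixed, and to recognize the collection of $\gamma$-dependent terms as precisely the entropic partial-OT objective \eqref{eq:entropic_pot2}. Since the two problems will then differ by an additive constant independent of $\gamma$, they share the same minimizer, which is the intended meaning of ``equivalent''.

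First, because $d_X,d_Y$ are symmetric the tensor $|d_X-d_Y|^2$ is invariant under swapping the two argument pairs, and the feature and mass terms are manifestly symmetric; hence $\mathcal{F}(\pi,\gamma)=\mathcal{F}(\gamma,\pi)$ and I may expand the definition directly. The feature term $\omega_1\langle d,\tfrac{\gamma+\pi}{2}\rangle$ splits into $\tfrac12\omega_1\langle d,\gamma\rangle=\int \tfrac12\omega_1 d(x,y)\,d\gamma$, which becomes the first summand of $c_\pi$, plus the constant $\tfrac12\omega_1\langle d,\pi\rangle$. The structure term $\omega_2\langle|d_X-d_Y|^2,\gamma\otimes\pi\rangle$ is rewritten by Fubini as $\int \omega_2\,[|d_X-d_Y|^2\circ\pi](x,y)\,d\gamma$, contributing the second summand of $c_\pi$. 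The mass penalty $\lambda(|\mu|^2+|\nu|^2-2|\gamma||\pi|)$ contributes the $\gamma$-dependent part $-2\lambda|\pi|\,|\gamma|$, which matches the $-2\lambda|\pi||\gamma|$ inside $\lambda|\pi|(|\mu|+|\nu|-2|\gamma|)$; the remaining pieces are constant in $\gamma$.

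The crux is the entropic term. Here I would use the tensorization identity for the unnormalized relative entropy of product measures, namely $\overline{D}_{KL}(\pi\otimes\gamma\parallel\kappa\otimes\kappa)=|\gamma|\,\overline{D}_{KL}(\pi\parallel\kappa)+|\pi|\,\overline{D}_{KL}(\gamma\parallel\kappa)$ with $\kappa:=\mu\otimes\nu$, which follows from $\tfrac{d(\pi\otimes\gamma)}{d(\kappa\otimes\kappa)}(z,z')=\tfrac{d\pi}{d\kappa}(z)\tfrac{d\gamma}{d\kappa}(z')$, the logarithm turning the product into a sum, and integrating each factor separately. Converting to the full $D_{KL}$ via $D_{KL}(\cdot\parallel\cdot)=\overline{D}_{KL}(\cdot\parallel\cdot)+|\text{second}|-|\text{first}|$ then yields, after discarding terms constant in $\gamma$, exactly $\epsilon|\gamma|\,\overline{D}_{KL}(\pi\parallel\kappa)+\epsilon|\pi|\,\overline{D}_{KL}(\gamma\parallel\kappa)-\epsilon|\pi||\gamma|$. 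The first summand is $\int \epsilon\,\overline{D}_{KL}(\pi\parallel\kappa)\,d\gamma$, i.e.\ the third summand of $c_\pi$; the latter two are precisely $\epsilon|\pi|\,D_{KL}(\gamma\parallel\kappa)$ up to a further $\gamma$-constant. Collecting everything reproduces \eqref{eq:entropic_pot2}.

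The main obstacle is the careful bookkeeping of the mass-correction constants $|\kappa|^2-|\pi||\gamma|$ and $|\pi||\kappa|-|\pi||\gamma|$ generated when passing between $\overline{D}_{KL}$ and $D_{KL}$: one must verify that every genuinely $\gamma$-dependent contribution survives and matches, while the leftover constants (which do differ between the two formulations) are irrelevant for the $\argmin$. Establishing the tensorization identity rigorously for general positive Radon measures, including the absolute-continuity cases in which the divergences are finite, is the one non-routine analytic step; the remaining manipulations are linearity of integration and Fubini.
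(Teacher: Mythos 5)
Your proposal is correct and follows essentially the same route as the paper's proof: both expand $\mathcal{F}(\pi,\gamma)$ term by term, use the tensorization identity $D_{KL}(\pi\otimes\gamma\parallel(\mu\otimes\nu)^{\otimes 2})=|\pi|D_{KL}(\gamma\parallel\mu\otimes\nu)+|\gamma|\overline{D}_{KL}(\pi\parallel\mu\otimes\nu)+\text{const}$ to split the entropic term, absorb $|\gamma|\,\overline{D}_{KL}(\pi\parallel\mu\otimes\nu)$ as a linear cost against $\gamma$, and discard the $\gamma$-independent constants. The constant bookkeeping you flag as the main obstacle is handled in the paper exactly as you describe, so no substantive difference remains.
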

\begin{proof}
Fix $\pi\in \Gamma_\leq(\mu,\nu)$ in \eqref{eq:entropy-fpgw-2}. 
Since $\pi,\gamma\leq \mu\otimes \nu$, we have $\pi,\gamma\ll \mu\otimes \nu$, thus we have: 
\begin{align}
&D_{KL}(\pi\otimes \gamma\parallel (\mu\otimes \nu)^{\otimes 2})\nonumber\\
&=\int_{(X\times Y)^2}\ln (\frac{d\pi d\gamma}{d\mu\otimes \nu \cdot d\mu\otimes\nu})d\mu d\gamma+(|\mu||\nu|)^2-|\pi||\gamma|\nonumber\\
&=\int_{X\times Y} \left[\int_{X\times Y} \ln(\frac{d\pi}{d\mu\otimes \nu})d\pi\right] d\gamma +\int_{X\times Y} \left[\int_{X\times Y} \ln(\frac{d\gamma}{d\mu\otimes \nu})d\gamma\right] d\pi+(|\mu||\nu|)^2-|\pi||\gamma|\nonumber\\
&=\int_{X\times Y}d\pi \overline{D}_{KL}(\gamma\parallel \mu\otimes \nu)+\int_{X\times Y}d\gamma \overline{D}_{KL}(\pi\parallel \mu\otimes \nu)+(|\mu||\nu|)^2-|\pi||\gamma|\nonumber\\
&=|\pi| D_{KL}(\gamma\parallel \mu\otimes \nu)+|\gamma|\overline{D}_{KL}(\pi\parallel \mu\otimes \nu)+((|\mu||\nu|)^2-|\pi||\mu||\nu|)\label{eq:kl_decomp}
\end{align}
we obtain: 
\begin{align}
&\mathcal{F}(\pi,\gamma)+\epsilon D_{KL}(\gamma\times \pi\parallel (\mu\otimes \nu)^{\otimes 2})\nonumber\\
&=\omega_1 \langle d, \frac{\gamma+\pi}{2}\rangle +\omega_2 \langle |d_X-d_Y|^2,\gamma\otimes \pi\rangle+\lambda(D_{PTV}(\gamma_1\otimes \pi_1\parallel \mu^{\otimes2})+D_{PTV} (\gamma_2\otimes \mu_2\parallel \nu^{\otimes2}))\nonumber\\
&\quad+\epsilon D_{KL}(\gamma\times \pi\parallel (\mu\otimes \nu)^{\otimes 2})\nonumber \\
&=\omega_1 \langle d, \frac{\gamma+\pi}{2}\rangle +\omega_2 \langle |d_X-d_Y|^2,\gamma\otimes \pi\rangle+\lambda(|\mu|^2+|\nu|^2-2|\gamma||\pi|)+\epsilon D_{KL}(\gamma\times \pi\parallel (\mu\otimes \nu)^{\otimes 2})\nonumber\\
&=\underbrace{\frac{1}{2}\omega_1 \langle d,\pi\rangle+\lambda(|\mu|^2+|\nu|^2)+\epsilon [(|\mu||\nu|)^2-|\pi||\mu||\nu|]}_{constant}\nonumber\\
&\quad+\langle \frac{1}{2}\omega_1 d+\omega_2 |d_X-d_Y|^2\circ \pi+\epsilon \overline{D}_{KL}(\pi\parallel\mu\otimes \nu),\gamma  \rangle-2\lambda|\pi||\gamma|)+\epsilon |\pi| D_{KL}(\gamma \parallel \mu\otimes \nu)\nonumber
\end{align}
where we use the fact \eqref{eq:kl_decomp} and 
\begin{align}
&|\gamma|\epsilon \overline{D}_{KL}(\pi\parallel \mu\otimes \nu)=\langle \epsilon \overline{D}_{KL}(\pi\parallel \mu\otimes \nu),\gamma \rangle\nonumber.  
\end{align}
If we ignore the constant part, the remaining part is exactly the entropic partial OT (up to a constant $\lambda|\pi|(|\mu|+|\nu|)$), and we complete the proof. 
\end{proof}

Note, the partial OT problem \eqref{eq:entropic_pot2} can be solved by the classical Sinkhorn algorithm. See e.g. \cite{chizat2018scaling,bai2024sinkhorn}.

\begin{algorithm}\caption{Sinkhorn Partial OT} 
\label{alg:sinkhorn-pot}
\KwInput{$c, \epsilon,\lambda,\mathrm{p},\mathrm{q}$}
\KwOutput{$\gamma$}
Initialize $u=1_n,v = 1_m$, $K = e^{-c/\epsilon}p q^\top $\\
\For{$l=1,2,\ldots$}{
$u = \min (\frac{\mathrm{p}}{Kv},e^{\lambda/\epsilon})$\\
$v = \min (\frac{\mathrm{q}}{K^\top u},e^{\lambda/\epsilon})$ \\
\text{If $(u,v)$ converge}, {break}
}
$\gamma \gets (u_i K_{ij} v_j)_{ij}$
\end{algorithm}

\subsection{Sinkhorn for the fused Mass-constraint Gromov Wasserstein problem.}
Similar to the preious section, in the entropic regularziation setting, the fused mass-constraint Gromov Wasserstein problem is defined as: 
\begin{align}
EFMPGW(\mathbb{X},\mathbb{Y}):=\inf_{\gamma\in\Gamma_\leq^\rho(\mu,\nu)}\omega_1 \langle c,\gamma \rangle+\omega_2 \langle |d_X-d_Y|,\gamma^{\otimes2} \rangle +\epsilon D_{KL}(\gamma^{\otimes2}\parallel (\mu\otimes \nu)^{\otimes2})\label{eq:entropic-fmpgw}
\end{align}
and it can be relaxed as 
\begin{align}
LB-EFMPGW(\mathbb{X},\mathbb{Y})&:=\inf_{\gamma,\pi\in\Gamma_\leq^\rho(\mu,\nu)}\underbrace{\omega_1 \langle c,\frac{1}{2}(\gamma+\pi)\rangle+\omega_2 \langle |d_X-d_Y|,\gamma^{\otimes2} \rangle}_{\mathcal{F}(\gamma,\pi)}\nonumber\\
&\qquad+\epsilon D_{KL}(\gamma\otimes \pi\parallel (\mu\otimes \nu)^{\otimes2})\label{eq:entropic-fmpgw-2} 
\end{align}

Note, since $\frac{1}{2}(\gamma+\pi)\in \Gamma_\leq^\rho(\mu,\nu)$, and $\mathcal{F}(\gamma,\gamma)=\mathcal{F}(\gamma)$, thus, we have 
$$LB-EFMPGW(\mathbb{X},\mathbb{Y})\leq EFMPGW(\mathbb{X},\mathbb{Y}).$$

Similar to the previous section, we have the following property:

\begin{proposition}
Given a fixed $\pi\in \Gamma_\leq^\rho(\mu,\nu)$, considering the problem: 
$$\min_{\gamma\in\mathcal{M}_+(X\times Y)}\mathcal{F}(\pi,\gamma)+\epsilon D_{KL}(\pi\otimes \gamma\parallel (\mu\otimes \nu)^{\otimes 2}),$$
it is equivalent to solve the following entropic optimal partial transport problem: 
\begin{align}
\min_{\gamma\in\Gamma_\leq^\rho(\mu,\nu)}\int_{X\times Y} c_{\pi}(x,y) d\gamma+\epsilon D_{KL}(\gamma\parallel \mu\otimes \nu) \label{eq:entropic_mpot}  \end{align}
where $c_\pi$ is defined in Proposition \ref{pro:sinkhorn-fpgw}. 
\end{proposition}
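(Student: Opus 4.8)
The plan is to follow the proof of Proposition~\ref{pro:sinkhorn-fpgw} essentially verbatim, adapting it to the mass-constrained feasible set. The only structural difference between the unconstrained relaxation \eqref{eq:entropy-fpgw-2} and the mass-constrained relaxation \eqref{eq:entropic-fmpgw-2} is that the total-variation penalty $\lambda(|\mu|^2+|\nu|^2-2|\gamma||\pi|)$ is absent, being replaced by the hard constraint $|\gamma|=|\pi|=\rho$. Consequently, once $\pi\in\Gamma_\leq^\rho(\mu,\nu)$ is fixed, I would set $|\pi|=\rho$ and expand the objective exactly as in the earlier proof, reading $\mathcal{F}(\pi,\gamma)=\tfrac12\omega_1\langle d,\gamma+\pi\rangle+\omega_2\langle|d_X-d_Y|^2,\gamma\otimes\pi\rangle$ (the bilinear form, with no $\lambda$-term here).

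First I would apply the tensorization identity \eqref{eq:kl_decomp}, which holds for any $\pi,\gamma\ll\mu\otimes\nu$ irrespective of their masses:
\[
D_{KL}(\pi\otimes\gamma\parallel(\mu\otimes\nu)^{\otimes2})=|\pi|\,D_{KL}(\gamma\parallel\mu\otimes\nu)+|\gamma|\,\overline{D}_{KL}(\pi\parallel\mu\otimes\nu)+\big((|\mu||\nu|)^2-|\pi||\mu||\nu|\big).
\]
Substituting $|\pi|=\rho$, I would then separate the terms that depend on $\gamma$ from those that do not. The $\pi$-only contributions, namely $\tfrac12\omega_1\langle d,\pi\rangle$ and $\epsilon\big((|\mu||\nu|)^2-\rho|\mu||\nu|\big)$, are constants and get discarded.

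Collecting the remaining $\gamma$-linear terms gives
\[
\big\langle \tfrac12\omega_1 d+\omega_2\,[|d_X-d_Y|^2\circ\pi]+\epsilon\,\overline{D}_{KL}(\pi\parallel\mu\otimes\nu),\ \gamma\big\rangle=\langle c_\pi,\gamma\rangle,
\]
with $c_\pi$ exactly the cost defined in Proposition~\ref{pro:sinkhorn-fpgw}, while the diagonal term from the identity above contributes the entropic penalty $\epsilon|\pi|\,D_{KL}(\gamma\parallel\mu\otimes\nu)$. Since the active $D_{KL}$ term forces $\gamma\ll\mu\otimes\nu$, and the feasibility $\gamma\in\Gamma_\leq^\rho(\mu,\nu)$ inherited from the relaxation \eqref{eq:entropic-fmpgw-2} supplies the partial-marginal and mass constraints, the problem reduces to \eqref{eq:entropic_mpot}, establishing the equivalence.

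The one point requiring care, and the main (mild) obstacle, is bookkeeping the fixed mass $\rho$. Unlike the unconstrained case, where the penalty $-2\lambda|\pi||\gamma|$ survives as a genuine $\gamma$-linear term, here the absence of the $\lambda$-penalty together with $|\gamma|\equiv\rho$ collapses the corresponding contributions into constants. I would also flag that the coefficient of the entropic term is truly $\epsilon|\pi|=\epsilon\rho$; since $\rho$ is fixed this only rescales the regularization strength, so the statement's $\epsilon\,D_{KL}$ is to be read up to this constant factor (equivalently, $\rho$ may be absorbed into $\epsilon$). With that convention the reduction to the entropic partial-OT problem \eqref{eq:entropic_mpot} is immediate, and it is solvable by Algorithm~\ref{alg:sinkhorn-pot}.
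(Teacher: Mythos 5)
Your proposal is correct and follows essentially the same route as the paper: fix $\pi$, apply the tensorization identity \eqref{eq:kl_decomp}, discard the $\pi$-only constants, and collect the $\gamma$-linear terms into $c_\pi$ plus the entropic term with coefficient $\epsilon|\pi|=\epsilon\rho$. Your remark that the $\rho$ factor on the entropic term must be read into (or absorbed by) $\epsilon$ is a fair catch --- the paper's own derivation and Algorithm~\ref{alg:sink-fmpgw} carry the $\epsilon\rho$ explicitly even though the proposition statement omits it.
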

\begin{proof}
Similar to the proof in Proposition \ref{pro:sinkhorn-fpgw}, given $\pi\in\Gamma^\rho_\leq(\mu,\nu)$, we have 

\begin{align}
&\mathcal{F}(\pi,\gamma)+\epsilon D_{KL}(\gamma\times \pi\parallel (\mu\otimes \nu)^{\otimes 2})\nonumber\\
&=\omega_1 \langle d, \frac{\gamma+\pi}{2}\rangle +\omega_2 \langle |d_X-d_Y|^2,\gamma\otimes \pi\rangle+\epsilon D_{KL}(\gamma\times \pi\parallel (\mu\otimes \nu)^{\otimes 2})\nonumber \\
&=\underbrace{\frac{1}{2}\omega_1 \langle d,\pi\rangle+\epsilon [(|\mu||\nu|)^2-|\pi||\mu||\nu|]}_{constant}\nonumber\\
&\quad+\langle \underbrace{\frac{1}{2}\omega_1 d+\omega_2 |d_X-d_Y|^2\circ \pi+\epsilon \overline{D}_{KL}(\pi\parallel\mu\otimes \nu)}_{c_\pi},\gamma +\epsilon \underbrace{|\pi|}_{=\rho} D_{KL}(\gamma \parallel \mu\otimes \nu)\nonumber
\end{align}
and we complete the proof. 
\end{proof}

The Entropic partial OT problem \eqref{eq:entropic_mpot} can be solved by the corresponded Sinkhorn algorithm \cite{benamou2015iterative,bai2024sinkhorn}, thus we can derive the Sinkhorn algorithm problem for the relaxed fused-mass constraint Gromov Wasserstein problem:

\begin{algorithm}[bt]
   \caption{Sinkhorn Algorithm for FMPGW}
   \label{alg:sink-fmpgw}
\begin{algorithmic}
\STATE 
  {\bfseries Input:} $C\in \mathbb{R}^{n\times m}, C^X\in \mathbb{R}^{n\times n},C^Y\in \mathbb{R}^{m\times m}, p\in \mathbb{R}^n_+, q\in\mathbb{R}^m_+$, $\omega_2\in[0,1],\rho\in[0,\min(|p|,|q|)]$.
   \STATE {\bfseries Output:}
$\gamma$

\FOR{$k=1,2,\ldots$}
\STATE $\pi \gets \gamma$
\STATE Solve the Sinkhorn partial OT problem \eqref{eq:entropic_mpot} via algorithm \eqref{alg:sinkhorn-mopt}:
$$\gamma\gets \min_{\gamma\in\Gamma^\rho_\leq(\mu,\nu)}\int_{X\times Y} c_{\pi}(x,y) d\gamma+\epsilon \rho D_{KL}(\gamma\parallel \mu\otimes \nu) $$

Fix $\gamma$ and solve the similar Sinkhorn partial OT problem \eqref{eq:entropic_mpot} via algorithm \eqref{alg:sinkhorn-mopt}:
 
$$\pi\gets \min_{\gamma\in\Gamma^\rho_\leq(\mu,\nu)}\int_{X\times Y} c_{\gamma}(x,y) d\pi+\epsilon \rho D_{KL}(\pi\parallel \mu\otimes \nu) $$

\STATE Break if $\pi\approx \gamma$
\ENDFOR
\end{algorithmic}
\end{algorithm}

\begin{algorithm}\caption{Sinkhorn Mass-constraint Partial OT}
\label{alg:sinkhorn-mopt}
\KwInput{$p, q, \rho, c$}
\KwOutput{$\gamma$}
\mycommfont{Initialization}\\
$K = e^{-c/\epsilon}pq^\top$\\
\For{i=1,2,3}{
    $\xi^{i} \gets 1_{n \times m}$ \\
}
$\gamma^{(0)} \gets K \frac{\rho}{\|K\|}$ \\
\mycommfont{Main loop}\\
$k=0$
\For{$l=0,1,2,\ldots$}{
    \For{i=1,2,3}{
        $k \gets k+1$ \\ 
        $\gamma^{(k)} \gets \text{Proj}_{\mathcal{C}_i}^{KL}(\gamma^{(k-1)} \odot \xi^{i})$ \\
        $\xi^{i} \gets \xi^{i} \odot \frac{\gamma^{(k-1)}}{\gamma^{(k)}}$
    }
    Break if $\gamma^{(k)}$ converges
}
\end{algorithm}
where 
\begin{align}
\mathcal{C}_1&=\{\gamma\in \mathbb{R}_+^{n\times m}:\gamma_2\leq q\}\nonumber\\
\mathcal{C}_2&=\{\gamma\in \mathbb{R}_+^{n\times m}:\gamma_1\leq p\}\nonumber\\
\mathcal{C}_3&=\{\gamma\in \mathbb{R}_+^{n\times m}:|\gamma|=\rho \}\nonumber\\
\text{Proj}_{\mathcal{C}_i}^{KL}(\gamma)&:=\min_{\gamma^i\in \mathcal{C}_i}KL(\gamma^i\parallel \gamma)=\begin{cases}
    \text{diag}(\min (\frac{p}{\gamma_2},1_n))\gamma \\
    \gamma\text{diag}(\min(\frac{q}{\gamma^\top 1_n},1_m))\\
    \gamma\frac{\eta}{\|\gamma\|}
\end{cases}\nonumber 
\end{align}

\section{Fused Partial Gromov Wasserstein Barycenter}\label{sec:barycenter}
In discrete setting, suppose we have $K$ mm-spaces $\mathbb{X}^k=(X^k\subset\mathbb{R}^d,d_{X^k},\mu^k:=\sum_{i=1}^{n^k}p^k_i\delta_{x_i^k})$ for $k=1,\ldots, K$ and a fixed pmf function $p\in \mathbb{R}_+^n$ where $n\in\mathbb{N}$. Note for each $k\in [1:K]$, let $C^k=[d^r_{X^k}(x^k_i,x^k_{i'})]_{i,i'\in[1:n^k]}\in\mathbb{R}^{n^k\times n^k}$ and let $X^k=[x^k_1,\ldots x^k_{n^k}]^\top\in\mathbb{R}^{n_k\times d}$, we have $(X^k, \mathrm{p}^k, C^k)$ can represent the space $\mathbb{X}^k$. Thus, for convenience, we use the convention $\mathbb{X}^k=(X^k,p^k,C^k)$ to denote the corresponsded mm-space. 

Then, given pmf function $\mathrm{p}\in\mathbb{R}_+^n$ where $n\in\mathbb{N}$, and $\beta_1,\ldots \beta_K\ge 0$ with $\sum_{k=1}^K\beta_k=1$,  the fused-Gromov Wasserstein barycenter problem \cite{titouan2019optimal} is defined as: 

\begin{align}
\min_{C\in\mathbb{R}^{n\times n},X\in\mathbb{R}^{n\times d}}\beta_i FGW_{L}(\mathbb{X},\mathbb{X}^k), &\text{ where }\mathbb{X}=(X,\mathrm{p},C), \label{eq:fused-PGW barycenter}
\end{align}
where we adapt notation $FGW_L(\cdot,\cdot)$ to simplify the notation  $FGW_{r,L}(\cdot,\cdot)$ since $r$ has been incorporated into matrix $C^k$.

Inspired by this work, we present the following fused Partial GW barycenter problems.

\subsection{Fused-MPGW barycenter.}

Choose values   $\rho_1,\rho_2,\ldots,\rho_K$ where $\rho_k\in [0,\min(|p|,|p^k|)]$ for each $k$. In addition, choose $(\omega_1^k,\omega_2^k)\in[0,1]$ for $k=1,2,\ldots K$ such that $\omega_1^k+\omega_2^k=1,\forall k$. 
The fused mass-constrained Partial GW barycenter problem is defined as: 
\begin{align}
&\min_{C\in\mathbb{R}^{n\times n},X\in\mathbb{R}^{n\times d}}\sum_{k=1}^K\beta_k FMPGW_{L,\rho_k}(\mathbb{X},\mathbb{X}^k),  \nonumber\\
&=\min_{\substack{C\in\mathbb{R}^{n\times n}\\
X\in\mathbb{R}^{n\times d}}}\min_{\substack{\gamma^k\in\Gamma^{\rho_k}_\leq(p,p^k)\\
k\in[1:K]}}\sum_{k=1}^K\beta_k \left(\omega_1^k \langle D(X,X^k),\gamma^k\rangle+\omega_2^k\langle L(C,C^k) (\gamma^k)^{\otimes2}\rangle\right),\label{eq:fused-mpgw barycenter} 
\end{align}
where $\mathbb{X}=(X,p,C)$, $\mathbb{X}^k=(X^k,p^k,C^k)$, and  $D(X,X^K)=[\|x_i-x^k_j\|^2]_{i\in[1:n],j\in[1:n^k]}\in\mathbb{R}^{n\times n_k}$. 

Note that the above problem is convex with respect to $(C, X)$ when $\gamma^k$ is fixed for each $k$. However, it is not convex with respect to $\gamma^k$ for each $k$. Similar to classical fused-GW, it can be solved iteratively by updating $(C, X)$ and $(\gamma^k)_{k=1}^K$ alternatively in each iteration. 

\textbf{Step 1.} Given $(C,X)$, we update $(\gamma^k)_{k=1}^K$. 
Note, when $(C,X)$ is fixed,  each optimal $(\gamma^k)^*$ is given by 
\begin{align}
(\gamma^k)^*=\arg\min_{\gamma^k\in\Gamma_\leq^{\rho_k}(p,p^k)}\omega_1^k \langle D(X,X^k),\gamma^k\rangle +\omega_2^k\langle L(C,C^k),(\gamma^k)^{\otimes2}\rangle, \nonumber  
\end{align}
which is a solution for the fused partial GW problem $F-MPGW_{\rho^k}(\mathbb{X},\mathbb{X}^k)$. 

\textbf{Step 2.} Given $\gamma^k$, update $(C,X)$. 

Suppose $\gamma^k$ is given for each $k$, the objective function in \eqref{eq:fused-PGW barycenter} becomes: 
\begin{align}
&\min_{C\in\mathbb{R}^{n\times n},X\in\mathbb{R}^{n\times d}}\sum_{k=1}^K\beta_k \left(\omega_1 \langle D(X,X^k),\gamma^k\rangle+\omega_2\langle L(C,C^k), (\gamma^k)^{\otimes2}\rangle\right)\nonumber\\
&=\omega_1\underbrace{\min_{X\in \mathbb{R}^{n\times d}}\sum_{k=1}^K\beta_k\langle D(X,X^k),\gamma^k \rangle}_{A}+\omega_2 \underbrace{\min_{C\in\mathbb{R}^{n\times n}}\sum_{k=1}^K\beta_k\langle L(C,C^k),(\gamma^k)^{\otimes2} \rangle}_{B} \nonumber. 
\end{align}

Problem $B$ admits the solution (we refer \cite{bai2024efficient} Section M for details). In particular, if $L$ satisfies \eqref{eq:cond_L}, $f_1,h_1$ are differentiable, then
\begin{align}
  C=\left(\frac{f_1'}{h_1'}\right)^{-1}\left(\frac{\sum_k\xi_k\gamma^kh_2(C^k)(\gamma^k)^\top}{\sum_k\xi_k\gamma_1^k(\gamma_1^k)^\top}\right).\label{eq:optimal_C}  
\end{align}

In particular, when $L(\mathrm{r}_1,\mathrm{r}_2)=|\mathrm{r}_1-\mathrm{r}_2|^2$, the above formula becomes:
\begin{align}
C=\frac{\sum_k\xi_k\gamma^kC^k(\gamma^k)^\top}{\sum_k\xi_k\gamma_1^k(\gamma_1^k)^\top} \nonumber. 
\end{align}

\subsection{Solving the subproblem A.}
We first introduce the barycentric projection: 

For each $(X^k,p^k,\gamma^k)$, the barycentric projection \cite{bai2023linear} is defined by 
\begin{align}
\hat x^k_i=\begin{cases}
\frac{1}{\gamma^k_1[i]}\sum_{j=1}^{n_k}\gamma^k_{i,j}x^k_{j} &\text{if }\gamma_1^k[i]=\sum_{j=1}^{n_k}\gamma_{i,j}^k>0, \\
x_i &\text{elsewhere}.
\end{cases}\label{eq:barycentric_proj} 
\end{align}

Note, if $|\gamma^k|=|p|,\forall k$, by \cite{cuturi2014fast} ( Eq 8), the  optimal $X$ is given by 
$$X=[x_1,\ldots x_n]^\top, x_i=\sum_{k=1}^K\beta_k \hat{x}_i^k,\forall i\in[1:n].$$
In this subsection, we will extend the above result to the general case. 

\begin{proposition}\label{pro:optimal_X}
Any matric $X$ satisfies the following is a solution for problem $A$. 
\begin{align}
X=[x_1,\ldots, x_n]^\top, x_i=\frac{\sum_{k=1}^K\beta_k\hat{x}_i^k}{\sum_{k=1}^K\beta_i\gamma^k_1[i]}\qquad\text{if }\sum_{k=1}^K\beta_i\gamma_1^k[i]>0\label{eq:optimal_X}.
\end{align}
\end{proposition}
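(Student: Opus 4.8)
The plan is to exploit the fact that the objective in problem $A$ is \emph{separable} across the rows of $X$. Writing $X=[x_1,\dots,x_n]^\top$ with $x_i\in\mathbb{R}^d$, the entry $D(X,X^k)_{i,j}=\|x_i-x^k_j\|^2$ depends only on the single row $x_i$, so
\[
\sum_{k=1}^K\beta_k\langle D(X,X^k),\gamma^k\rangle=\sum_{i=1}^n f_i(x_i),\qquad f_i(x_i):=\sum_{k=1}^K\beta_k\sum_{j=1}^{n_k}\gamma^k_{i,j}\,\|x_i-x^k_j\|^2,
\]
and the minimization over $X\in\mathbb{R}^{n\times d}$ decouples into $n$ independent problems $\min_{x_i\in\mathbb{R}^d}f_i(x_i)$. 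It therefore suffices to solve each row separately.

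Next I would observe that each $f_i$ is a convex quadratic in $x_i$. Since $\beta_k\ge 0$ and $\gamma^k_{i,j}\ge 0$, expanding $\|x_i-x^k_j\|^2=\|x_i\|^2-2\langle x_i,x^k_j\rangle+\|x^k_j\|^2$ and using $\gamma^k_1[i]=\sum_{j}\gamma^k_{i,j}$ gives
\[
f_i(x_i)=\Big(\sum_{k=1}^K\beta_k\,\gamma^k_1[i]\Big)\|x_i\|^2-2\Big\langle x_i,\ \sum_{k=1}^K\beta_k\sum_{j=1}^{n_k}\gamma^k_{i,j}x^k_j\Big\rangle+\mathrm{const}.
\]
Because $f_i$ is convex and differentiable, the first-order condition $\nabla f_i(x_i)=0$ is both necessary and sufficient for a global minimizer. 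Setting the gradient to zero yields $\big(\sum_k\beta_k\gamma^k_1[i]\big)x_i=\sum_k\beta_k\sum_j\gamma^k_{i,j}x^k_j$.

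When $\sum_k\beta_k\gamma^k_1[i]>0$ I would divide through and recognize, from the definition of the barycentric projection, that $\sum_{j}\gamma^k_{i,j}x^k_j=\gamma^k_1[i]\,\hat x^k_i$, so that $x_i$ is the claimed $\beta$-weighted average of the projections $\hat x^k_i$. The only genuinely delicate point is the degenerate index set where $\sum_k\beta_k\gamma^k_1[i]=0$: here the quadratic coefficient of $f_i$ vanishes, and nonnegativity of every summand forces $\beta_k\gamma^k_{i,j}=0$ for all $k,j$, so $f_i$ is constant and any $x_i$ is optimal (consistent with the fallback $\hat x^k_i=x_i$ built into the barycentric projection). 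I expect this bookkeeping of the degenerate rows, together with reconciling the normalization in the stated formula, to be the main thing to get right; the underlying optimization is a routine weighted least-squares computation.
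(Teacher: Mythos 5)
Your argument is correct and essentially the paper's own: both decouple the objective over the rows $x_i$ and reduce each to a weighted least-squares problem, with the same case split on whether $\sum_{k}\beta_k\gamma^k_1[i]$ vanishes (your observation that the degenerate rows make $f_i$ constant is exactly the paper's Case~2). Note that the minimizer you derive, $x_i=\bigl(\sum_{k}\beta_k\gamma^k_1[i]\,\hat x^k_i\bigr)/\bigl(\sum_{k}\beta_k\gamma^k_1[i]\bigr)$, is what the paper's proof and the subsequent matrix-form remark actually establish; the display in the proposition statement omits the factor $\gamma^k_1[i]$ from the numerator and writes $\beta_i$ for $\beta_k$, which appears to be a typo rather than a discrepancy in your reasoning.
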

Note, for each $i$, we use convention $\frac{0}{0}=0$ if $\sum_{k=1}^K\beta_i\gamma_1^k[i]=0$. 
\begin{proof}
Problem A can be written in terms of barycentric projection \eqref{eq:barycentric_proj}. In particular, let $\mathcal{D}_k:=\{i:\sum_{i,j}\gamma^k_{i,j}>0\}$ and $\mathcal{D}=\bigcup_{k=1}^K\mathcal{D}_k$. 
We have: 
\begin{align}
A&=\min_{X\in\mathbb{R}^{n\times d}}\sum_{k=1}^K\beta_k\sum_{i=1}^n\gamma_1^k[i](\|x_i-\hat x_i^k\|^2)\nonumber\\
&=\min_{X\in\mathbb{R}^{n\times d}}\sum_{i=1}^n\sum_{k=1}^K\beta_k\gamma_1^k[i](\|x_i-\hat x_i^k\|^2)\nonumber\\
&=\sum_{i=1}^n\min_{x_i\in\mathbb{R}^d}\sum_{k=1}^K\beta_k\gamma_1^k[i](\|x_i-\hat x_i^k\|^2).\nonumber
\end{align}
For each $i$, we have two cases:

Case 1:  $\sum_{k=1}^K\beta_k\gamma_1^k[i]>0$. 
Then optimal $x_i$ is given by the weighted average vector:
$$\frac{\sum_{k\in D_i}\beta_k\gamma_1^k[i]\hat{x}^k_i}{\sum_{i\in D_k}\beta_k\gamma^k_1[i]}.$$

Case 2:  $\sum_{k=1}^K\beta_k\gamma_1^k[i]=0$. The problem becomes 
$$\min_{x_i\in \mathbb{R}^d}0,$$
and there is no requirement for $x_i$. 
And we complete the proof. 

\end{proof}
\begin{remark}
In practice, the above formulation can be described by matrices. In particular, let 
$\hat{X}^k=[\hat{x}^k_1,\ldots \hat{x}^k_n]^\top$, we have  
$$\hat{X}^k=\frac{\gamma^k X^k}{\gamma^k_1},\gamma^k_1=\gamma^k 1_m.$$
Then we have \eqref{eq:optimal_X} becomes
$$X=\frac{\sum_{k=1}^K\beta_k\gamma_1^k1_d^\top\odot\hat{X}}{\sum_{k=1}^K\beta_k\gamma_1^k},$$
where $\odot$ denotes the element-wise multiplication; the notation $\frac{A}{B}$ where $A\in \mathbb{R}^{n\times d}, B\in \mathbb{R}^{n}$ denotes the following
$$\frac{A}{B}=[A[:,1]/B_1,\ldots,A[:,n]/B_n]^\top,$$
and we use $\frac{0}{0}=0$ if any element in the denominator is 0.

\end{remark}

\subsection{Fused-PGW barycenter}
Similar to the previous subsection, we define and derive the solution for the fused-Partial GW problem. 

Given $\lambda_1,\ldots \lambda_K\ge 0, p\in \mathbb{R}_+^n$, the fused partial GW barycenter problem is defined as: 

\begin{align}
&\min_{C\in\mathbb{R}^{n\times n},X\in\mathbb{R}^{n\times d}}\sum_{k=1}^K\beta_k FPGW_{L,\lambda_i}(\mathbb{X},\mathbb{X}^k), \text{where }\mathbb{X}=(X,p,C)\nonumber\\
&=\min_{C\in\mathbb{R}^{n\times n},X\in \mathbb{R}^{n\times d}} \min_{\gamma^k\in\Gamma_\leq(p,p^k):k\in[1:K]}\sum_{k=1}^K\omega_1^k\langle D(X,X^k),\gamma^k\rangle \nonumber\\
&\qquad+\omega_2^k\langle L(C,C^k),(\gamma^k)^{\otimes2}\rangle+\lambda_k(|p^k|^2+|p|^2-2|\gamma^k|^2).
\label{eq:fpgw_barycenter}
\end{align}

Similar to the fused MPGW barycenter problem, the above problem can be solved iteratively. In each iteration, we have two steps: 

\text{Step 1.} Given $X,D$, update $\gamma^k$. 
Note finding each $\gamma^k$ is essentially solving the fused-PGW problem: 
\begin{align}
\gamma^k=\arg\min_{\gamma\in\Gamma_\leq(p,p^k)}\omega_1^k\langle D(X,X^k),\gamma\rangle+\omega_2^k \langle L(C,C^k),(\gamma^{\otimes2}) \rangle+\lambda_k(|p^k|^2+|p|^2-2|\gamma^k|^2) \nonumber   
\end{align}
And it can be solved by algorithm \ref{alg:fpgw}. 

\text{Step 2.} Given $\gamma^k:k\in[1:K]$, update $C,X$.

In this case, \eqref{eq:fpgw_barycenter} becomes 
\begin{align}
\omega_2^k\underbrace{\min_{C\times \mathbb{R}^{n\times n}} \langle L(C,C^k)-2\lambda_k,(\gamma^k)^{\otimes2} \rangle}_{A}+ \omega_1^k\underbrace{\min_{X\in\mathbb{R}^{n\times d}}\langle D(X,X^k),\gamma^k\rangle}_{B}.
\end{align}

Optimal $C$ for subproblem A is \eqref{eq:optimal_C} by  [Proposition M.2 \cite{bai2024efficient}]. Optimal $X$ for subproblem B is given by \eqref{eq:optimal_X} by the Proposition \ref{pro:optimal_X}. 

\section{Relation between FGW, FPGW and FMPGW.}
In this section, we briefly discuss the relation between FGW, Fused-PGW problem \eqref{eq:fpgw}, and the fused-MPGW \eqref{eq:fmpgw}.

First, we introduce the following ``equivalent relation'' between FPGW and FMPGW. It can be treated as the generalization of Proposition L.1. in \cite{bai2024efficient} in the fused-PGW formulation. 

\begin{proposition}\label{pro:fpgw_fmpgw}
Given mm-spaces $\mathbb{X}=(X,d_X,\mu),\mathbb{Y}=(Y,d_Y,\nu)$, $r\ge 1,\lambda>0$. Suppose $\gamma^*$ is a minimizer for FPGW problem $FPGW_{r,L,\lambda}(\mathbb{X},\mathbb{Y})$, then $\gamma^*$ is also a minimizer for Fused-MPGW problem $FPGW_{r,L,\rho}(\mathbb{X},\mathbb{Y})$ where $\rho=|\gamma^*|$. 
\end{proposition}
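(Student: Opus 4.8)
The plan is to exploit that the FPGW objective differs from the FMPGW objective only by the mass-penalty $\lambda(|\mu|^2 + |\nu|^2 - 2|\gamma|^2)$, a quantity that depends on the plan $\gamma$ solely through its total mass $|\gamma|$. Consequently, once we restrict to the level set $\Gamma_\leq^\rho(\mu,\nu) = \{\gamma \in \Gamma_\leq(\mu,\nu): |\gamma| = \rho\}$, this penalty becomes an additive constant, and minimizing one objective over that set is equivalent to minimizing the other.

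Concretely, writing $G(\gamma) := \omega_1 \gamma(C) + \omega_2 \gamma^{\otimes 2}(L(d_X^r, d_Y^r))$ for the FMPGW objective and using the simplified form of FPGW established in the FUGW-equivalence proposition, the FPGW objective is $F(\gamma) = G(\gamma) + \lambda(|\mu|^2 + |\nu|^2 - 2|\gamma|^2)$. First I would set $\rho := |\gamma^*|$ and record that $\gamma^* \in \Gamma_\leq^\rho(\mu,\nu)$, so it is feasible for $FMPGW_{r,L,\rho}(\mathbb{X},\mathbb{Y})$.

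Next, take an arbitrary competitor $\gamma \in \Gamma_\leq^\rho(\mu,\nu)$. Since $\Gamma_\leq^\rho(\mu,\nu) \subseteq \Gamma_\leq(\mu,\nu)$, the plan $\gamma$ is admissible in the FPGW problem, and global optimality of $\gamma^*$ yields $F(\gamma^*) \leq F(\gamma)$. Because $|\gamma| = |\gamma^*| = \rho$, the penalty terms on both sides coincide and equal $\lambda(|\mu|^2 + |\nu|^2 - 2\rho^2)$; cancelling them leaves $G(\gamma^*) \leq G(\gamma)$. As $\gamma$ was arbitrary in $\Gamma_\leq^\rho(\mu,\nu)$ and $\gamma^*$ itself lies in that set, this is exactly the assertion that $\gamma^*$ minimizes $FMPGW_{r,L,\rho}(\mathbb{X},\mathbb{Y})$.

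There is no substantive obstacle here; the argument is a one-directional penalty-versus-constraint equivalence, analogous to the relation between POT and MPOT, and it generalizes Proposition L.1 of \cite{bai2024efficient}. The only points requiring care are (i) verifying that $\gamma^*$ is indeed feasible for the constrained problem, which is immediate from $\rho = |\gamma^*|$, and (ii) noting that the direction is genuinely asymmetric: an arbitrary FMPGW minimizer for a prescribed $\rho$ need not be an FPGW minimizer, since changing the target mass changes the penalty constant and may favor a plan of different total mass. I would also remark that the existence of the minimizer $\gamma^*$ is guaranteed by Proposition \ref{pro:fpgw_minimizer}, so the hypothesis of the statement is non-vacuous.
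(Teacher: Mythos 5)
Your argument is correct and coincides with the paper's own proof: both pick an arbitrary competitor $\gamma\in\Gamma_\leq^\rho(\mu,\nu)\subset\Gamma_\leq(\mu,\nu)$, invoke optimality of $\gamma^*$ for the penalized problem, and cancel the mass-penalty term $\lambda(|\mu|^2+|\nu|^2-2|\gamma|^2)$ using $|\gamma|=|\gamma^*|=\rho$. Your added remarks on feasibility and on the asymmetry of the penalty-versus-constraint relation are accurate but not needed beyond what the paper records.
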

\begin{proof}
Pick $\gamma\in\Gamma_\leq^\rho(\mu,\nu)\subset \Gamma_\leq(\mu,\nu)  $. Since $\gamma^*$ is optimal for $FPGW_{r,L,\lambda}(\mathbb{X},\mathbb{Y})$, we have: 
\begin{align}
&\omega_1\langle C,\gamma^*\rangle +\omega_2\langle L,(\gamma^*)^{\otimes2}\rangle+\lambda(|\mu|^2+|\nu|^2-2|\gamma^*|^2)  \nonumber\\
&\leq \omega_1\langle C,\gamma\rangle +\omega_2\langle L,\gamma^{\otimes2}\rangle+\lambda(|\mu|^2+|\nu|^2-2|\gamma|^2).  \nonumber
\end{align}
Combine it with the fact $|\gamma|=|\gamma^*|=\rho$, we complete the proof. 
\end{proof}

Next, we discuss the relation between FGW and FMPGW problems. 
\begin{proposition}\label{pro:fmpgw_fgw}
Under the setting of Proposition \ref{pro:fpgw_fmpgw}, 
suppose $|\mu|=|\nu|$, then 
$$FMPGW_{r,L,\rho}(\mathbb{X},\mathbb{Y})=FGW_{r,L,\rho}(\mathbb{X},\mathbb{Y}).$$
\end{proposition}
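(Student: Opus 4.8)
The plan is to reduce the claimed identity to an equality of feasible sets, exploiting that the two problems minimize literally the same objective. Indeed, $FMPGW_{r,L,\rho}$ minimizes $\omega_1\gamma(C)+\omega_2\gamma^{\otimes2}(L(d_X^r,d_Y^r))$ over $\Gamma_\leq^\rho(\mu,\nu)$, while $FGW_{r,L}$ minimizes the \emph{same} functional over $\Gamma(\mu,\nu)$. Hence it suffices to show that, in the balanced regime $|\mu|=|\nu|$ with constraint mass taken to be the full common mass $\rho=|\mu|=|\nu|$, the two admissible sets coincide, namely
$$\Gamma_\leq^\rho(\mu,\nu)=\Gamma(\mu,\nu).$$
Once this is in hand, both infima are taken of one and the same functional over one and the same set, so the values agree and no computation involving the integrand is needed.

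For the set equality I would argue both inclusions. The inclusion $\Gamma(\mu,\nu)\subseteq\Gamma_\leq^\rho(\mu,\nu)$ is immediate: if $\gamma_1=\mu$ and $\gamma_2=\nu$, then trivially $\gamma_1\leq\mu$, $\gamma_2\leq\nu$, and $|\gamma|=|\gamma_1|=|\mu|=\rho$. For the reverse inclusion, take $\gamma\in\Gamma_\leq^\rho(\mu,\nu)$ and set $\eta:=\mu-\gamma_1$. Since $\gamma_1\leq\mu$ in the sense of \eqref{eq:Gamma_leq}, $\eta$ is a nonnegative measure, and its total mass is $\eta(X)=|\mu|-|\gamma_1|=|\mu|-|\gamma|=|\mu|-\rho=0$; a nonnegative measure of zero total mass is the zero measure, whence $\gamma_1=\mu$, and symmetrically $\gamma_2=\nu$, so $\gamma\in\Gamma(\mu,\nu)$. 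This ``domination plus equal total mass forces equality of marginals'' step is the mechanical heart of the argument, and it goes through verbatim in the general Radon-measure setting rather than only the discrete one.

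The genuinely delicate point is reconciling the constraint mass with the common total mass. The clean identity rests on $\rho=|\mu|=|\nu|$, the maximal admissible value $\min(|\mu|,|\nu|)$, for which the feasible-set identity above applies directly. Under the literal inheritance from Proposition~\ref{pro:fpgw_fmpgw}, however, $\rho=|\gamma^*|$ for an FPGW minimizer $\gamma^*$, and I expect the main obstacle to be verifying that this equals $|\mu|$. The equal-mass hypothesis $|\mu|=|\nu|$ is precisely what makes full-mass couplings available in $\Gamma_\leq(\mu,\nu)$, so one would argue that a minimizer may be taken with $|\gamma^*|=|\mu|=|\nu|$ by checking that the penalty $\lambda(|\mu|^2+|\nu|^2-2|\gamma|^2)$, which vanishes exactly at full mass in the balanced case, does not render a strictly mass-deficient plan optimal. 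This is the step requiring care: for small $\lambda$ the total-variation penalty can make destroying mass strictly cheaper, so the equal-mass hypothesis (or a restriction of $\rho$ to its maximal value) must be used to exclude that possibility. Once $\rho$ is pinned to the common mass, the feasible-set equality closes the argument immediately.
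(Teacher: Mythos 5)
Your proof is correct and takes essentially the same route as the paper, which disposes of the claim in one line by asserting the feasible-set identity $\Gamma_\leq^\rho(\mu,\nu)=\Gamma(\mu,\nu)$; your ``domination plus equal total mass forces equality of marginals'' argument is exactly the content behind that assertion. Your closing caveat about pinning $\rho$ to the common mass $|\mu|=|\nu|$ is well taken — the paper leaves this implicit here and only resolves it in the subsequent Proposition \ref{pro:fpgw_fgw} via the large-$\lambda$ condition of Lemma \ref{lem:big_lambda}, which guarantees $|\gamma^*|=\min(|\mu|,|\nu|)$.
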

\begin{proof}
In this case, $\Gamma_\leq^\rho(\mu,\nu)=\Gamma(\mu,\nu)$ and we complete the proof. 
\end{proof}

Similarly, in the extreme case, the FPGW problem can also recover the FGW problem. We first introduce the following lemma: 
\begin{lemma}\label{lem:big_lambda}
Suppose $\mu,\nu$ are supported in compact set and $|\mu|,|\nu|>0$. 
Let $X=\text{supp}(\mu),Y=\text{supp}(\nu)$, and
$\max L(d_X^r,d_Y^r):=\max_{x,x'\in X,y,y'\in Y}L(d_X^r(x,x'),d_Y^r(y,y')),\max C:=\max_{x\in X,y\in Y}C(x,y)$. 

Suppose 

\begin{align}
  &2\lambda\ge \omega_1\frac{1}{\min(|\mu|,|\nu|)}\max (C)+\omega_2 \max L(d_X^r,d_Y^r)\label{eq:big_lambda_cond}
\end{align}
there exists a solution, denoted as $\gamma^*$, for $FPGW_{r,L,\lambda}(\mathbb{X},\mathbb{Y})$ such that
\begin{align}
|\gamma^*|=\min(|\mu|,|\nu|)\label{eq:big_lambda_res}.
\end{align} 
If we replace ``$\ge$'' by ``$>$'' in the inequality
$\eqref{eq:big_lambda_cond}$, then every solution of $FPGW_{r,L,\lambda}(\mathbb{X},\mathbb{T})$ satisfies $\eqref{eq:big_lambda_res}$. 
\end{lemma}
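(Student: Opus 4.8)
The plan is to prove this by a direct \emph{mass-augmentation} argument on the constrained form \eqref{eq:fpgw}, whose objective I abbreviate as
$$\mathcal{J}(\gamma)=\omega_1\langle C,\gamma\rangle+\omega_2\langle L(d_X^r,d_Y^r),\gamma^{\otimes2}\rangle+\lambda(|\mu|^2+|\nu|^2-2|\gamma|^2).$$
The idea is that any feasible plan whose mass lies strictly below the maximal value $m:=\min(|\mu|,|\nu|)$ can be enlarged to mass exactly $m$ without increasing $\mathcal{J}$ (and with a strict decrease under the strict hypothesis). Since every $\gamma\in\Gamma_\leq(\mu,\nu)$ satisfies $|\gamma|=|\gamma_1|\leq|\mu|$ and $|\gamma|=|\gamma_2|\leq|\nu|$, we always have $|\gamma|\leq m$, and a minimizer $\gamma^*$ exists by Proposition \ref{pro:fpgw_minimizer}; so the task reduces to controlling how $\mathcal{J}$ changes as mass is added.

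First I would fix any $\gamma\in\Gamma_\leq(\mu,\nu)$ with $t:=|\gamma|<m$ and build a feasible augmentation. The residual marginals $\mu-\gamma_1\geq 0$ and $\nu-\gamma_2\geq 0$ have masses $|\mu|-t$ and $|\nu|-t$, each at least $m-t>0$. Taking the scaled product measure
$$\eta:=\frac{m-t}{(|\mu|-t)(|\nu|-t)}\,(\mu-\gamma_1)\otimes(\nu-\gamma_2),$$
a short check shows $\eta_1\leq\mu-\gamma_1$, $\eta_2\leq\nu-\gamma_2$ and $|\eta|=m-t$, so $\gamma':=\gamma+\eta\in\Gamma_\leq(\mu,\nu)$ with $|\gamma'|=m$. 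Next I would estimate the change $\Delta:=\mathcal{J}(\gamma')-\mathcal{J}(\gamma)$ term by term. The linear part contributes $\omega_1\langle C,\eta\rangle\leq\omega_1\max(C)\,(m-t)$. Expanding $(\gamma')^{\otimes2}=\gamma^{\otimes2}+\gamma\otimes\eta+\eta\otimes\gamma+\eta^{\otimes2}$ and using $L,\eta\geq 0$ together with $L\leq\max L(d_X^r,d_Y^r)$ on the support, the quadratic part contributes at most $\omega_2\max L(d_X^r,d_Y^r)\,[2t(m-t)+(m-t)^2]=\omega_2\max L(d_X^r,d_Y^r)\,(m-t)(m+t)$. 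The penalty changes by exactly $\lambda(-2|\gamma'|^2+2|\gamma|^2)=-2\lambda(m^2-t^2)=-2\lambda(m-t)(m+t)$.

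Collecting these and factoring out $(m-t)>0$ gives
$$\Delta\leq(m-t)\Big[\omega_1\max(C)+\big(\omega_2\max L(d_X^r,d_Y^r)-2\lambda\big)(m+t)\Big].$$
Since $m+t\geq m=\min(|\mu|,|\nu|)$, dividing the bracket by $(m+t)$ and invoking the hypothesis \eqref{eq:big_lambda_cond} yields
$$\frac{\omega_1\max(C)}{m+t}+\omega_2\max L(d_X^r,d_Y^r)\leq\frac{\omega_1\max(C)}{\min(|\mu|,|\nu|)}+\omega_2\max L(d_X^r,d_Y^r)\leq 2\lambda,$$
so $\Delta\leq 0$. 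Hence $\gamma'$ is again optimal and has mass $m$, proving the first claim by applying this construction to the minimizer $\gamma^*$. Under the strict version of \eqref{eq:big_lambda_cond} the bracket is strictly negative for every $t<m$, so no submaximal plan can be optimal, giving the second claim. The main obstacle I anticipate is the bookkeeping of the quadratic cross terms: one must track the factor $(m+t)$ carefully so that it pairs cleanly with the $-2\lambda|\gamma|^2$ gain, and recognize that the worst case is $t=0$, which is precisely what forces the $1/\min(|\mu|,|\nu|)$ weighting of $\max(C)$ in the hypothesis.
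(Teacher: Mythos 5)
Your proof is correct and follows essentially the same route as the paper's: augment any submaximal plan to one of mass $\min(|\mu|,|\nu|)$, then bound the change in objective by factoring the quadratic and penalty differences as $(m-t)(m+t)$, which is exactly the paper's $|\gamma'-\gamma|\cdot|\gamma+\gamma'|$ manipulation. The only difference is that you construct the augmenting measure explicitly as a rescaled product of the residual marginals, whereas the paper invokes an external domination lemma (Lemma E.1 of \cite{bai2024efficient}) for the existence of $\gamma'\ge\gamma$ with full mass; your version is self-contained but otherwise identical in substance.
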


\begin{proof}
For convenience, we suppose $|\mu|\leq |\nu|$.
Suppose \eqref{eq:big_lambda_cond} holds. 
Choose an optimal $\gamma\in\Gamma_\leq(\mu,\nu)$. 
By lemma E.1. in \cite{bai2024efficient}, there exists $\gamma'\in\Gamma_\leq(\mu,\nu)$ such that $\gamma\leq \gamma'$ with $|\gamma'|=|\mu|$, i.e. $\gamma'_1=\mu$. 

We have: 
\begin{align}
& \int_{X\times Y} \omega_1C(x,y) d\gamma' +\int_{(X\times Y)^2} \omega_2L(d_X^r,d_Y^r) d\gamma'^{\otimes2}+\lambda(|\mu|^2+|\nu|^2-2|\gamma'|^2)\nonumber\\
&- \int_{X\times Y}\omega_1 C(x,y) d\gamma +\int_{(X\times Y)^2}\omega_2 L(d_X^r,d_Y^r) d\gamma^{\otimes2}+\lambda(|\mu|^2+|\nu|^2-2|\gamma|^2)\nonumber\\
&= \int_{X\times Y}\omega_1C(x,y) d(\gamma'-\gamma)+ \int_{(X\times Y)^2}\omega_2L(d_X^r,d_Y^r)-2\lambda  d(\gamma'^{\otimes 2}-\gamma^{\otimes2})\nonumber\\
&\leq \omega_1 \max (C)|\gamma'-\gamma|+\omega_2 (\max (L(d_X^r,d_Y^r))-2\lambda)|\gamma'^{\otimes2}-\gamma^{\otimes2}|\nonumber\\
&=\underbrace{\left(\omega_1 \frac{\max(C)}{|\gamma+\gamma'|}+\omega_2 \max (L(d_X^r,d_Y^r))-2\lambda\right)}_{A}\underbrace{|\gamma'^{\otimes2}-\gamma^{\otimes2}|}_B\nonumber\\
&\leq 0 \label{pf:gamma'_optimal},
\end{align}
where \eqref{pf:gamma'_optimal} follows by the fact $A\leq 0,B\ge 0$. 
Thus we have $\gamma'$ is optimal.

Now we suppose
$$2\lambda>\omega_1\frac{1}{\min(|\mu|,|\nu|)}\max (C)+\omega_2 \max L(d_X^r,d_Y^r).$$
We assume there exists an optimal $\gamma$ such that  $|\gamma|<|\mu|$.

If we can find optimal $\gamma$ with $|\gamma|<|\mu|$, then $A<0$ and $B>0$ and we have
$\eqref{pf:gamma'_optimal}<0$. 
That is, $\gamma'$ admits a smaller cost. It is a contradiction since $\gamma$ is optimal, and we complete the proof.  
\end{proof}
\begin{remark}
If $\min(|\mu|,|\nu|)=0$, $\Gamma_\leq(\mu,\nu)=\{\mathbf{0}\}$ where $\{\mathbf{0}\}$ is the zero measure. In this case, \eqref{eq:big_lambda_res} is automatically satisfied, and there is no requirement for $\lambda$. 
\end{remark}
Based on this lemma, the FPGW can recover FGW in the extreme case: 
\begin{proposition}\label{pro:fpgw_fgw}
Under the setting of Proposition \ref{pro:fmpgw_fgw}, suppose $\lambda$ satisfies \eqref{eq:big_lambda_cond}, then 
$$FPGW_{r,L,\lambda}(\mathbb{X},\mathbb{Y})=FGW_{r,L}(\mathbb{X},\mathbb{Y}).$$
\end{proposition}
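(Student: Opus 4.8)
The plan is to exploit the large-$\lambda$ regime to force the FPGW minimizer to carry full mass, and then to observe that a full-mass plan in $\Gamma_\leq(\mu,\nu)$ is automatically an exact coupling on which the total-variation penalty vanishes, so that the FPGW and FGW objectives literally coincide. Concretely, since $|\mu|=|\nu|$ we have $\min(|\mu|,|\nu|)=|\mu|=|\nu|$, and the hypothesis that $\lambda$ obeys \eqref{eq:big_lambda_cond} is exactly what Lemma \ref{lem:big_lambda} requires. So the first step is to apply Lemma \ref{lem:big_lambda} to obtain a minimizer $\gamma^*$ of $FPGW_{r,L,\lambda}(\mathbb{X},\mathbb{Y})$ with $|\gamma^*|=\min(|\mu|,|\nu|)=|\mu|=|\nu|$.

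The second step is to upgrade this mass condition to exact marginals. Because $\gamma^*\in\Gamma_\leq(\mu,\nu)$ satisfies $\gamma^*_1\le\mu$ and $\gamma^*_2\le\nu$ while $|\gamma^*_1|=|\gamma^*_2|=|\gamma^*|=|\mu|=|\nu|$, the two domination inequalities are forced to be equalities, giving $\gamma^*_1=\mu$, $\gamma^*_2=\nu$, i.e. $\gamma^*\in\Gamma(\mu,\nu)$. The crucial observation is then that on the whole set $\Gamma(\mu,\nu)$ every plan has $|\gamma|=|\mu|=|\nu|$, so the penalty term $\lambda(|\mu|^2+|\nu|^2-2|\gamma|^2)$ is identically zero; consequently the FPGW objective, restricted to $\Gamma(\mu,\nu)$, is exactly the FGW objective $\omega_1\gamma(C)+\omega_2\gamma^{\otimes2}(L(d_X^r,d_Y^r))$.

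With these two facts the conclusion follows from two comparisons. Since $\Gamma(\mu,\nu)\subseteq\Gamma_\leq(\mu,\nu)$, taking the infimum of the FPGW functional over the larger feasible set can only decrease it, which yields $FPGW_{r,L,\lambda}(\mathbb{X},\mathbb{Y})\le FGW_{r,L}(\mathbb{X},\mathbb{Y})$. For the reverse inequality I would evaluate the FPGW functional at $\gamma^*$: since $\gamma^*$ is a minimizer this value equals $FPGW_{r,L,\lambda}(\mathbb{X},\mathbb{Y})$, and since $\gamma^*\in\Gamma(\mu,\nu)$ the penalty vanishes, so it also equals the FGW objective at $\gamma^*$, which is at least $FGW_{r,L}(\mathbb{X},\mathbb{Y})$. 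Combining the two gives equality. (Equivalently, one may route the same argument through Proposition \ref{pro:fpgw_fmpgw}, which makes $\gamma^*$ a minimizer of $FMPGW_{r,L,\rho}$ with $\rho=|\gamma^*|=|\mu|$, followed by Proposition \ref{pro:fmpgw_fgw}, which identifies $FMPGW_{r,L,|\mu|}$ with $FGW_{r,L}$ when $|\mu|=|\nu|$.)

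The hard part is entirely packaged inside Lemma \ref{lem:big_lambda}: everything after the existence of a full-mass minimizer is bookkeeping. The one subtlety to check carefully is that the penalty genuinely cancels under $|\gamma|=|\mu|=|\nu|$ — i.e. that it is the form $\lambda(|\mu|^2+|\nu|^2-2|\gamma|^2)$ used in Lemma \ref{lem:big_lambda} and not a variant carrying an extra $\omega_2$ weight on $|\gamma|^2$, which would leave a nonzero residue when $\omega_2<1$ and break the identity. Confirming this, together with the collapse $\Gamma_\leq^{\rho}(\mu,\nu)=\Gamma(\mu,\nu)$ at $\rho=|\mu|=|\nu|$, is the only place where care is needed.
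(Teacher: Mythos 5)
Your proof is correct and takes essentially the same route as the paper: the paper likewise invokes Lemma~\ref{lem:big_lambda} to obtain a full-mass minimizer and then chains Propositions~\ref{pro:fpgw_fmpgw} and~\ref{pro:fmpgw_fgw} --- exactly the alternative you note in parentheses --- while your direct two-inequality comparison (penalty vanishes on $\Gamma(\mu,\nu)$, plus evaluation at $\gamma^*$) is just an explicit unpacking of that chain. Your flagged subtlety is resolved the right way: the intended penalty is $\lambda(|\mu|^2+|\nu|^2-2|\gamma|^2)$ with no $\omega_2$ factor, as used in the proof of Lemma~\ref{lem:big_lambda} and as follows from \eqref{eq:fpgw_orig}, even though the display \eqref{eq:fpgw} literally reads $\omega_2\gamma^{\otimes 2}(L-2\lambda)$.
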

\begin{proof}
By Lemma \ref{lem:big_lambda}, there exists optimal $\gamma$ for $FPGW_{r,L,\lambda}(\mathbb{X},\mathbb{Y})$ with $|\gamma|=|\mu|=|\nu|$. By Proposition \ref{pro:fpgw_fmpgw}, we have $\gamma$ is optimal for $FMPGW_{r,L,\rho}(\mathbb{X},\mathbb{Y})$ where $\rho=|\mu|=|\nu|$. 
By Proposition \ref{pro:fmpgw_fgw}, we have $\gamma$ is optimal for $FGW_{r,L}(\mathbb{X},\mathbb{Y})$. Thus, $\gamma$ and we complete the proof. 
\end{proof}

\section{Measure graph similarity via Fused GW/PGW.}\label{sec:graph_mm}
In the following, we discuss how to model graphs as mm-spaces. Based on this modeling, fused-GW and fused-PGW can be adapted to measure graph similarity.

Given graphs $G_1 = (V_1, E_1)$ and $G_2 = (V_2, E_2)$, where $V_1, V_2$ are sets of vertices (nodes) and $E_1, E_2$ are sets of edges.

First, consider $G_1$. Suppose $V_1 = \{v_1^1, \ldots, v^1_{N_1}\}$. We construct the mm-space $\mathbb{X}_1 = (V_1, d_{V_1}, \mu^1 = \sum_{i=1}^{N_1} \mathrm{p}_i^1 \delta_{v_i})$, where $\mathrm{p}_i^1 > 0$ for all $i$. Let $\mathrm{p}^1 = [p_1^1, \ldots, p_{N_1}^1]$. Note that when $\sum(\mathrm{p}^1) = 1$, $\mathrm{p}^1$ represents the probability mass function (pmf) of the measure $\mu^1$. For general cases, we refer to $\mathrm{p}^1$ as the mass function (mf) of $\mu^1$.

In this formulation, the function $d_{V_1}: V_1^2 \to \mathbb{R}$ can be defined as follows:
\begin{itemize}
    \item \textbf{Adjacency indicator function:} $d_{V_1}(v_1, v_2) = 1$ if $(v_1, v_2) \in E_1$, and $d_{V_1}(v_1, v_2) = 0$ otherwise.
    \item \textbf{Shortest distance function:} $d_{V_1}(v_1, v_2)$ is the length of the shortest path connecting $v_1$ and $v_2$ if such a path exists, or $\infty$ if $v_1$ and $v_2$ are not connected.
\end{itemize}

Let $\mathcal{F}$ denote the space of feature assignments for all nodes. We define a feature function $f: V \to \mathcal{F}$ such that $x_i = f(v_i)$ represents the feature of the vertex $v_i$. For convenience, when the graph has discrete features, $\mathcal{F}$ is a discrete set; when the graph has continuous features, $\mathcal{F} = \mathbb{R}^d$, where $d$ is the dimension of each feature. In both cases, we define a metric $d_\mathcal{F}$ in the feature space.

The feature similarity function $d_\mathcal{F}$ is set as follows:
\begin{itemize}
    \item \textbf{Continuous feature graph:} Since $\mathcal{F} = \mathbb{R}^d$, we define $d_\mathcal{F}$ as the (squared) Euclidean distance in $\mathbb{R}^d$.
    \item \textbf{Discrete feature graph:} We first apply the Weisfeiler-Lehman kernel \cite{vishwanathan2010graph}, which encode elements in feature space as $\text{wl}: \mathcal{F} \to S^H$, where $S$ is finite discrete set, 
    $H \in \mathbb{N}$, typically set to $2$ or $4$. We then use the Hamming distance in $S^\mathrm{h}$ as $d_\mathcal{F}$. For details, refer to Section 4.2 of \cite{feydy2017optimal}.
\end{itemize}

\section{Rapameter Setting in Graph Matching and Geometry Matching Experiments}
We present the detailed parameter settings for the graph and geometry matching experiments in this section.  
First, consider the graph node distribution settings  
$$
\mu = \sum_{i=1}^n p_i \delta_{v_i^q}, \quad 
\nu = \sum_{j=1}^m q_j \delta_{v_j^o},
$$  
where $\mu$ denotes the node distribution of the query graph and $\nu$ denotes the node distribution of the original graph.  

For unbalanced GW methods,  \textbf{SpecGW} (Spectral Gromov--Wasserstein~\cite{chowdhury2021generalized}), 
\textbf{eBPG} (Entropic Bregman Projected Gradient~\cite{Solomon2016Entropic}), 
\textbf{BPG} (Bregman Projected Gradient~\cite{Xu2019Gromov}), 
\textbf{BAPG} (Bregman Alternating Projected Gradient~\cite{Li2023Convergent}), 
and \textbf{srGW} (Semi-relaxed Gromov--Wasserstein~\cite{VincentCuaz2022Semi}), 
we follow their default settings
\begin{align}
p_i = \tfrac{1}{n}, \quad q_j = \tfrac{1}{m}.\label{eq:balanced_p}    
\end{align}

For \textbf{PGW} (Partial Gromov--Wasserstein~\cite{chapel2020partial,bai2024efficient}), 
\textbf{UGW} (Unbalanced Gromov--Wasserstein~\cite{sejourne2021unbalanced}), 
\textbf{RGW} (Outlier Robust Gromov--Wasserstein~\cite{kong2024outlier}), 
\textbf{FUGW} (Fused Unbalanced Gromov--Wasserstein~\cite{thual2022aligning}), 
and \textbf{Sink-FUPGW} (Sinkhorn Fused Partial Gromov--Wasserstein, ours), 
in addition to the above balanced setting, we alternatively consider the unbalanced setting: 
\begin{align}
    p_i = q_j = \tfrac{1}{\min(n,m)} = \tfrac{1}{n}.\label{eq:unbalanced_p}
\end{align}
And select the best option for each method.  

Other parameter settings are summarized in Tables~\ref{tab:parameter_gm} and \ref{tab:parameter_geometry}.

In particular, in Spectral GW, $time$ is the time limit of the heat kernel. In methods: eBPG, BPG, BAPG, srGW, UGW, FUGW, RGW, and sink-FPGW, $\epsilon$ is the weight of the entropic regularization term. In BAPG, $\rho$ is the weight of the Bregman divergence penalty. In PGW, ``$\text{mass}$'' is the mass constraint.  In UGW, FUGW, sink-FPGW $\rho$ is the weight of marginal regularization terms. In FPGW, $\lambda$ is the weight of marginal regularization terms. In the fused methods (FGW, FUGW, sink-FPGW), 
$\alpha$ denotes the balance among graph structure and node features. In RGW, $\rho$ is the hard marginal constraint, $\eta$ is the weight of the marginal constraint (soft constraint), $t$ are the stepsizes in Bregman proximal alternating linearized minimization (BPALM).

\textbf{Other settings in geometry matching.}
In geometric matching, we only consider unbalanced GW methods (RGW, FUGW, FPGW), since UGW and PGW can be regarded as special cases of FUGW and FPGW and are therefore not included separately. We adopt the unbalanced mass function setting in \eqref{eq:unbalanced_p}. 

In this experiment, we further assume that a fixed pair of points from the ground-truth correspondence is known, denoted by $(x_0^q, x_0^o)$, where the first point belongs to the query shape and the second to the original shape. For each point $x$ in the query shape, its feature is defined as $d(x, x_0^q)$, where $d$ is the Euclidean distance on the shape. Features of points in the original shape are defined analogously.

\begin{table*}[h!]
\captionsetup{skip=2pt}
\caption{The parameter settings in all the methods. Datasets with node attributes include Synthetic, Enzymes, Cuneiform, COX2, BZR, Protein and AIDS.}
\label{tab:parameter_gm}
\vskip -0.10in
\begin{center}
\begin{small}

\begin{tabular}{lccc} 
\toprule
 &parameter&{Dataset with node attributes}&{Douban}\\
\midrule 

SpecGW & $time$ & 10 & 10\\
\hline
eBPG &$\epsilon$ & 0.1 & 0.01\\
\hline
BPG &$\epsilon$ & 0.2 & 0.01  \\
\hline
\multirow{2}{*}{BAPG} &$\rho$ & 0.1 & 0.01  \\
&$\epsilon$ & 1e-6 & 1e-6\\
\hline
{srGW} &$\epsilon$ & 2.0 & 10  \\
\hline
PGW & $\text{mass}$ & $\min(|p|
,|q|)$ & $\min(|p|
,|q|)$\\
\hline
\multirow{2}{*}{UGW} &$\epsilon$ & 0.01 & 1e-3 \\
&$\rho$ & 0.05 & 1.0\\
\hline
\multirow{2}{*}{RGW} &$\rho$ & 0.05/0.1 & 0.1  \\
&$\epsilon$ & 0.05 & 1e-3\\
&$t$ & 0.1 & 0.1\\
&$\tau$ & 0.1 & 0.1\\
\hline
FGW &$\alpha$ & 0.5 & 0.5  \\
\hline
\multirow{3}{*}{FUGW} &$\alpha$ & 0.5  &  0.5 \\
&$\epsilon$ & 0.01 & 2e-4 \\
&$\rho$ & 1.0 & 1.0 \\
\hline
\multirow{3}{*}{sink-FPGW(ours)} &$\alpha$ & 0.33 & 0.5  \\
&$\epsilon$ & 0.02 & 1e-4\\
&$\lambda$ & 1.0 & 1.0 \\

\bottomrule
\end{tabular}

\end{small}
\end{center}
\end{table*}

\begin{table*}[h!]
\captionsetup{skip=2pt}
\caption{Parameter setting of all the methods in geometry matching.}
\label{tab:parameter_geometry}
\vskip -0.10in
\begin{center}
\begin{small}

\begin{tabular}{lccc} 
\toprule
 &parameter&Upper Body&Lower Body\\
\midrule 

\multirow{2}{*}{RGW} &$\rho$ &0.1 & 0.1  \\
&$\epsilon$ & 1e-3 & 1e-3\\
&$t$ & 0.1 & 0.1\\
&$\tau$ & 0.1 & 0.1\\
\hline
\multirow{3}{*}{FUGW} &$\alpha$ & 0.5  &  0.5 \\
&$\epsilon$ & 0.005 & 0.1 \\
&$\rho$ & 1.0 & 1.0 \\
\hline
\multirow{3}{*}{sink-FPGW(ours)} &$\alpha$ & 0.5 & 0.5  \\
&$\epsilon$ & 1e-3 & 1e-4\\
&$\lambda$ & 1.0 & 1.0 \\

\bottomrule
\end{tabular}

\end{small}
\end{center}
\end{table*}

\section{Graph Classification}

\textbf{Dataset Setup.} We consider four widely used benchmark datasets, divided into two groups. The first group includes \textit{Mutag} \cite{debnath1991structure} and \textit{MSRC-9} \cite{rossi2015network}, which consist of graphs with discrete attributes. The second group consists of vector-attributed graphs, including \textit{Synthetic} \cite{feragen2013scalable}, \textit{Cuneiform} \cite{kriege2016valid}, and \textit{Proteins} \cite{borgwardt2005shortest}.

For each dataset, we randomly select $50\%$ of the graphs and add outlier nodes. Specifically, for each selected graph, suppose $N$ is the number of vertices. We manually add $\eta N$ extra nodes, where $\eta \in \{0, 10\%, 20\%, 30\%\}$ represents the ``level of outliers/noise''. For clarity, nodes that are not outliers are referred to as ``regular nodes''. The outlier nodes are randomly connected to both the regular nodes and each other. The features of the outlier nodes are defined as follows:
\begin{itemize}
    \item For graphs with discrete features, each outlier node is assigned a label that does not occur among the regular nodes in the graph.
    \item For graphs with continuous features, suppose all features lie within a compact set $[x_1, y_1] \times \ldots \times [x_d, y_d] \subset \mathbb{R}^d$, where $d \in \mathbb{N}$ is the dimension of the feature space. We assign features to the outlier nodes by sampling vectors from $[y_1, y_1 + 2\text{sd}_1] \times \ldots \times [y_d, y_d + 2\text{sd}_d]$, where $\text{sd}_i$ is the standard deviation of all node features in dimension $i$.
\end{itemize}

Furthermore, for each graph $G$, let $N_G$ denote the number of regular nodes (i.e., nodes that are not outliers). We assume $N_G$ is known.

\begin{figure}[t]
    \centering
   \vspace{0.2in}
\includegraphics[width=\columnwidth]{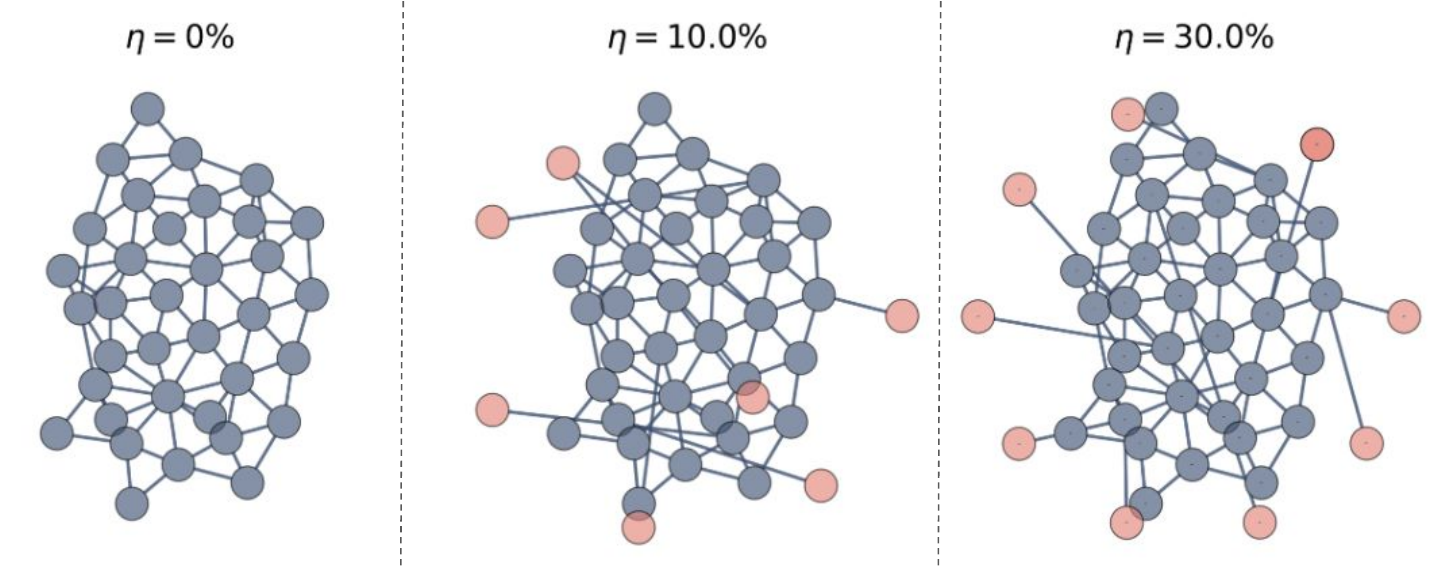}
    \vspace{-0.3in}
    \caption{We visualize the graph classification data in this plot. The parameter $\eta\in\{0,10\%,20\%,30\%\}$ represents the proportion of outlier nodes. The blue nodes are the regular nodes in the graph, and the pink nodes are the outliers.   
    }
    \label{fig:fig3}
\end{figure}

\textbf{Fused Gromov/Unbalanced Gromov/Partial Gromov Setting.}

We adapt the process in section \ref{sec:graph_mm} to model each graph $G$ as a mm-space. 
In particular, in the FGW setting, we define the (probability) mass function $\mathrm{p}$ as $\mathrm{p}_i = \frac{1}{N}$, where $N$ is the number of nodes in graph $G$. In the FUGW/FPGW setting, we define the mass function $\mathrm{p}$ as $\mathrm{p}_i = \frac{1}{N_G}$, where $N_G$ is the number of regular nodes in $G$. 

For the distance functions $d_{V_1}$ and $d_{V_2}$ in graphs $G_1$ and $G_2$, we use the ``shortest path'' metric in this experiment. We select FMPGW \eqref{eq:fmpgw} and set the parameter $\rho = 1$.

\textbf{Baseline methods.} 
We consider the fused Gromov Wasserstein \cite{feydy2017optimal} and the Unbalanced fused Gromov Wasserstein \cite{thual2022aligning} as baselines for both discrete and continuous feature graph datasets. In addition, 
for discrete feature graph datasets, we add 
the Weisfeler Lehman kernel (WLK) \cite{vishwanathan2010graph},Graphlet count kernel (GK) \cite{shervashidze2009efficient}, random walk kernel (RWK) \cite{kriege2018recognizing}, ODD-STh Kernel  (ODD) \cite{da2012tree}, vertex histogram (VH) \cite{sugiyama2015halting}, Lovasz Theta (LT) \cite{lovasz1979shannon}, SVM theta (ST) \cite{jethava2013lovasz} as baselines methods. For the continuous feature graphs dataset, we also
consider the HOPPER kernel \cite{feragen2013scalable} and the propagation kernel Neumann \cite{neumann2016propagation}. 

\textbf{Classifier setup.} We adapt the SVM classification model in this experiment. In particular, given a graph dataset, we first compute the pairwise distance via FGW/(other baselines)/FPGW. By using the approach given by \cite{beier2022linear, vay2019fgw}, we combine each distance with a support vector machine (SVM), applying stratified 10-fold cross-validation. In each iteration of cross-validation, we train an SVM using $\exp(-\sigma D)$ as the kernel, where $D$ is the matrix of pairwise distances (w.r.t. one of the considered distances) restricted to 9 folds, and compute the accuracy of the model on the remaining fold. We report the accuracy averaged over all 10 folds for each model.
\begin{table*}[t]
\centering
\caption{Kernel accuracy (\%) with standard deviation under different noise levels (0\%, 10\%, 20\%, 30\%). Top: results for discrete node features. Bottom: results for continuous node features.}
\setlength{\tabcolsep}{3.5pt}
\begin{tabular}{lcccccccc}
\toprule
& \multicolumn{4}{c}{MSRC-9} & \multicolumn{4}{c}{MUTAG} \\
\cmidrule(lr){2-5} \cmidrule(lr){6-9}
& $0\%$ & $10\%$ & $20\%$ & $30\%$ & $0\%$ & $10\%$ & $20\%$ & $30\%$ \\
\midrule
WLK & 85.5$^{\pm5.9}$ & 86.5$^{\pm5.2}$ & 86.9$^{\pm4.7}$ & 86.0$^{\pm3.7}$ & 75.0$^{\pm6.3}$ & 73.4$^{\pm6.7}$ & 72.4$^{\pm7.0}$ & 73.5$^{\pm7.8}$ \\
GK & 15.8$^{\pm4.5}$ & 15.4$^{\pm3.7}$ & 14.9$^{\pm6.2}$ & 17.7$^{\pm8.0}$ & 77.1$^{\pm6.3}$ & 70.2$^{\pm2.6}$ & 70.2$^{\pm4.4}$ & 67.0$^{\pm6.6}$ \\
RWK & 74.7$^{\pm5.8}$ & 73.8$^{\pm6.6}$ & 74.7$^{\pm7.4}$ & 74.2$^{\pm4.1}$ & 66.5$^{\pm2.3}$ & 66.5$^{\pm2.3}$ & 66.5$^{\pm2.3}$ & 66.5$^{\pm2.3}$ \\
ODD & 67.9$^{\pm6.5}$ & 69.2$^{\pm7.6}$ & 64.7$^{\pm8.2}$ & 63.8$^{\pm9.0}$ & 64.9$^{\pm4.0}$ & 64.9$^{\pm4.0}$ & 64.9$^{\pm4.0}$ & 64.9$^{\pm4.0}$ \\
VH & 86.4$^{\pm4.0}$ & \textbf{86.9}$^{\pm4.2}$ & \textbf{87.8}$^{\pm4.5}$ & \textbf{87.8}$^{\pm4.0}$ & 66.5$^{\pm2.3}$ & 66.5$^{\pm2.3}$ & 66.5$^{\pm2.3}$ & 66.5$^{\pm2.3}$ \\
LT & 15.8$^{\pm5.0}$ & 13.1$^{\pm4.3}$ & 13.1$^{\pm5.1}$ & 14.5$^{\pm4.5}$ & 69.7$^{\pm3.9}$ & 68.1$^{\pm2.5}$ & 66.5$^{\pm2.3}$ & 68.6$^{\pm8.5}$ \\
ST & 13.6$^{\pm0.2}$ & 13.6$^{\pm0.2}$ & 13.6$^{\pm0.2}$ & 13.6$^{\pm0.2}$ & 75.0$^{\pm2.7}$ & 72.9$^{\pm3.7}$ & 72.3$^{\pm3.9}$ & 72.3$^{\pm3.9}$ \\
\midrule
RGW & {79.7}$^{\pm 0.1}$ & {76.1}$^{\pm 0.1}$ & 59.8$^{\pm 0.1}$ & 60.2$^{\pm 0.1}$ & {81.8}$^{\pm 0.1}$ & {80.3}$^{\pm 0.1}$ & 77.7$^{\pm 0.1}$ & 76.1$^{\pm 0.0}$ \\

FGW & \textbf{87.4}$^{\pm4.4}$ & \textbf{86.9}$^{\pm4.7}$ & 63.8$^{\pm5.8}$ & 62.0$^{\pm5.0}$ & \textbf{85.6}$^{\pm5.6}$ & \textbf{83.5}$^{\pm5.7}$ & 79.8$^{\pm6.5}$ & 76.6$^{\pm6.8}$ \\
FUGW & 73.8$^{\pm6.1}$ & 68.3$^{\pm4.9}$ & 5.0$^{\pm3.7}$ & 5.4$^{\pm4.4}$ &
82.4$^{\pm5.5}$ & 81.9$^{\pm5.6}$ & \textbf{81.9}$^{\pm6.1}$ & \textbf{80.3}$^{\pm6.0}$ \\
FPGW (ours) & \textbf{87.0}$^{\pm4.7}$ & \textbf{88.3}$^{\pm4.1}$ & \textbf{87.3}$^{\pm4.4}$ & \textbf{86.9}$^{\pm4.7}$ & \textbf{85.6}$^{\pm5.6}$ & \textbf{85.1}$^{\pm5.9}$ & \textbf{84.6}$^{\pm5.6}$ & \textbf{82.5}$^{\pm6.3}$ \\
\midrule
\midrule
& \multicolumn{4}{c}{PROTEINS} & \multicolumn{4}{c}{SYNTHETIC} \\
\cmidrule(lr){2-5} \cmidrule(lr){6-9}
& $0\%$ & $10\%$ & $20\%$ & $30\%$ & $0\%$ & $10\%$ & $20\%$ & $30\%$ \\
\cmidrule(lr){2-5} \cmidrule(lr){6-9}
Propagation & 59.6$^{\pm0.2}$ & 59.6$^{\pm0.2}$ & 59.6$^{\pm0.2}$ & 59.6$^{\pm0.2}$ & 48.0$^{\pm7.0}$ & 51.3$^{\pm6.5}$ & 47.3$^{\pm5.5}$ & 53.0$^{\pm6.1}$ \\
GraphHopper & 69.7$^{\pm3.8}$ & 68.2$^{\pm4.8}$ & 67.3$^{\pm3.5}$ & 67.2$^{\pm4.7}$ & 86.0$^{\pm5.3}$ & 78.7$^{\pm9.5}$ & \textbf{70.3}$^{\pm7.1}$ & \textbf{70.7}$^{\pm10.7}$ \\
\midrule
RGW & {59.6}$^{\pm 0.0}$ & {47.3}$^{\pm 0.1}$ & 44.7$^{\pm 0.1}$ & 51.4$^{\pm 0.1}$ & 46.0$^{\pm 0.1}$ & 50.0$^{\pm 0.0}$ & 50.0$^{\pm 0.0}$ & 49.7$^{\pm 0.0}$ \\
FGW & \textbf{72.0}$^{\pm3.7}$ & 68.5$^{\pm3.1}$ & 65.3$^{\pm4.4}$ & 65.8$^{\pm4.4}$ & \textbf{97.7}$^{\pm4.0}$ & \textbf{95.7}$^{\pm2.9}$ & 48.7$^{\pm1.6}$ & 49.3$^{\pm0.7}$ \\
FUGW & 70.6$^{\pm3.7}$ & \textbf{69.7}$^{\pm2.7}$ & \textbf{69.0}$^{\pm3.3}$ & \textbf{68.7}$^{\pm2.4}$& 60.0$^{\pm6.1}$ & 48.3$^{\pm6.7}$ & 45.3$^{\pm4.5}$ & 45.3$^{\pm4.5}$ \\
FPGW (ours) & \textbf{72.0}$^{\pm3.7}$ & \textbf{71.6}$^{\pm4.0}$ & \textbf{71.2}$^{\pm4.9}$ & \textbf{69.2}$^{\pm5.5}$ & \textbf{97.7}$^{\pm4.0}$ & \textbf{96.3}$^{\pm3.1}$ & \textbf{97.7}$^{\pm4.0}$ & \textbf{94.3}$^{\pm4.5}$ \\
\bottomrule
\end{tabular}

\label{tab:kernel-accuracy}
\end{table*}

\textbf{Performance Analysis.}  
The accuracy comparison is summarized in Table~\ref{tab:kernel-accuracy}. For discrete feature graph datasets, VH, FGW, and FPGW achieve the highest overall accuracy. However, as the noise level $\eta$ increases, FGW's performance noticeably deteriorates, while the other three methods remain robust against outlier corruption.

In contrast, for continuous datasets such as \textit{Proteins} and \textit{Synthetic}, the baseline methods, including ``Propagation,'' ``GraphHopper,'' and ``FGW,'' experience a significant drop in performance. Notably, FPGW maintains strong performance across these datasets.

Regarding FUGW, its performance on the Mutag/Proteins datasets is comparable to that of FPGW. However, on the SCRC/Synthetic datasets, FUGW's accuracy is significantly lower than that of FGW and FPGW.
     

We refer to Section  \ref{sec:graph_classification_2} for the parameter settings and wall-clock time comparison. In summary, focusing on the comparison between FGW, FUGW, and FPGW, FGW is slightly faster than FPGW, while both FGW and FPGW are significantly faster than FUGW.

\begin{table}[htbp]
\caption{Kernel computation times (minutes) across different datasets and noise levels}
\label{tab:classification_run_time}
\renewcommand{\tabcolsep}{1.2em}
\begin{tabular*}{\textwidth}{@{\extracolsep{\fill}}lrrrr@{}}
\toprule
\textbf{Method} & \textbf{0\%} & \textbf{10\%} & \textbf{20\%} & \textbf{30\%} \\
\midrule
\multicolumn{5}{l}{\textbf{SYNTHETIC}} \\
Propagation & 0.01 & 0.02 & 0.01 & 0.02 \\
GraphHopper & 5.69 & 5.11 & 5.32 & 8.31 \\
\cmidrule(lr){1-5}
RGW & 4.3 & 4.5 & 4.7 & 5.1 \\
FGW & 7.20 & 13.23 & 16.80 & 22.81 \\
FUGW & 41.00 & 40.53 & 35.32 & 37.88 \\

FPGW & 13.23 & 14.66 & 17.03 & 19.44 \\
\midrule
\multicolumn{5}{l}{\textbf{PROTEINS}} \\
Propagation & 0.02 & 0.02 & 0.02 & 0.03 \\
GraphHopper & 7.06 & 7.43 & 7.79 & 8.16 \\
\cmidrule(lr){1-5}
RGW & 844.7 & 804.1 & 766.8 & 713.2 \\
FGW & 26.05 & 28.71 & 29.83 & 30.17 \\
FUGW & 96.78 & 186.93 & 204.20 & 220.96 \\
FPGW & 62.03 & 65.99 & 71.54 & 73.19 \\
\midrule
\multicolumn{5}{l}{\textbf{MSRC}} \\
WLK (auto) & 0.00 & 0.00 & 0.00 & 0.00 \\
GK (k=3) & 0.04 & 0.04 & 0.04 & 0.04 \\
RWK & 70.79 & 71.01 & 100.29 & 90.90 \\
Odd Sth (k=3) & 0.05 & 0.05 & 0.05 & 0.05 \\
Vertex Histogram & 0.00 & 0.00 & 0.00 & 0.00 \\
Lovasz Theta & 151.14 & 313.54 & 298.87 & 228.57 \\
SVM Theta & 0.00 & 0.00 & 0.00 & 0.00 \\
\cmidrule(lr){1-5}
RGW & 3.0 & 3.2 & 3.4 & 3.6 \\
FGW & 4.03 & 4.31 & 4.63 & 5.05 \\
FUGW & 44.09 & 44.16 & 41.10 & 42.30  \\
FPGW & 4.54 & 4.95 & 5.17 & 5.77 \\
\midrule
\multicolumn{5}{l}{\textbf{MUTAG}} \\
WLK (auto) & 0.00 & 0.00 & 0.00 & 0.00 \\
GK (k=3) & 0.03 & 0.03 & 0.03 & 0.03 \\
RWK & 211.48 & 61.07 & 66.10 & 305.90 \\
Odd Sth (k=3) & 0.00 & 0.00 & 0.00 & 0.00 \\
Vertex Histogram & 0.00 & 0.00 & 0.00 & 0.00 \\
Lovasz Theta & 22.48 & 19.14 & 18.67 & 13.82 \\
SVM Theta & 0.00 & 0.00 & 0.00 & 0.00 \\
\cmidrule(lr){1-5}
RGW & 32.6 & 31.2 & 30.2 & 27.5 \\
FGW & 0.79 & 0.83 & 0.87 & 0.91 \\
FUGW & 19.70 & 21.76 & 20.98 & 17.69 \\

FPGW & 1.09 & 1.54 & 3.62 & 9.21 \\
\bottomrule
\end{tabular*}
\end{table}

\subsection{Numerical details in graph classification}\label{sec:graph_classification_2}

\textbf{Parameter and Numerical Settings.}  

The parameter settings are provided in Table \eqref{tab:parameter}. For methods not explicitly listed, we use the default values from the \href{https://ysig.github.io/GraKeL/0.1a8/classes.html}{GraKeL library}. 

For the FUGW method, we adapt the solver from \cite{flamary2021pot}. We test different marginal penalty parameters $\rho$ to ensure that the transported mass in each sampled pair is approximately 1. Additionally, we choose the smallest entropy regularization term $\epsilon$ that prevents NaN errors.

\begin{table*}[h!]
\captionsetup{skip=2pt}
\caption{Parameter setting of all methods in graph classification. We present the parameter setting in all the methods. $\sigma$ is the weight parameter in the SVM classifier. In FGW and FPGW, $\alpha$ parameter is the $\omega_2$ in formulations of FGW \eqref{eq:fgw} and FPGW \eqref{eq:fmpgw}. For WLK/FGW/FUGW/FPGW, the value $H$ is the Weisfeiler-Lehman labeling parameter. $\rho,\epsilon$ in $FUGW$ is the weight parameter for the marginal penalty and entropy regularization. 
Parameter $\kappa$ in the GK/ODD method is the graphlet size. ``n-samples'' in GK is the random draw sample size.}
\label{tab:parameter}
\begin{center}
\begin{small}
\begin{sc}
\begin{tabular}{lccccccc} 
\toprule
Data set &parameter&{Synthetic}&{Proteins}& {Mutag} &MSRC-9\\
\midrule 

SVM (Common) & $\sigma$ & 1 & 15 & 2 & 1\\
\hline
\multirow{2}{*}{RGW} &$\rho$ & $0.1$ &$0.1$ & $0.1$ & $0.1$  \\
&$\epsilon$ & $0.05$ & $0.05$ & $0.05$ & $0.05$\\
&t & $0.1$ & $0.1$ & $0.1$ & $0.1$\\
&$\tau$ & $0.1$ & $0.1$ & $0.1$ & $0.1$\\
\hline
\multirow{2}{*}{FGW} &$\alpha$ & $0.5$ &$0.5$ & $0.5$ & $0.5$  \\
&H & $-$ & $-$ & $2$ & $4$\\
\hline
\multirow{2}{*}{FPGW} & $\alpha$ & $0.5$ & $0.5$ & $0.5$ & $0.5$   \\
& H & $-$ & $-$ & $2$ & $4$\\
\hline
\multirow{3}{*}{FUGW} & $\alpha$ & $0.5$ & $0.5$ & $0.5$ & $0.5$   \\
& H & $-$ & $-$ & $2$ & $4$\\
& $\rho$ & $1$ & $1$ & $0.4$ & $0.4$\\
& $\epsilon$ & 0.05 & 0.1 & 0.02&0.02\\
\hline
WLK & H & $-$ & $-$ & $5$ & $5$ \\
\hline
\multirow{2}{*}{GK} & $\kappa$ & $-$ & $-$ & $3$ & $3$ \\
                   & n-samples &  $-$ & $-$ & $100$ & $100$ \\
\hline
Odd Sth & $\kappa$ & $-$ & $-$ & $3$ & $3$ \\
\bottomrule
\end{tabular}
\end{sc}
\end{small}
\end{center}
\end{table*}

\textbf{Wall-clock time analysis.}  

The wall-clock times are reported in Table \ref{tab:classification_run_time}. All graph data are formatted as NetworkX graphs using the \href{https://networkx.org/}{NetworkX library}. Continuous features are represented as 64-bit float NumPy vectors, while discrete features are stored as 64-bit integers.

For discrete feature graphs, WLK, GK, and Vertex Histogram are the fastest methods, while FGW and FPGW have similar wall-clock times. In contrast, Lovász Theta and Random Walk Kernel are the slowest. For continuous feature graphs, ``Propagation'' and ``GraphHopper'' are significantly faster than FGW and FPGW. 

Among the three fused-GW-based methods, FGW is the fastest overall, while FUGW is the slowest. A potential reason for this difference is that FGW and FPGW leverage a C++ linear programming solver for the OT/POT solving step. In addition, FUGW adapts the Sinkhorn solver, leading to an accuracy-efficiency trade-off. Specifically, a smaller $\epsilon$ can reduce the accuracy gap introduced by the entropic term; however, it increases the number of iterations required for convergence.

All experiments presented in this paper are conducted on a computational machine with an AMD EPYC 7713 64-Core Processor, 8 $\times$ 32GB DIMM DDR4, 3200 MHz, and an NVIDIA RTX A6000 GPU.

\section*{Impact Statement}
This paper aims to advance the theoretical foundations and potential applications of Optimal Transport in the field of Machine Learning.  There are many potential societal consequences 
of our work, none of which we feel must be specifically highlighted here.

\end{document}